\newcommand{\alg}[1]{\textsc{#1}}
\newcommand{\set}[1]{\{ #1 \}}
\newcommand{\ind}[1]{\mathbb{I}\left[ #1 \right]}
\renewcommand{\epsilon}{\varepsilon}
\newcommand{\bx}{\bm{x}}
\newcommand{\hx}{\hat{\bm{x}}}
\newcommand{\tx}{\tilde{\bm{x}}}
\newcommand{\ba}{\bm{a}}
\newcommand{\calA}{\mathcal{A}}
\newtheorem{proposition}{Proposition}
\newtheorem{lemma}{Lemma}
\theoremstyle{definition}
\newtheorem{problem}{Problem}
\newtheorem{remark}{Remark}
\crefname{algorithm}{Algorithm}{Algorithms}
\crefname{table}{Table}{Tables}
\crefname{figure}{Figure}{Figures}
\crefname{section}{Section}{Sections}
\crefname{appendix}{Appendix}{Appendices}
\crefname{problem}{Problem}{Problems}
\crefname{theorem}{Theorem}{Theorems}
\crefname{proposition}{Proposition}{Propositions}
\crefname{lemma}{Lemma}{Lemmas}
\crefname{remark}{Remark}{Remarks}
\crefname{algorithm}{Algorithm}{Algorithms}
\newcommand{\myparagraph}[1]{ \noindent {\mbox{\textbf{#1}~~}}}
\title{Algorithmic Recourse with Missing Values}
\author{%
  Kentaro Kanamori \\
  Fujitsu Ltd. \\
  \texttt{k.kanamori@fujitsu.com} \\
  \And
  Takuya Takagi \\
  Fujitsu Ltd. \\
  \texttt{takagi.takuya@fujitsu.com} \\
  \AND
  Ken Kobayashi \\
  Tokyo Institute of Technology \\
  \texttt{kobayashi.k.ar@m.titech.ac.jp} \\
  \And
  Yuichi Ike \\
  Kyushu University \\
  \texttt{ike@imi.kyushu-u.ac.jp} \\
}
\begin{document}

\maketitle

\begin{abstract}
This paper proposes a new framework of algorithmic recourse (AR) that works even in the presence of missing values. 
AR aims to provide a recourse action for altering the undesired prediction result given by a classifier. 
Existing AR methods assume that we can access complete information on the features of an input instance. 
However, we often encounter missing values in a given instance (e.g., due to privacy concerns), and previous studies have not discussed such a practical situation. 
In this paper, we first empirically and theoretically show the risk that a naive approach with a single imputation technique fails to obtain good actions regarding their validity, cost, and features to be changed. 
To alleviate this risk, we formulate the task of obtaining a valid and low-cost action for a given incomplete instance by incorporating the idea of multiple imputation. 
Then, we provide some theoretical analyses of our task and propose a practical solution based on mixed-integer linear optimization. 
Experimental results demonstrated the efficacy of our method in the presence of missing values compared to the baselines. 
\end{abstract}

\section{Introduction}\label{sec:intro}
Algorithmic decision-making with machine learning models has been applied to various tasks in the real world, such as loan approvals. 
In such critical tasks, the predictions made by a model might have a significant impact on individual users~\citep{Rudin:NMI2019}. 
Consequently, decision-makers need to explain how individuals should act to alter the undesired decisions~\citep{Miller:AI2019,Wachter:HJLT2018}. 
\emph{Algorithmic Recourse~(AR)} aims to provide such information~\citep{Ustun:FAT*2019}. 
For a classifier $h \colon \mathcal{X} \to \mathcal{Y}$, a desired class $y^\ast \in \mathcal{Y}$, and an instance $\bx \in \mathcal{X}$, AR provides a perturbation $\ba$ that flips the prediction result into the desired class, i.e., $h(\bx + \ba) = y^\ast$, with a minimum effort measured by some cost function $c$. 
The user can regard the perturbation $\ba$ as a \emph{recourse action} for obtaining the desired outcome $y^\ast$ from $h$~\citep{Karimi:ACMCS2022}. 

In this paper, we consider a situation where an instance $\bx$ includes \emph{missing values}. 
In practical situations, missing values arise not only when users do not know their feature values~\citep{Cesa-Bianchi:JMLR2012} but also when users do not input their values on purpose, e.g., due to their privacy concerns~\citep{Schenker:JASA2006}. 
Indeed, some recent studies have pointed out the risk that an adversary can leverage recourse actions to infer private information about the data held by decision-makers~\citep{Pawelczyk:AISTATS2023}. 
To alleviate this risk, it is important to allow users to avoid disclosing some feature values corresponding to their private information (e.g., their income). 
However, almost all of the existing AR methods assume that the complete information of an input instance $\bx$ is given~\citep{Verma:arxiv2020,Guidotti:DMKD2022}. 
Therefore, we need to discuss how to handle missing values in AR and develop a new framework that works without acquiring the true value of the missing features. 

A common way to handle missing values is \emph{imputation} that replaces them with plausible values~\citep{Little:2019:Missing}. 
In the context of AR, however, imputation can often affect resulting actions, as demonstrated below. 
\cref{fig:intro:demo} presents an example of a synthetic loan approval task. 
For an instance $\bx$ in \cref{fig:intro:demo:original}, we drop the value of the feature ``Income" (\$30K) and impute it with the empirical mean (\$66K), which we write for $\hx$ in \cref{fig:intro:demo:imputed}. 
Using the existing AR method for the imputed instance $\hx$, we obtain an optimal action $\ba$ that decreases the feature ``\#ExistingLoans" by $1$. 
As shown in \cref{fig:intro:demo:boundary}, while the action $\ba$ is valid for $\hx$, i.e., $h(\hx + \ba) = y^\ast$, it is not valid for $\bx$, i.e., $h(\bx + \ba) \not= y^\ast$. 
It suggests that valid actions for an imputed instance $\hx$ can be different from those for its original instance $\bx$, even if the prediction results before and after imputation are the same, i.e., $h(\bx) = h(\hx)$. 

Based on the above observations, we argue that decision-makers should provide users with recourse actions that are valid not for the imputed instances $\hx$ but for their original instances $\bx$. 
Even if a suggested action is valid for the imputed instance $\hx$, it may lead to risky decision-making because it is not commensurate with the user's true state $\bx$~\citep{Harris:NIPS2022,Olckers:arxiv2023}. 
For the example in \cref{fig:intro:demo}, if a user implements the suggested action $\ba$, the user can get the loan approved by $h$ because it is valid for $\hx$. 
However, since the feature ``Income" of $\hx$ is imputed with a larger value than the true value of $\bx$ and the action $\ba$ is not valid for $\bx$ as shown in \cref{fig:intro:demo:boundary}, the decision-maker approves the loan beyond the user's actual capacity to repay, which may result in default. 
To alleviate this risk, we aim to obtain a valid action for the original instance $\bx$ using only a given instance with missing values.

\begin{figure}[t]
    \centering
    \subfigure[Original instance $\bx$]{
        \adjustbox{valign=b}{
            \begin{tabular}{lc}
            \toprule
                \textbf{Features} & \textbf{Values} \\
            \midrule
               Age & 32 \\
               Income & \$30K \\
               Purpose & NewCar \\
               Education & Masters \\
               \#ExistingLoans & 2 \\
            \bottomrule
            \vspace{1mm}
            \end{tabular}
            \label{fig:intro:demo:original}    
        }
    }
    \hfill
    \subfigure[Imputed instance $\hx$]{
        \adjustbox{valign=b}{
            \begin{tabular}{lc}
            \toprule
                \textbf{Features} & \textbf{Values} \\
            \midrule
               Age & 32 \\
               Income & \underline{\$66K} \\
               Purpose & NewCar \\
               Education & Masters \\
               \#ExistingLoans & 2 \\
            \bottomrule
            \vspace{1mm}
            \end{tabular}
            \label{fig:intro:demo:imputed}    
        }
    }
    \hfill
    \subfigure[Decision boundary of classifier $h$]{
        \includegraphics[width=0.35\textwidth,valign=b]{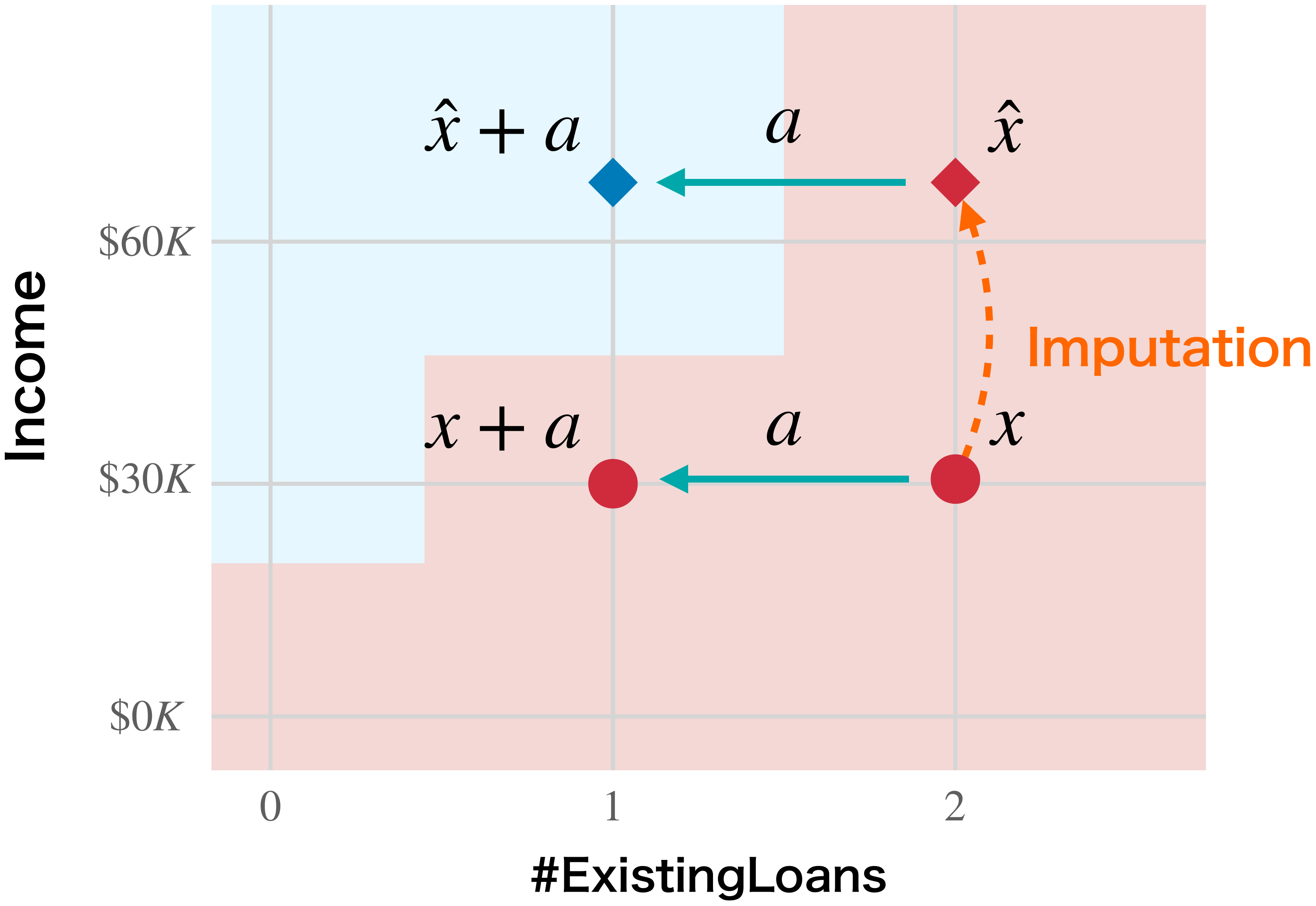}
        \label{fig:intro:demo:boundary}    
    }
    \caption{
        Examples of an original instance $\bx$, its imputed instance $\hx$, and the decision boundary of a classifier $h$. 
        Here, we drop the feature ``Income" in $\bx$ as a missing feature and obtain $\hx$ by imputing its value with the empirical mean \$66K. 
        For the imputed instance $\hx$, we obtain an optimal action $\ba$ using the existing AR method. 
        While the action $\ba$ successfully alters the prediction result of $h$ for the imputed instance $\hx$, it fails to do that for the original instance $\bx$. 
    }
    \label{fig:intro:demo}
\end{figure}

\myparagraph{Our contributions.}
This paper proposes the first framework of AR that works even in the presence of missing values, named \emph{Algorithmic Recourse with Multiple ImputationN (ARMIN)}. 
By incorporating the idea of \emph{multiple imputation}, which considers multiple ways of plausible imputation, into AR, our ARMIN can provide valid actions for a given incomplete instance without acquiring its missing values with high probability. 
Our contributions are summarized as follows:

\begin{enumerate}
    \item 
    We are the first to tackle the problem of AR with missing values. 
    We empirically and theoretically show the risk that the resulting action is highly affected by imputation methods. 
    \item 
    We formulate our task of obtaining a valid and low-cost action for a given incomplete instance by leveraging the idea of multiple imputation. 
    Then, we propose a practical solution to our task based on mixed-integer linear optimization. 
    \item 
    By experiments on real datasets, we demonstrated that our ARMIN provides incomplete instances with quantitatively and qualitatively better actions than existing AR methods. 
\end{enumerate}

\subsection{Related work}
\emph{Algorithmic Recourse (AR)}, also referred to as \emph{Counterfactual Explanation}, has attracted increasing attention~\citep{Verma:arxiv2020}, and several methods for providing recourse actions have been proposed~\citep{Wachter:HJLT2018,Ustun:FAT*2019,Mothilal:FAT*2020,Kanamori:AAAI2021,Pawelczyk:WWW2020,Karimi:AISTATS2020}. 
On the other hand, recent studies pointed out various issues, e.g., causality~\citep{Karimi:NIPS2020,Karimi:FAccT2021}, privacy~\citep{Pawelczyk:AISTATS2023}, transparency~\citep{Rawal:NIPS2020,Kanamori:AISTATS2022}, and so on~\citep{Barocas:FAT*2020,Venkatasubramanian:FAT*2020}. 
One of the critical problems is robustness to the perturbations of features~\citep{Slack:NIPS2021,Dominguez-olmedo:ICML2022}. 
These studies are motivated by the fact that actions are often affected by small changes to inputs, which is similar to our observation that actions are affected by imputation. 
Although the existing robust AR methods can be applied to the setting with missing values if we replace its uncertainty set with imputation candidates, the resulting action often has a high cost~\citep{Pawelczyk:arxiv2022}. 
As shown later, our experiments demonstrate that our ARMIN can provide valid actions for incomplete instances with lower costs than the robust AR methods. 

\emph{Missing data analysis} is a traditional branch of statistics because real datasets often contain missing values~\citep{Rubin:Biometrika1976,Little:2019:Missing}. 
In the literature on machine learning, there are several studies not only on imputation with deep generative models~\citep{Yoon:ICML2018,Mattei:ICML2019}, but also on the impacts of imputation on prediction consistency~\citep{Josse:arxiv2019,LeMorvan:NIPS2021,Ayme:ICML2022} and predictive fairness~\citep{Zhang:NIPS2021,Jeong:AAAI2022}. 
However, there is little work that studies their impacts on explanation, including recourse actions, while its importance has been recognized~\citep{Ahmad:IJCAIWS2019,Verma:arxiv2020,Guidotti:DMKD2022}. 
To the best of our knowledge, our work is the first to point out the issues with missing values for AR and propose a concrete method for addressing these issues.

\section{Preliminaries}\label{sec:prelim}
For a positive integer $n \in \mathbb{N}$, we write $[n] \coloneqq \set{1,\dots,n}$.
As with the previous studies~\citep{Ustun:FAT*2019}, we consider a binary classification setting between undesired and desired classes.  
We denote input and output domains $\mathcal{X} = \mathcal{X}_1 \times \dots \times \mathcal{X}_D \subseteq \mathbb{R}^{D}$ and $\mathcal{Y} = \set{\pm 1}$, respectively. 
We call a vector $\bx = (x_1, \dots, x_D) \in \mathcal{X}$ an \emph{instance}, and a function $h \colon \mathcal{X} \to \mathcal{Y}$ a \emph{classifier}. 
Without loss of generality, we assume that $h(\bx)=+1$ is a desirable outcome for users (e.g., loan approval). 

\subsection{Algorithmic recourse}
For an instance $\bx \in \mathcal{X}$, we define an \emph{action} as a perturbation vector $\ba \in \mathbb{R}^{D}$ such that $\bx+\ba \in \mathcal{X}$. 
Let $\calA$ be a set of pre-defined feasible actions for $\bx$ such that $\bm{0} \in \calA$ and $\calA \subseteq \{ \ba \in \mathbb{R}^{D} \mid \bx + \ba \in \mathcal{X} \}$. 
For a classifier $h$, an action $\ba$ is \emph{valid} for $\bx$ if $\ba \in \calA$ and $h(\bx + \ba)=+1$. 
For $\ba \in \calA$, a \emph{cost function} $c \colon \calA \to \mathbb{R}_{\geq 0}$ measures the required effort for implementing the action $\ba$. 

For a given classifier $h \colon \mathcal{X} \to \mathcal{Y}$ and an instance $\bx \in \mathcal{X}$, the aim of \emph{Algorithmic Recourse (AR)}~\citep{Karimi:ACMCS2022} is to find an action $\ba$ that is valid for $\bx$ with respect to $h$ and minimizes its cost $c(\ba)$; that is, 
\begin{align}\label{eq:ce}
    {\mathop{\text{\upshape minimize}}}_{\ba \in \calA} \; c(\ba) \;\;\; \text{\upshape subject to} \;\; h(\bx + \ba)=+1.
\end{align}
Hereafter, we fix $h$ and $c$ and omit them if they are clear from the context. 
Note that our proposed formulation in \cref{sec:formulation} does not depend on $h$ and $c$, and in \cref{sec:optimization}, we propose a concrete solution for specific classifiers $h$ and cost functions $c$ that are commonly used in the previous studies on AR.

\subsection{Missing values}
In practice, an input instance may contain features with \emph{missing values}~\citep{Little:2019:Missing}. 
Let $*$ be a symbol indicating a missing value. 
For an original complete instance $\bx \in \mathcal{X}$, we denote its incomplete instance by $\tx = (\tx_1, \dots, \tx_D)$ with some missing values $*$, where
\begin{align*}
    \tx_d = 
    \begin{cases}
        *, & \text{if the feature } d \text{ is missing}, \\
        x_d, & \text{otherwise},
    \end{cases}
\end{align*}
for $d \in [D]$. 
We denote the input domain with missing values $\tilde{\mathcal{X}} \coloneqq (\mathcal{X}_1 \cup \set{*}) \times \dots \times (\mathcal{X}_D \cup \set{*})$. 

Let $M_{\tx} = \set{d \in [D] \mid \tx_d = *}$ (resp.\ $O_{\tx} = [D] \setminus M_{\tx}$) be the set of features that are missing (resp.\ observed). 
In statistics, mechanisms of missing values are categorized into three types depending on the relationship between $M_{\tx}$ and $O_{\tx}$~\citep{Rubin:Biometrika1976}: 
(i)~\emph{missing completely at random (MCAR)} if $M_{\tx}$ is independent of $[D]$; 
(ii)~\emph{missing at random (MAR)} if $M_{\tx}$ depends only on $O_{\tx}$; 
(iii)~\emph{missing not at random (MNAR)} if neither MCAR nor MAR holds. 
Note that existing methods for missing data analysis often rely on the MAR or MCAR assumption for their soundness~\citep{Little:2019:Missing}. 
In our experiments of \cref{sec:experiments}, we evaluated the efficacy of our method in each situation. 

A common approach to handling missing values is \emph{imputation} that replaces them with plausible values and obtains imputed instances $\hx \in \mathcal{X}$. 
There exist several practical methods, such as \emph{multivariate imputation by chained equations~(MICE)}~\citep{Buuren:JSS2011} and \emph{$k$-nearest neighbor~($k$-NN) imputation}~\citep{Troyanskaya:Bioinformatics2001}.

\section{Problem formulation}\label{sec:formulation}
In this section, we formulate our task of providing a recourse action to a given instance with missing values. 
Let us consider the situation where we have an incomplete instance $\tx \in \tilde{\mathcal{X}}$ that comes from the original complete instance $\bx \in \mathcal{X}$. 
We assume that a user intentionally hides the true values of some features and that we cannot access them. 
Our aim is to obtain a valid and low-cost action $\ba$ for $\bx$ using only the given instance $\tx$ with missing values. 
In this section, we first consider a naive approach using a single imputation technique and analyze its risk by showing its theoretical properties. 
Motivated by these results, we introduce our approach by incorporating the idea of multiple imputation into AR. 
All the proofs of the statements are presented in \cref{sec:appendix:proof}. 

\subsection{Naive formulation with single imputation and its drawback}
A naive approach is to obtain an imputed instance $\hx$ by applying an imputation method to $\tx$ and then optimize an action by solving the problem \eqref{eq:ce} for $\hx$. 
This can be formulated as follows:
\begin{align}\label{eq:naive}
    {\mathop{\text{\upshape minimize}}}_{\ba \in \calA} \; c(\ba) \;\;\; \text{\upshape subject to} \;\; h(\hx + \ba)=+1. 
\end{align}
We denote optimal actions to \eqref{eq:ce} and \eqref{eq:naive} by $\ba^\ast$ and $\hat{\ba}$, respectively, and show some theoretical relationships between them. 
First, we show two trivial properties on the validity and cost of $\hat{\ba}$: 

\begin{remark}\label{rem:invalid}
    If $h(\bx) = h(\hx) = -1$ and $c(\hat{\ba}) < c(\ba^\ast)$, then $\hat{\ba}$ is not valid for $\bx$. 
\end{remark}
\begin{remark}\label{rem:nullaction}
    If $h(\bx) = -1$ and $h(\hx) = +1$, then $\hat{\ba} = \bm{0}$ and $\hat{\ba}$ is not valid for $\bx$ since $h(\bx + \hat{\ba}) \not= +1$. 
    Furthermore, if $h(\bx) = +1$ and $h(\hx) = -1$, then $c(\hat{\ba}) \geq c(\ba^\ast)$. 
\end{remark}

\cref{rem:invalid} implies that an optimal action $\hat{\ba}$ for the imputed instance $\hx$ is not valid for its original instance $\bx$ if its cost is less than that of $\ba^\ast$, as demonstrated in \cref{tab:exp:marexample} later. 
\cref{rem:nullaction} implies that if the prediction is changed by imputation, i.e., $h(\bx) \not= h(\hx)$, then $\hat{\ba}$ is not valid for $\bx$ or has a higher cost than $\ba^\ast$. 
These results indicate the risk that imputation affects the validity and cost of actions. 

Next, we analyze how much an optimal action $\hat{\ba}$ after imputation differs from $\ba^\ast$. 
We consider the same setting as \citep{Ustun:FAT*2019}.
Let $h_{\bm{\beta}}$ be a linear classifier defined as $h_{\bm{\beta}}(\bx) = \operatorname{sgn}(f_{\bm{\beta}}(\bx))$ with the decision function $f_{\bm{\beta}}(\bx) = \bm{\beta}^\top \bx$ and coefficient vector $\bm{\beta} = (\beta_1, \dots \beta_D)  \in \mathbb{R}^D$. 
We assume $\mathcal{X}=\mathbb{R}^D$, $\calA = \mathbb{R}^D$, and $c(\ba) = \| \ba \|$. 
We also assume that a single feature $d^\circ \in [D]$ is missing and that the missing value of $d^\circ$ is imputed with the population mean $\mu_{d^\circ} = \mathbb{E}_{\bx} \left[ x_{d^\circ} \right]$ over a distribution on the input domain $\mathcal{X}$. 
Note that these assumptions are common in analyses with missing values~\citep{Bertsimas:arxiv2021,Josse:arxiv2019}. 
In \cref{prop:upper}, we give an upper bound on the expected difference between $\ba^\ast$ and $\hat{\ba}$. 

\begin{proposition}\label{prop:upper}
    For an instance $\bx \in \mathcal{X}$ and a feature $d^\circ \in [D]$, let $\hx \in \mathcal{X}$ be its imputed instance with $\hat{x}_{d^\circ} = \mu_{d^\circ}$ and $\hat{x}_d = x_d$ for $d \in [D] \setminus \set{d^\circ}$. 
    For $\bx$ and $\hx$, let $\ba^\ast$ and $\hat{\ba}$ be optimal solutions to the problems \eqref{eq:ce} and \eqref{eq:naive} with respect to a linear classifier $h_{\bm{\beta}}$, respectively. 
    Then, we have
    \begin{align*}
        \mathbb{E} \left[ \| \hat{\ba} - \ba^\ast \|_2^2 \right] 
        \leq 
        \frac{1}{\| \bm{\beta} \|_2^2}
        \left(
        \beta_{d^\circ}^2 \cdot \sigma_{d^\circ}^{2}
        + 
        \gamma \cdot p_\mathrm{conf}
        \right), 
    \end{align*}
    where $\sigma_{d^\circ}^2 = \mathbb{E}[ (x_{d^\circ} - \mu_{d^\circ})^2 ]$, 
    $\gamma = \mathbb{E}[ \left( f_{\bm{\beta}}(\bx) \right)^2 \mid h_{\bm{\beta}}(\bx) \not= h_{\bm{\beta}}(\hx)]$,
    and $p_\mathrm{conf} = \mathbb{P}[h_{\bm{\beta}}(\bx) \not= h_{\bm{\beta}}(\hx)]$. 
\end{proposition}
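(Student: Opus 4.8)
The plan is to reduce everything to the explicit form of the optimal actions. For the linear classifier $h_{\bm{\beta}}$ with $\mathcal{X}=\calA=\mathbb{R}^D$ and the $\ell_2$ cost $c(\ba)=\|\ba\|_2$, problem \eqref{eq:ce} asks for the minimum-$\ell_2$-norm vector $\ba$ with $\bm{\beta}^\top(\bx+\ba)\ge 0$; since $\bm 0\in\calA$ has zero cost, the optimum is $\bm 0$ when $f_{\bm{\beta}}(\bx)\ge 0$ and the orthogonal projection onto the hyperplane otherwise, i.e.
\begin{align*}
  \ba^\ast=\frac{[-f_{\bm{\beta}}(\bx)]_+}{\|\bm{\beta}\|_2^2}\,\bm{\beta},\qquad \hat{\ba}=\frac{[-f_{\bm{\beta}}(\hx)]_+}{\|\bm{\beta}\|_2^2}\,\bm{\beta},
\end{align*}
with $[z]_+=\max(z,0)=-\min(z,0)$. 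Both actions are scalar multiples of $\bm{\beta}$, so their difference is too, and a one-line computation gives
\begin{align*}
  \|\hat{\ba}-\ba^\ast\|_2^2=\frac{\bigl(\min(f_{\bm{\beta}}(\hx),0)-\min(f_{\bm{\beta}}(\bx),0)\bigr)^2}{\|\bm{\beta}\|_2^2}.
\end{align*}

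Next I would condition on the prediction-flip event $E=\{h_{\bm{\beta}}(\bx)\ne h_{\bm{\beta}}(\hx)\}$ and apply the law of total expectation. On $E^{c}$ the values $f_{\bm{\beta}}(\bx)$ and $f_{\bm{\beta}}(\hx)$ share a sign: if both are nonnegative the numerator is $0$, and if both are negative it equals $(f_{\bm{\beta}}(\hx)-f_{\bm{\beta}}(\bx))^2$. Because $\hx$ and $\bx$ differ only in coordinate $d^\circ$ with $\hat x_{d^\circ}=\mu_{d^\circ}$, we have $f_{\bm{\beta}}(\hx)-f_{\bm{\beta}}(\bx)=\beta_{d^\circ}(\mu_{d^\circ}-x_{d^\circ})$, so on $E^{c}$ the numerator is at most $\beta_{d^\circ}^2(x_{d^\circ}-\mu_{d^\circ})^2$; dropping the indicator $\mathbf{1}_{E^c}\le 1$ and taking expectation yields the term $\beta_{d^\circ}^2\sigma_{d^\circ}^2$. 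On $E$ exactly one of the two truncations is zero, so (in the recourse-relevant case $h_{\bm{\beta}}(\bx)=-1$, $h_{\bm{\beta}}(\hx)=+1$) the numerator equals $f_{\bm{\beta}}(\bx)^2$, whence its expectation restricted to $E$ is $\mathbb{E}[f_{\bm{\beta}}(\bx)^2\mid E]\,\mathbb{P}[E]=\gamma\,p_\mathrm{conf}$. Summing the two pieces and dividing by $\|\bm{\beta}\|_2^2$ gives the stated bound.

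The one place that needs care is the flip event: one must verify that the numerator there is captured by $f_{\bm{\beta}}(\bx)^2$. This is immediate when $h_{\bm{\beta}}(\bx)=-1$ (the situation of interest for recourse), and in the reverse sub-case $h_{\bm{\beta}}(\bx)=+1$, $h_{\bm{\beta}}(\hx)=-1$ the numerator equals $f_{\bm{\beta}}(\hx)^2$, which by $|f_{\bm{\beta}}(\hx)|\le|f_{\bm{\beta}}(\hx)-f_{\bm{\beta}}(\bx)|=|\beta_{d^\circ}(\mu_{d^\circ}-x_{d^\circ})|$ is again bounded by $\beta_{d^\circ}^2(x_{d^\circ}-\mu_{d^\circ})^2$ and can be folded into the first term without weakening the bound. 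Everything else — the projection formula, the algebra with $\min(\cdot,0)$, and the degenerate cases $\beta_{d^\circ}=0$ or $f_{\bm{\beta}}(\cdot)=0$ (where the relevant terms simply vanish) — is routine bookkeeping.
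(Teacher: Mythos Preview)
Your proof is correct and follows essentially the same route as the paper: both start from the closed-form projection formula $\ba^\ast=-\frac{f_{\bm{\beta}}(\bx)}{\|\bm{\beta}\|_2^2}\bm{\beta}$ (or $\bm 0$), decompose $\mathbb{E}\|\hat{\ba}-\ba^\ast\|_2^2$ by the four sign combinations of $(h_{\bm{\beta}}(\bx),h_{\bm{\beta}}(\hx))$, and bound the pieces separately. The only tactical difference is in the sub-case $h_{\bm{\beta}}(\bx)=+1,\;h_{\bm{\beta}}(\hx)=-1$: the paper uses $(f_{\bm{\beta}}(\hx))^2\le (f_{\bm{\beta}}(\bx))^2+\beta_{d^\circ}^2(x_{d^\circ}-\mu_{d^\circ})^2$ and splits the contribution between both terms, whereas you use the sharper inequality $|f_{\bm{\beta}}(\hx)|\le|\beta_{d^\circ}(x_{d^\circ}-\mu_{d^\circ})|$ and fold it entirely into the variance term---either way yields the same final bound.
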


\cref{prop:upper} implies that an upper bound on the expected difference between $\ba^\ast$ and $\hat{\ba}$ depends on the variance $\sigma_{d^\circ}^2$ of a missing feature $d^\circ$ and the probability $p_\mathrm{conf}$ that the prediction $h_{\bm{\beta}}(\bx)$ is changed by imputation. 
In practice, as demonstrated in \cref{tab:exp:marexample} later, features included in the resulting action may be changed by imputation. 
These results suggest the risk that the resulting action $\hat{\ba}$ would be far from $\ba^\ast$, even if we impute missing values with the population mean. 

In summary, we empirically and theoretically find that imputation can affect the resulting action in terms of its validity, cost, and features. 
Note that some models, such as tree-based models like XGBoost~\citep{Chen:KDD2016}, can handle missing values without imputation. 
Even for such models, however, we show that actions are highly affected depending on the presence of missing values in \cref{sec:appendix:experiments:gbdt}. 

\subsection{Our formulation with multiple imputation}
We introduce our approach, named \emph{algorithmic recourse with multiple imputation (ARMIN)}. 
One of the reasons why our task is difficult to solve is that we cannot evaluate whether an action $\ba$ is valid for the original instance $\bx$, i.e., $h(\bx + \ba) = +1$, since we cannot access to $\bx$. 
While a single imputation technique is known to be able to consistently estimate the prediction result $h(\bx)$ for a given incomplete instance $\tx$~\citep{Josse:arxiv2019,LeMorvan:NIPS2021,Ayme:ICML2022}, our results empirically demonstrate that it often fails to estimate the prediction result $h(\bx + \ba)$ after implementing an action $\ba$. 
To overcome this difficulty, we propose to estimate the validity of $\ba$ for $\bx$ by leveraging the technique of \emph{multiple imputation}~\citep{Rubin:2004MI}, which considers multiple ways of plausible imputation to estimate the validity of $\ba$ for $\bx$. 

For a given instance $\tx$ with missing values, we define the space of plausible imputation candidates for $\tx$ as $\mathcal{I}_{\tx} \coloneqq  \mathcal{I}_1 \times \dots \times \mathcal{I}_D \subseteq \mathcal{X}$, where 
\begin{align*}
    \mathcal{I}_d \coloneqq 
    \begin{cases}
        \mathcal{X}_d & \text{if } \tx_d = *, \\
        \set{\tx_d} & \text{otherwise},
    \end{cases}
\end{align*}
for $d \in [D]$. 
We call $\mathcal{I}_{\tx}$ the \emph{imputation space} of $\tx$. 
Then, our formulation of the task is as follows. 

\begin{problem}\label{prob:armin}
    Given a classifier $h \colon \mathcal{X} \to \set{ \pm 1 }$, an instance $\tx \in \tilde{\mathcal{X}}$ with missing values, and a confidence parameter $\rho \in [0, 1]$, find an optimal solution $\ba^\ast \in \calA$ to the following problem:
    \begin{align*}
        {\mathop{\text{\upshape minimize}}}_{\ba \in \calA} \; c(\ba) \;\;\; \text{\upshape subject to} \;\; V_{\tx}(\ba) \geq \rho, 
    \end{align*}
    where $V_{\tx}(\ba) \coloneqq \mathbb{E}_{\hx \sim \mathcal{D}_{\tx}} [\ind{h(\hx + \ba) = +1}]$ is the \emph{expected validity} of an action $\ba$ for $\tx$ over a distribution $\mathcal{D}_{\tx}$ of its imputation space $\mathcal{I}_{\tx}$. 
\end{problem}

In our formulation, we estimate the validity of an action $\ba$ for the original instance $\bx$ using the expected validity $V_{\tx}(\ba)$, which corresponds to the expected probability that $\ba$ is valid over the imputation space $\mathcal{I}_{\tx}$. 
This idea is inspired by ``Rubin's rules" in the framework of multiple imputation~\citep{Rubin:2004MI} that estimates some statistic of the true dataset by generating multiple imputed datasets and averaging their analysis results~\citep{Dick:ICML2008,Ipsen:ICLR2022}. 
We can expect that the larger the expected validity $V_{\tx}(\ba)$ is, the higher the probability that $\ba$ is valid for the original instance $\bx$ is. 
Note that we can employ any distribution as $\mathcal{D}_{\tx}$ (e.g., the uniform distribution over $\mathcal{I}_{\tx}$ or a distribution estimated in advance~\citep{Little:2019:Missing}).

\section{Optimization framework}\label{sec:optimization}
This section proposes a practical optimization framework for \cref{prob:armin}. 
Unfortunately, evaluating the expected validity $V_{\tx}(\ba)$ is intractable because the imputation space $\mathcal{I}_{\tx}$ is an infinite set if the missing features are real-valued or an exponentially large set if they are categorical. 
To avoid this difficulty, we take the following two steps: 
(i)~estimating the expected validity $V_{\tx}(\ba)$ by sampling imputation candidates over $\mathcal{I}_{\tx}$; 
(ii)~solving a surrogate problem with the estimated validity by mixed-integer linear optimization. 
We also show some theoretical analyses of our proposed framework. 

\subsection{Imputation sampling}
Our idea is based on the fact that the constraint $V_{\tx}(\ba) \geq \rho$ of \cref{prob:armin} can be regarded as a \emph{chance constraint}~\citep{Calafiore:MP2005} over the distribution $\mathcal{D}_{\tx}$. 
As with the existing methods for chance-constrained problems, we estimate $V_{\tx}(\ba)$ by its tractable empirical average over a sample of imputation candidates. 

We first take an i.i.d.\ sample of $N$ imputation candidates $S = \set{ \hx_1, \dots, \hx_N } \subseteq \mathcal{I}_{\tx}$ from $\mathcal{D}_{\tx}$. 
Then, we replace the expected validity $V_{\tx}(\ba)$ in \cref{prob:armin} with its empirical average over $S$; that is, 
\begin{align*}
    \hat{V}_{\tx}(\ba \mid S) \coloneqq \frac{1}{N} {\sum}_{n=1}^{N} v(\ba; \hx_n),
\end{align*}
where $v(\ba; \hx) \coloneqq \ind{h(\hx + \ba) = +1}$ is the indicator whether the action $\ba$ is valid for $\hx$.

\myparagraph{On the sampling quality of $S$.}
The sample of imputation candidates $S$ does not necessarily include the original instance $\bx$. 
To guarantee that we can obtain at least one imputation candidate $\hx$ near to $\bx$ with high probability, we give a lower bound on the sampling size $N$ in \cref{prop:sample}. 
\begin{proposition}\label{prop:sample}
    We assume $\mathcal{X}_d = [l_d, u_d]$ for $d \in [D]$, where $w = u_d - l_d$ with a some width $w > 0$. 
    For $\tx \in \tilde{\mathcal{X}}$, let $S$ be a set of $N$ i.i.d.\ imputation candidates sampled from the uniform distribution. 
    Then, for any $\epsilon, \delta > 0$, there exists $\hx \in S$ such that $\| \hx - \bx \|_\infty \leq \epsilon$ with probability at least $1-\delta$ if $N \geq \log \frac{1}{\delta} \cdot \left( \frac{w}{2 \cdot \epsilon} \right)^{D_{\ast}}$, where $D_{\ast} = |M_{\tx}|$ is the number of missing features. 
\end{proposition}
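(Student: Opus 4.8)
The plan is to reduce the statement to a standard coupon‑collector / Chernoff estimate applied to a product of independent coordinate events. First I would observe that, by the definition of the imputation space $\mathcal{I}_{\tx}$, every sampled candidate agrees with $\bx$ on the observed coordinates $O_{\tx}$, so for a single draw $\hx \sim \mathcal{D}_{\tx}$ the event $\{\|\hx - \bx\|_\infty \le \epsilon\}$ is exactly the intersection over the missing coordinates $d \in M_{\tx}$ of the events $\{|\hat{x}_d - x_d| \le \epsilon\}$. Since $\mathcal{D}_{\tx}$ is the uniform distribution on the box $\mathcal{I}_{\tx}$, its coordinates are mutually independent, hence $p := \mathbb{P}_{\hx \sim \mathcal{D}_{\tx}}[\,\|\hx - \bx\|_\infty \le \epsilon\,] = \prod_{d \in M_{\tx}} \mathbb{P}[\,|\hat{x}_d - x_d| \le \epsilon\,]$, and the exponent that will appear is $D_{\ast} = |M_{\tx}|$, not $D$.

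Next I would lower‑bound each factor: for missing $d$, $\hat{x}_d$ is uniform on $[l_d, u_d]$ of width $w$, so $\mathbb{P}[\,|\hat{x}_d - x_d| \le \epsilon\,]$ equals $\tfrac1w$ times the Lebesgue measure of $[x_d - \epsilon, x_d + \epsilon] \cap [l_d, u_d]$, which is at least $\tfrac{2\epsilon}{w}$. Therefore $p \ge \left(\tfrac{2\epsilon}{w}\right)^{D_{\ast}}$, i.e.\ $p^{-1} \le \left(\tfrac{w}{2\epsilon}\right)^{D_{\ast}}$. Then, using that the $N$ candidates in $S$ are i.i.d., the event that \emph{no} candidate lands within $\epsilon$ of $\bx$ has probability $(1-p)^N \le e^{-pN}$; requiring this to be at most $\delta$ gives $N \ge p^{-1}\log\frac1\delta$, and substituting $p^{-1} \le \left(\tfrac{w}{2\epsilon}\right)^{D_{\ast}}$ shows that $N \ge \log\frac1\delta \cdot \left(\tfrac{w}{2\epsilon}\right)^{D_{\ast}}$ suffices; under this condition $\mathbb{P}[\exists\, \hx \in S : \|\hx - \bx\|_\infty \le \epsilon] = 1 - (1-p)^N \ge 1 - \delta$, which is the claim.

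The main obstacle is the per‑coordinate bound $\mathbb{P}[\,|\hat{x}_d - x_d| \le \epsilon\,] \ge \tfrac{2\epsilon}{w}$: when $x_d$ sits near an endpoint of $[l_d, u_d]$, the intersection $[x_d-\epsilon,x_d+\epsilon]\cap[l_d,u_d]$ can shrink toward measure $\epsilon$, so strictly one needs $x_d$ to be $\epsilon$‑interior (or $\epsilon$ small relative to $w$). I would handle this either by stating that assumption, or by covering $[l_d,u_d]$ with $\lceil w/(2\epsilon)\rceil$ cells, locating $x_d$ in one cell $C$, and lower‑bounding $\mathbb{P}[\hat{x}_d \in C]$; everything else — the independence factorization, the inequality $(1-p)^N \le e^{-pN}$, and solving for $N$ — is routine.
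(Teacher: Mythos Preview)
Your proposal is correct and follows essentially the same route as the paper: reduce to the missing coordinates, use coordinate independence under the uniform product law to get $p = \prod_{d\in M_{\tx}} \mathbb{P}[|\hat{x}_d - x_d|\le\epsilon]$, apply $(1-p)^N\le e^{-pN}$, and solve for $N$. The paper simply writes $\mathbb{P}[|\hat{x}_d - x_d|\le\epsilon] = \tfrac{2\epsilon}{w}$ as an equality and proceeds; the boundary issue you flag (where the intersection $[x_d-\epsilon,x_d+\epsilon]\cap[l_d,u_d]$ can have measure as small as $\epsilon$) is not addressed there either, so your caveat is in fact more careful than the original proof rather than a gap relative to it.
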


\myparagraph{On the estimation quality of $\hat{V}_{\tx}(\ba \mid S)$.}
While the expected validity $V_{\tx}(\ba)$ is the probability that the action $\ba$ is valid over the imputation space $\mathcal{I}_{\tx}$, the empirical validity $\hat{V}_{\tx}(\ba \mid S)$ estimates it over a finite sample $S$ over $\mathcal{I}_{\tx}$. 
In \cref{prop:vc}, we present an upper bound on its estimation error from the perspective of the \emph{growth function}~\citep{Mohri:2012:Foundations} corresponding to a given classifier $h$ and action set $\calA$. 
\begin{proposition}\label{prop:vc}
    For a fixed classifier $h$ and action set $\calA$, let $\mathcal{H} \coloneqq \set{h_{\ba} \colon \mathcal{X} \to \mathcal{Y} \mid h_{\ba}(\bx) \coloneqq h(\bx + \ba), \ba \in \calA}$ be the family of functions that return the prediction value of $h$ after implementing an action $\ba$. 
    For $\mathcal{H}$ and $m > 0$, we denote its growth function by $\Pi_{\mathcal{H}}(m) = \max_{\set{\bx_1, \dots, \bx_m} \subseteq \mathcal{X}} |\set{(h_{\ba}(\bx_1), \dots, h_{\ba}(\bx_m)) \mid h_{\ba} \in \mathcal{H}}|$. 
    Then, for any $\ba \in \calA$ and $\delta > 0$, the following inequality holds with probability at least $1 - \delta$:
    \begin{align*}
        \hat{V}_{\tx}(\ba \mid S) - V_{\tx}(\ba) \leq \sqrt{\frac{2 \cdot \ln \Pi_{\mathcal{H}}(N)}{N}} + \sqrt{\frac{\ln \frac{1}{\delta}}{2 \cdot N}}. 
    \end{align*}
    Furthermore, we consider a linear classifier $h_{\bm{\beta}}(\bx) = \operatorname{sgn}(\bm{\beta}^\top \bx)$ with $\bm{\beta} = (\beta_1, \dots, \beta_D) \in \mathbb{R}^D$ and assume that $\calA$ is a convex set and contains at least one action $\ba$ such that $\ba \not= \bm{0}$. 
    Then, the following inequality holds with probability at least $1 - \delta$ for any $\ba \in \calA$ and $\delta > 0$:
    \begin{align*}
        \hat{V}_{\tx}(\ba \mid S) - V_{\tx}(\ba) \leq \sqrt{\frac{2 \cdot \ln (N + 1)}{N}} + \sqrt{\frac{\ln \frac{1}{\delta}}{2 \cdot N}}. 
    \end{align*}    
\end{proposition}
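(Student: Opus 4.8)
\myparagraph{Proof proposal.}
Both inequalities are uniform-convergence statements over the family $\mathcal{H}$, and the plan is to apply a standard growth-function generalization bound and then, in the linear case, to estimate $\Pi_{\mathcal{H}}(N)$ by hand. First I would rewrite the empirical validity as an empirical risk with respect to the constant target label $+1$: put $R(\ba) \coloneqq 1 - V_{\tx}(\ba) = \mathbb{E}_{\hx \sim \mathcal{D}_{\tx}}[\ind{h(\hx + \ba) = -1}]$ and $\hat{R}_S(\ba) \coloneqq 1 - \hat{V}_{\tx}(\ba \mid S) = \frac{1}{N}\sum_{n=1}^{N} \ind{h(\hx_n + \ba) = -1}$, so that the asserted bound is exactly $R(\ba) - \hat{R}_S(\ba) \le \sqrt{2 \ln \Pi_{\mathcal{H}}(N)/N} + \sqrt{\ln(1/\delta)/(2N)}$. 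The associated $0/1$-loss class $\mathcal{G} \coloneqq \set{ \hx \mapsto \ind{h(\hx + \ba) = -1} \mid \ba \in \calA }$ is the image of $\mathcal{H}$ under the output bijection $y \mapsto \ind{y = -1}$, hence $\Pi_{\mathcal{G}}(m) = \Pi_{\mathcal{H}}(m)$ for all $m$.

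For the first inequality I would then invoke the classical growth-function bound: symmetrization gives $\mathbb{E}_S[\sup_{\ba \in \calA}(R(\ba) - \hat{R}_S(\ba))] \le 2\,\mathfrak{R}_N(\mathcal{G})$, Massart's lemma bounds the Rademacher complexity by $\mathfrak{R}_N(\mathcal{G}) \le \sqrt{2 \ln \Pi_{\mathcal{G}}(N)/N}$, and McDiarmid's inequality turns the expectation into a statement holding with probability at least $1-\delta$ at the price of the extra term $\sqrt{\ln(1/\delta)/(2N)}$ (see, e.g., \citep{Mohri:2012:Foundations}). Since this holds uniformly over $\calA$, it holds for any fixed $\ba \in \calA$; substituting $\Pi_{\mathcal{G}} = \Pi_{\mathcal{H}}$ gives the first claim.

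For the linear case I would bound $\Pi_{\mathcal{H}}(N)$ directly. Here $h_{\ba}(\bx) = \operatorname{sgn}(\bm{\beta}^\top \bx + \bm{\beta}^\top \ba)$ depends on $\ba$ only through the scalar $\bm{\beta}^\top \ba$, and by convexity of $\calA$ the set $T \coloneqq \set{ \bm{\beta}^\top \ba \mid \ba \in \calA } \subseteq \mathbb{R}$ is an interval; in any case $\mathcal{H} \subseteq \set{ \bx \mapsto \operatorname{sgn}(\bm{\beta}^\top \bx + t) \mid t \in \mathbb{R} }$, the class of one-dimensional threshold rules applied to the projections $\bm{\beta}^\top \bx$. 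For any points $\bx_1, \dots, \bx_m$, the label $\operatorname{sgn}(\bm{\beta}^\top \bx_i + t)$ of each point can switch (from $-1$ to $+1$) at most once as $t$ increases, so the achievable label vectors form a monotone chain in $\set{\pm 1}^m$ and hence number at most $m+1$; thus $\Pi_{\mathcal{H}}(m) \le m+1$. Plugging $\Pi_{\mathcal{H}}(N) \le N+1$ into the first inequality and using monotonicity of $\ln$ yields the second claim.

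The symmetrization/Massart/McDiarmid chain is entirely routine, so the only points requiring care are conceptual: (i) the quantity of interest must be controlled \emph{uniformly} over $\calA$ --- in \cref{sec:optimization} the action is chosen after observing $S$ --- so a per-$\ba$ Hoeffding bound would not suffice and the growth-function term is genuinely needed; and (ii) the reduction to one-dimensional thresholds, which is where the structure of linear classifiers enters. I would also remark that neither the convexity of $\calA$ nor the existence of a nonzero action is actually used in the estimate $\Pi_{\mathcal{H}}(N) \le N+1$ (the former only makes $T$ an interval, the latter merely excludes the trivial case $\calA = \set{\bm{0}}$, where $\mathcal{H}$ is a singleton and the bound is immediate), so the argument goes through under the stated hypotheses.
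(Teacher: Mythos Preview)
Your approach is essentially the same as the paper's: both reduce to the standard Rademacher/growth-function generalization bound from \citep{Mohri:2012:Foundations} and then, for the linear case, observe that $\mathcal{H}$ embeds into one-dimensional threshold classifiers on the scores $\bm{\beta}^\top \bx$. One bookkeeping slip: your chain as written yields $2\,\mathfrak{R}_N(\mathcal{G}) \le 2\sqrt{2\ln\Pi_{\mathcal{G}}(N)/N}$, a factor of $2$ off from the claimed constant. The paper closes this gap by explicitly using that $\mathcal{G}$ is the image of $\mathcal{H}$ under the affine map $y \mapsto (1-y)/2$, so $\mathfrak{R}_N(\mathcal{G}) = \tfrac{1}{2}\mathfrak{R}_N(\mathcal{H})$ and hence $2\,\mathfrak{R}_N(\mathcal{G}) = \mathfrak{R}_N(\mathcal{H}) \le \sqrt{2\ln\Pi_{\mathcal{H}}(N)/N}$ by Massart applied to the $\{\pm 1\}$-valued class $\mathcal{H}$ (equivalently, you can center the $\{0,1\}$-valued vectors before applying Massart to pick up the missing $\tfrac{1}{2}$). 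For the linear case the paper actually proves the \emph{equality} $\Pi_{\mathcal{H}}(N) = N+1$, using convexity of $\calA$ and the existence of a nonzero action to exhibit a sample realizing all $N+1$ dichotomies; your observation is correct that only the upper bound $\Pi_{\mathcal{H}}(N) \le N+1$ is needed for the stated inequality, and that neither extra hypothesis is used there.
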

By definition, the growth function $\Pi_{\mathcal{H}}$ depends on the variety of the corresponding action set $\calA$. 
For $\calA$ and $\calA'$ such that $\calA \subseteq \calA'$, we have $\Pi_{\mathcal{H}}(N) \leq \Pi_{\mathcal{H}'}(N)$, where $\mathcal{H}$ and $\mathcal{H}'$ correspond to $\calA$ and $\calA'$, respectively. 
Thus, we should not make $\calA$ larger than necessary to reduce the estimation error.

\subsection{Mixed-integer linear optimization approach}\label{sec:optimization:milo}
Using the \emph{empirical validity} $\hat{V}_{\tx}(\ba \mid S)$, we can formulate our surrogate problem as
\begin{align}\label{eq:armin}
    {\mathop{\text{\upshape minimize}}}_{\ba \in \calA} \; c(\ba) \;\;\; \text{\upshape subject to} \;\; \hat{V}_{\tx}(\ba \mid S) \geq \rho. 
\end{align}
To solve the problem of \eqref{eq:armin}, we extend the existing AR methods based on mixed-integer linear optimization~(MILO)~\citep{Ustun:FAT*2019,Kanamori:AAAI2021,Parmentier:ICML2021}. 
Compared to the major gradient-based approaches, MILO-based approaches have several advantages:
(i)~we can apply them not only to a neural network but also to a tree ensemble, which is a non-differentiable model; 
(ii)~they can deal with several cost functions, including not only norm-based costs but also percentile-based costs; 
(iii)~they can naturally handle task-specific constraints (e.g., one-hot encoded categorical features); 
(iv)~we can obtain optimal solutions using off-the-shelf solvers, such as Gurobi, without implementing a designed algorithm. 

In the following, we propose a MILO formulation for a linear classifier $h_{\bm{\beta}}(\bx) = \operatorname{sgn}(\bm{\beta}^\top \bx)$ and a linear cost function $c(\ba) = \sum_{d=1}^{D} c_{d}(a_{d})$. 
Our MILO formulations for a neural network and a tree ensemble are presented in \cref{sec:appendix:formulation}. 
Note that several major cost functions, such as weighted $\ell_1$-norm~\citep{Wachter:HJLT2018} and total-log percentile shift~(TLPS)~\citep{Ustun:FAT*2019}, can be expressed as the above linear form and that our formulations can be extended to non-linear cost functions~\citep{Ustun:FAT*2019,Kanamori:AAAI2021}. 
As with the existing methods, we assume that each coordinate $\calA_d$ of a given action set $\calA = \calA_1 \times \dots \times \calA_D$ is finite and discretized; that is, we assume $\calA_d = \set{a_{d,1}, \dots, a_{d, J_d}}$, where $J_d = |\calA_d|$. 
To express an action $\ba \in \calA$, we introduce binary variables $\pi_{d,j} \in \set{0,1}$ for $d \in [D]$ and $j \in [ J_d ]$, which indicate that $a_{d,j} \in A_d$ is selected or not. 
We also introduce auxiliary variables $\nu_n \in \set{0,1}$ for $n \in [N]$ such that $\nu_n = v(\ba; \hx_n)$. 
Then, we can formulate the problem of \eqref{eq:armin} for the linear classifier $h_{\bm{\beta}}$ as follows:
\begin{align}\label{eq:linear}
\renewcommand{\arraystretch}{1.5}
    \begin{array}{cl}
        \text{minimize}   & \sum_{d=1}^{D} \sum_{j=1}^{J_d} c_{d}(a_{d,j}) \cdot \pi_{d,j} \\
        \text{subject to} & \sum_{n=1}^{N} \nu_n \geq N \cdot \rho, \\
                          & \sum_{j=1}^{J_d} \pi_{d,j} = 1, \forall d \in [D], \\
                          & \bm{\beta}^\top \hx_{n} + \sum_{d=1}^{D} \beta_d \cdot \sum_{j=1}^{J_d} a_{d,j} \cdot \pi_{d,j} \geq M \cdot (1 - \nu_n),  \forall n \in [N], \\
                          & \pi_{d,j} \in \set{0,1}, \forall d \in [D], \forall j \in [J_d], \\
                          & \nu_{n} \in \set{0,1}, \forall n \in [N],
    \end{array}
\renewcommand{\arraystretch}{1.0}
\end{align}
where $M$ is a constant such that $M \leq \min_{\hx \in S} \min_{\ba \in \calA} \bm{\beta}^\top (\hx + \ba)$. 
Note that we can compute $c_{d}(a_{d,j})$ for each $a_{d,j}$ as a constant when $\calA$ is given. 
Since the problem of \eqref{eq:linear} is a MILO problem, we can solve it using MILO solvers and recover an optimal action to \eqref{eq:armin} from the obtained solution.

\myparagraph{Heuristic solution approach.}
While we can formulate \eqref{eq:armin} as a MILO problem by extending the existing MILO-based AR methods, solving the formulated problem is challenging for a large $N$ and a complex classifier $h$ since the total number of the constraints increases depending on $N$ and $h$. 
Indeed, in our preliminary experiments, we often failed to obtain low-cost actions within a given time limit. 
To alleviate this issue, in \cref{sec:appendix:implementation:armin}, we propose a heuristic approach that divides the problem into a set of a few small problems by subsampling the imputation candidates. 
We confirmed that it could improve the quality of actions without increasing the overall computational time.

\begin{figure}[t]
    \centering
    \includegraphics[width=\textwidth]{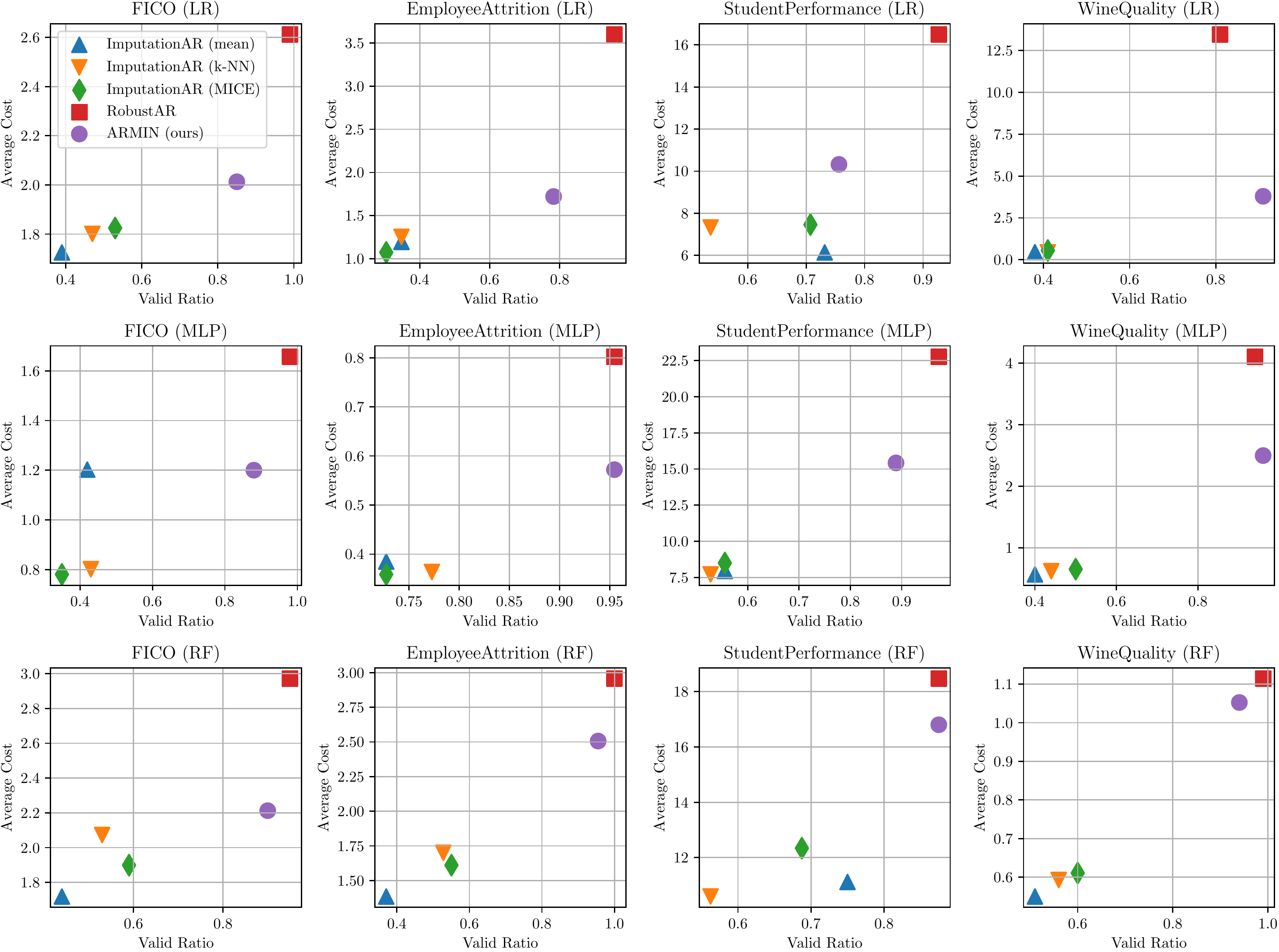}
    \caption{
        Experimental results of our baseline comparison under the MCAR situation, where $D_{\ast} = 2$. 
        The x-axis (resp.\ y-axis) represents the valid ratio (resp.\ average cost). 
        Compared to the baselines, our ARMIN attained good balances between the valid ratio and cost for almost all the datasets and classifiers; that is, it achieved higher valid ratios than ImputationAR and lower costs than RubustAR. 
    }
    \label{fig:exp:mcar}
\end{figure}

\section{Experiments}\label{sec:experiments}
To investigate the efficacy of our ARMIN, we conducted experiments on public datasets. 
The code was implemented in Python 3.10 with scikit-learn 1.0.2 and Gurobi 10.0.3. 
The experiments were conducted on Ubuntu 20.04 with Intel Xeon E-2274G 4.0 GHz CPU and 32 GB memory. 

Our experimental evaluation aims to answer the following questions:
(i)~How are the validity and cost of the recourse actions obtained by our ARMIN compared to those by the baselines? 
(ii)~How does our ARMIN behave under more realistic missing mechanisms compared to the baselines? 
(iii)~How is the trade-off between the validity and cost depending on our confidence parameter $\rho$? 
Due to page limitations, our implementation details and the complete results are shown in \cref{sec:appendix:implementation,sec:appendix:experiments}. 

\myparagraph{Experimental settings. }
We used FICO ($D=23$)~\citep{fico:2018}, Attrition ($D=44$)~\citep{Attrition:2017}, WineQuality ($D=12$), Student ($D=48$)~\citep{Dua:2019}, and GiveMeCredit ($D=10$)~\citep{Givemecredit:2011} datasets. 
We randomly split each dataset into train (75\%) and test (25\%) instances, and trained a logistic regression classifier (LR), two-layer ReLU network with $30$ neurons (MLP), and random forest with $50$ decision trees (RF) as $h$ on each training dataset. 
To simulate the situation where instances include missing values, for each test instance $\bx$ with the undesired prediction $h(\bx) = -1$ (e.g., ``loan rejection"), we generated its incomplete instances $\tx$ by dropping their values of some features. 
Then, we extracted actions for $\tx$ by each method. 
As a cost function $c$, we used the TLPS~\citep{Ustun:FAT*2019} $c(\ba) = {\sum}_{d=1}^{D} \ln \left( \frac{1-Q_d(x_d + a_d)}{1-Q_d(x_d)} \right)$, where $Q_d$ is the cumulative distribution function of $d$. 
For our ARMIN, we estimated $\mathcal{D}_{\tx}$ in advance by MICE with the Bayesian ridge implemented in scikit-learn, and set $N=100$ and $\rho = 0.75$. 

\myparagraph{Baselines.}
To the best of our knowledge, no existing AR method directly works with missing values. 
Thus, we consider two baselines by extending the existing methods. 
One is \emph{ImputationAR}, which applies an existing AR method to the imputed instance $\hx$ by an existing imputation method including mean imputation, $k$-NN imputation~\citep{Troyanskaya:Bioinformatics2001}, and MICE~\citep{Buuren:JSS2011}. 
The other baseline is \emph{RobustAR}~\citep{Dominguez-olmedo:ICML2022}, where we replace its uncertainty set with a set of $N$ imputation candidates sampled over ${\mathcal{I}}_{\tx}$.

\subsection{Comparison under MCAR situation}
Firstly, we examined each method under the MCAR situation. 
To simulate the MCAR mechanism, we randomly selected a few features for each test instance $\bx$ and dropped its values of the selected features. 
We set the total number of missing features $D_{\ast} \in \set{1,2,3}$. 
We measured the \emph{valid ratio}, which is the ratio that an obtained action $\ba$ is valid for the original instance $\bx$ (i.e., $h(\bx+\ba) = +1$) and the average cost $c(\ba)$. 
In \cref{sec:appendix:experiments}, we also report the average \emph{sign agreement score}~\citep{Krishna:arxiv2022} that evaluates the qualitative similarity between $\ba$ and the optimal action $\ba^\ast$ of $\bx$ and the average computational time for each instance. 

\cref{fig:exp:mcar} shows the results of the valid ratio and average cost for $D_{\ast} = 2$. 
From these results, we can see that \emph{our ARMIN attained good balances between the valid ratio and average cost than the baselines}. 
We observe that the valid ratio of ImputationAR was significantly lower than RobustAR and ARMIN, and that RobustAR and ARMIN stably achieved high valid ratios regardless of the datasets and classifiers. 
In addition, the average cost of RobustAR was always larger than ARMIN. 
For example, the average cost of RobustAR for the LR classifier on the WineQuality dataset was 3.59 times higher than that of ARMIN. 
In summary, we confirmed that \emph{our ARMIN could provide given incomplete instances with good actions that achieve higher validity and lower cost than the baselines}.

\begin{wrapfigure}[14]{r}{0.5\textwidth}
    \vspace{-2mm}
    \centering
    \includegraphics[width=0.5\textwidth]{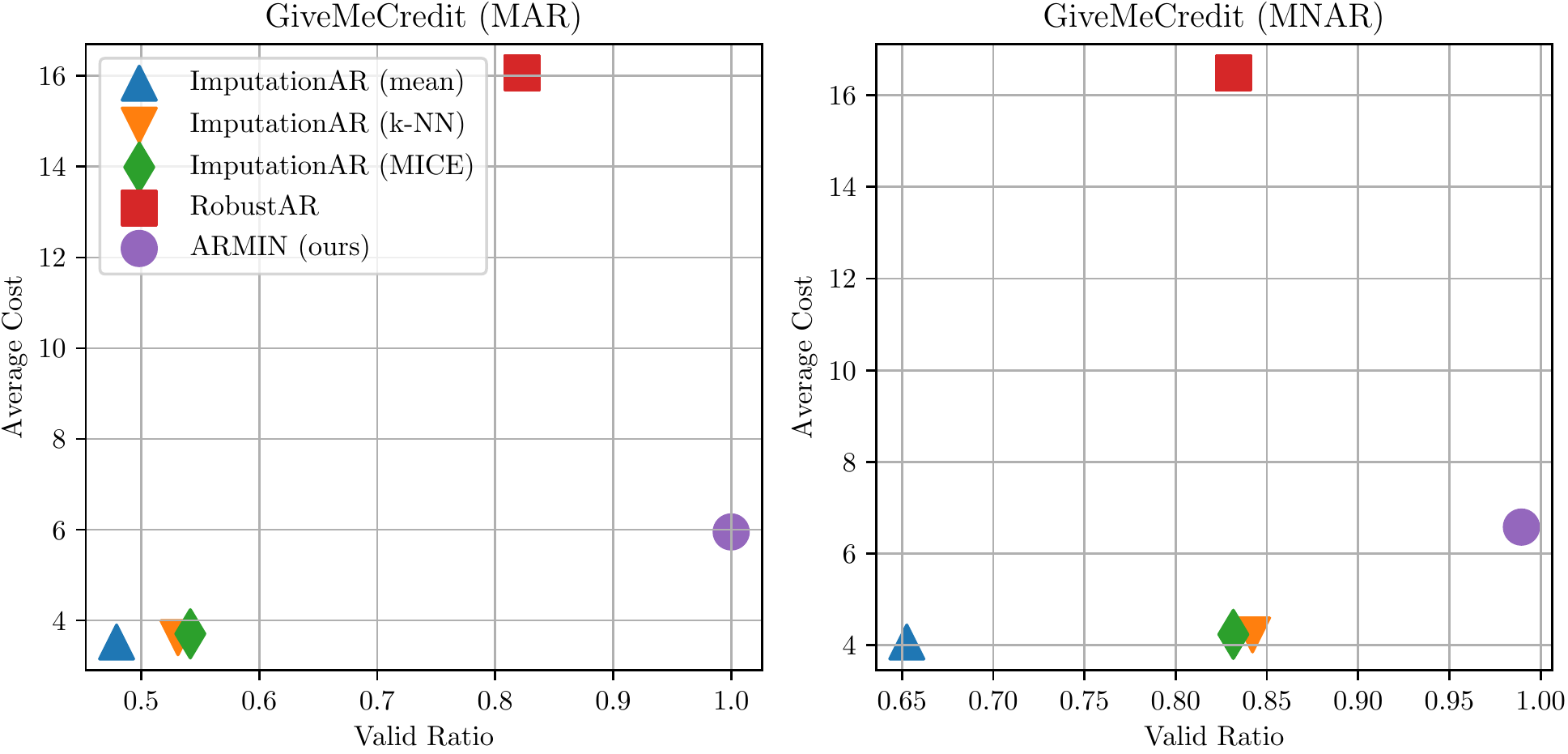}
    \vspace{-4mm}
    \caption{
        Experimental results of our comparison under the MAR and MNAR situations. 
    }
    \label{fig:exp:marmnar}
\end{wrapfigure}
\subsection{Comparison under MAR and MNAR situations}
Next, we examined each method on the GiveMeCredit dataset under the MAR and MNAR situations. 
To simulate the MAR (resp.\ MNAR) mechanism, we assume the situation where older (resp.\ richer) people are less inclined to reveal their income~\citep{Josse:arxiv2019}. 
We collected test instances $\bx$ that were predicted as ``experiencing 90 days past due delinquency or worse" by an LR classifier $h$ and older than the median of the feature ``Age" (resp.\ had larger incomes than the median of the feature ``MonthlyIncome"). 
Then, we dropped their values of ``MonthlyIncome," and obtained actions $\ba$ for them. 

\begin{table}[t]
    \centering
    \caption{
        Examples of the obtained actions $\ba$ by the baselines and our ARMIN on the GiveMeCredit dataset. 
        Here, Valid and Cost indicate $h(x+a) = +1$ and $c(a)$, respectively. 
        While (a) is the optimal action for a given instance $\bx$ without missing value, (b), (c), and (d) are the actions obtained by the baselines and our method for the incomplete instance $\tx$, where the feature ``MonthlyIncome" is missing following the MAR mechanism. 
        Compared to the baselines, our ARMIN could obtain the same action as the optimal one, though the given instance included missing values. 
    }
    \subfigure[Optimal action without missing values]{
        \small
        \adjustbox{valign=b}{
            \begin{tabular}{lccc}
            \toprule
                \textbf{Feature} & \textbf{Value}  & \textbf{Valid}  & \textbf{Cost} \\
            \midrule
               NumberOfTimes90DaysLate & $-3$ & True & $3.00$ \\
            \bottomrule
            \end{tabular}
            \label{tab:exp:marexample:optimal}
        }
    }
    \hfill
    \subfigure[ImputationAR (mean)]{
        \small
        \adjustbox{valign=b}{
            \begin{tabular}{lccc}
            \toprule
                \textbf{Feature} & \textbf{Value}  & \textbf{Valid}  & \textbf{Cost} \\
            \midrule
               RevolvingUtilizationOfU.L. & $-0.49$ & False & $2.75$ \\
            \bottomrule
            \end{tabular}
            \label{tab:exp:marexample:mean}
        }
    }
    \hfill
    \subfigure[RobustAR]{
        \small
        \adjustbox{valign=b}{
            \begin{tabular}{lccc}
            \toprule
                \textbf{Feature} & \textbf{Value}  & \textbf{Valid}  & \textbf{Cost} \\
            \midrule
               RevolvingUtilizationOfU.L. & $-0.36$ & \multirow{2}{*}{True} & \multirow{2}{*}{$5.42$} \\
               NumberOfTimes90DaysLate & $-3$ \\
            \bottomrule
            \end{tabular}
            \label{tab:exp:marexample:robust}
        }
    }
    \hfill
    \subfigure[ARMIN (ours)]{
        \small
        \adjustbox{valign=b}{
            \begin{tabular}{lccc}
            \toprule
                \textbf{Feature} & \textbf{Value}  & \textbf{Valid}  & \textbf{Cost} \\
            \midrule
               NumberOfTimes90DaysLate & $-3$ & True & $3.00$ \\
            \bottomrule
            \end{tabular}
            \label{tab:exp:marexample:armin}
        }
    }
    \label{tab:exp:marexample}
\end{table}

\cref{fig:exp:marmnar} shows the results of the valid ratio and average cost for the baselines and our method. 
Similar to the MCAR setting, we can see that the valid ratio of ImputationAR was significantly lower than the others, and the costs of RobustAR were higher than ARMIN. 
These results indicate that \emph{our ARMIN could yield better actions than the baselines under the MAR and MNAR situations as well}. 

\cref{tab:exp:marexample} presents examples of the obtained actions for the same instance under the MAR situation. 
While \cref{tab:exp:marexample:optimal} is the optimal action for the original instance $\bx$ without missing value, \cref{tab:exp:marexample:mean,tab:exp:marexample:robust,tab:exp:marexample:armin} are the actions obtained by the baselines and our method for its incomplete instance $\tx$. 
We observe that the action obtained by ImputationAR is different from the optimal one in \cref{tab:exp:marexample:optimal} and not valid for $\bx$.
We also see that while the action obtained by RobustAR is valid for $\bx$, it has a significantly higher cost than that of the optimal one. 
In contrast, ARMIN could obtain the same action as the optimal one.
That is, \emph{our ARMIN succeeded in providing a good action that is valid and low-cost for the original instance $\bx$, even though the given instance included missing values}.

\begin{figure}[t]
    \centering
    \hfill
    \subfigure[Sensitivity analysis of the confidence parameter $\rho$]{
        \includegraphics[width=0.475\textwidth,valign=b]{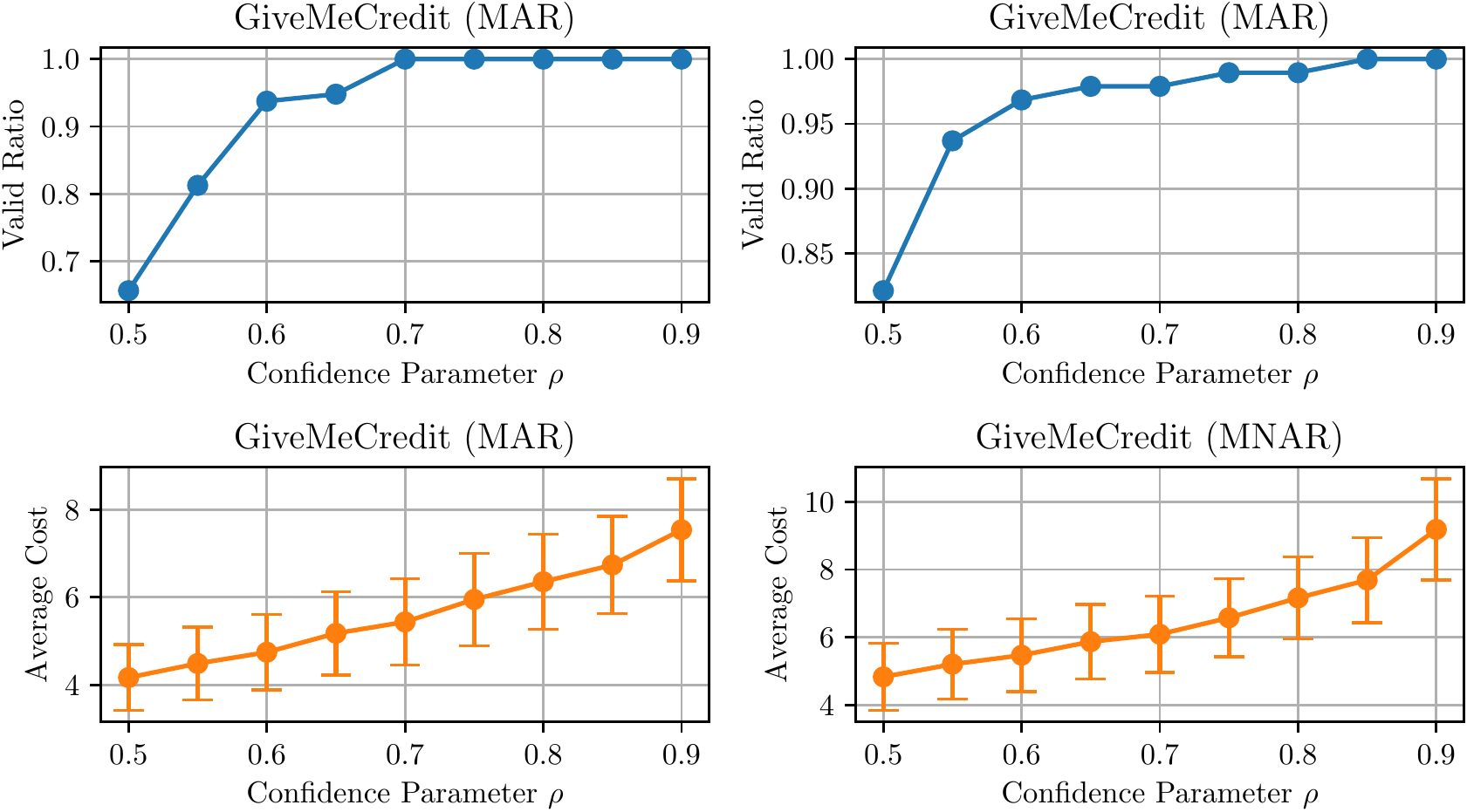}
        \label{fig:exp:sens}
    }
    \hfill
    \subfigure[Path analysis of the confidence parameter $\rho$]{
        \includegraphics[width=0.475\textwidth,valign=b]{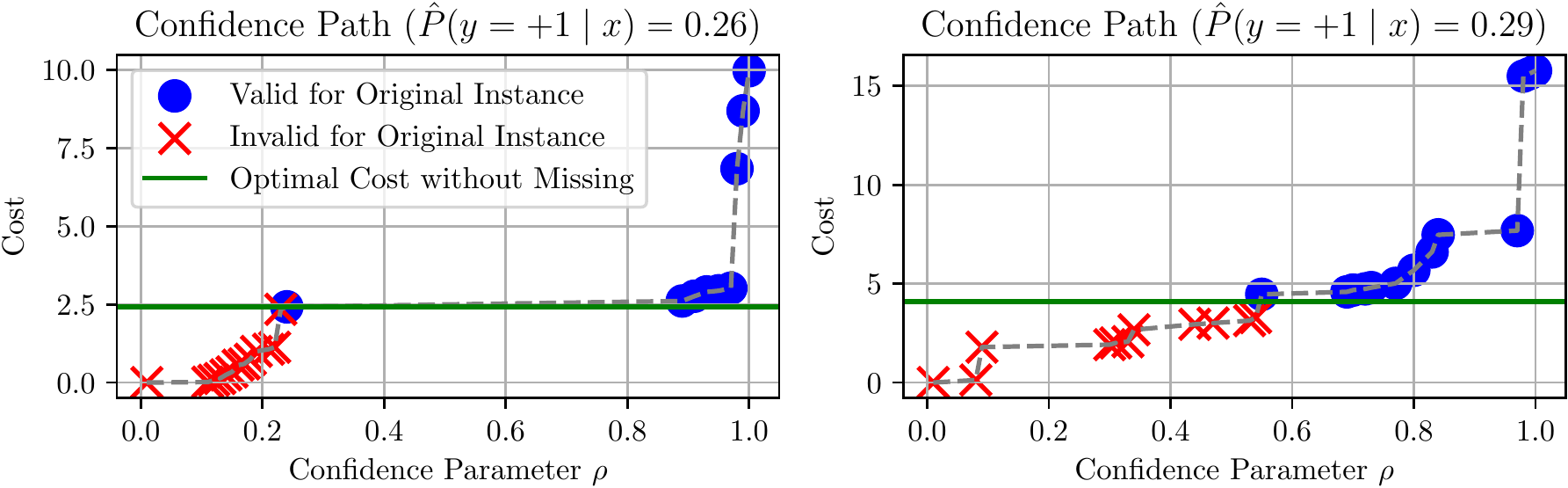}
        \label{fig:exp:path}
    }
    \caption{
        Experimental results of our trade-off analyses on the GiveMeCredit dataset. 
        (a)~Experimental results of the sensitivity analyses of the confidence parameter $\rho$ with $95\%$ confidence intervals. 
        (b)~Experimental results of the path analyses under the MAR situation. 
        Here, we selected two examples, and additional examples are shown in \cref{sec:appendix:experiments}. 
    }
\end{figure}

\subsection{Analysis of trade-off between validity and cost}
Finally, we analyze the trade-off between the validity and cost of our ARMIN by varying its confidence parameter $\rho \in \set{0.5, 0.55, \dots, 0.9}$. 
We show the results on the GiveMeCredit dataset under the MAR and MNAR situation. 
\cref{fig:exp:sens} shows the results of the valid ratio and average cost for each $\rho$. 
We observed that while both the valid ratio and average cost are increased by increasing $\rho$, the increase of the valid ratio was almost saturated when $\rho \geq 0.6$. 
This result suggests that \emph{there exists an appropriate value of $\rho$ that makes the obtained action $\ba$ valid for the original instance $\bx$ at high probability without increasing its cost as much as possible}.

For a more detailed analysis of the trade-off, we also measured the validity and cost for each original instance by our path algorithm presented in \cref{algo:path} of Appendix. 
\cref{fig:exp:path} presents two examples of our results. 
We can see that the confidence $\rho$ that makes the obtained action $\ba$ valid for the original instance $\bx$ varies depending on each instance. 
We also observed that the costs increased sharply when the confidences $\rho$ exceed $0.9$ and close to $1$, which may be related to our results in \cref{fig:exp:mcar,fig:exp:marmnar} that the average cost of RobustAR is higher than the others.

\section{Conclusion}\label{sec:concl}
This paper proposed the first framework of algorithmic recourse (AR) that works even in the presence of missing values, named algorithmic recourse with multiple imputation~(ARMIN). 
We first discussed the problem of AR with missing values. 
We empirically and theoretically showed the risk that a naive approach with a single imputation technique often fails to obtain good actions regarding their validity, cost, and features to be changed. 
To alleviate this risk, we formulated the task of obtaining a valid and low-cost action for a given incomplete instance. 
Our main idea is to incorporate well-known multiple imputation techniques into the AR optimization problem. 
Then, we provided theoretical analyses of our task and proposed a practical solution based on mixed-integer linear optimization. 
Our experiments on public datasets demonstrated the efficacy of ARMIN compared to existing methods.

\myparagraph{Limitations and future work.}
There are several directions to improve our ARMIN. 
First, our experiments demonstrated that while our ARMIN could obtain better actions, it was certainly slower than the baselines. 
Therefore, we will try to improve the efficiency of our optimization framework, e.g., by leveraging the techniques of chance-constrained optimization algorithms~\citep{Calafiore:MP2005}. 
In addition, we observed that an appropriate confidence parameter $\rho$ varies depending on each instance $\bx$. 
Because running our method repeatedly by varying $\rho$ is costly in practice, we need to develop a method for determining its default value automatically. 
Furthermore, it is important to show theoretical relationships between $\rho$ and the expected validity of the obtained action for the original instance. 
Finally, extending our framework to other explanation methods, such as LIME~\citep{Ribeiro:KDD2016}, is also interesting.

\bibliographystyle{abbrvnat}
\begin{small}
    \bibliography{ref}
\end{small}

\newpage
\appendix

\section{Omitted proof}\label{sec:appendix:proof}
\subsection{Proof of \texorpdfstring{\cref{prop:upper}}{}}
To prove \cref{prop:upper}, we use the following lemma by~\citep{Ustun:FAT*2019,Pawelczyk:AISTATS2022}. 
Recall that we are given 
(i) a classifier $h_{\bm{\beta}}(\bx) = \operatorname{sgn}(f_{\bm{\beta}}(\bx))$ with a linear decision function $f_{\bm{\beta}}(\bx) = \bm{\beta}^\top \bx$ and a parameter $\bm{\beta} \in \mathbb{R}^D$, 
(ii) a feasible action set $\calA = \mathbb{R}^D$ for any $\bx \in \mathcal{X}$, and 
(iii) a cost function $c(a) = \| \ba \|$. 

\begin{lemma}[Closed-form Optimal Action~\citep{Ustun:FAT*2019,Pawelczyk:AISTATS2022}]\label{lemm:appendix:closedform}
    For a given instance $\bx \in \mathcal{X}$ without missing values, let $\ba^\ast(\bx)$ be an optimal solution to the problem~(2) for $\bx$. 
    Then, we have 
    \begin{align}
        \ba^\ast(\bx) = 
        \begin{cases}
            -\frac{f_{\bm{\beta}}(\bx)}{\| \bm{\beta} \|_2^2} \bm{\beta} & \text{if } h_{\bm{\beta}}(\bx)=-1, \\
            \bm{0} & \text{otherwise}.
        \end{cases}
    \end{align}
\end{lemma}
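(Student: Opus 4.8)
The plan is to treat the two cases of the statement separately and, in the nontrivial one, reduce the optimization over $\calA = \mathbb{R}^D$ to a one-line application of Cauchy--Schwarz. Throughout I take the cost to be $c(\ba) = \| \ba \|_2$ (as the $\| \bm{\beta} \|_2^2$ in the denominator of the claimed formula indicates), assume $\bm{\beta} \neq \bm{0}$ (otherwise $h_{\bm{\beta}}$ is constant and the problem degenerates), and read the constraint $h_{\bm{\beta}}(\bx + \ba) = +1$ as $\bm{\beta}^\top (\bx + \ba) \geq 0$, i.e., that $\bx + \ba$ lands on the desired side of the decision hyperplane $H \coloneqq \set{\bm{z} \in \mathbb{R}^D \mid \bm{\beta}^\top \bm{z} = 0}$.

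If $h_{\bm{\beta}}(\bx) = +1$, then $f_{\bm{\beta}}(\bx) \geq 0$, so $\ba = \bm{0} \in \calA$ is already feasible with cost $c(\bm{0}) = 0$; since $c \geq 0$ this is optimal, which gives the second line of the claim. If $h_{\bm{\beta}}(\bx) = -1$, then $f_{\bm{\beta}}(\bx) < 0$ and the constraint rewrites as $\bm{\beta}^\top \ba \geq -f_{\bm{\beta}}(\bx) \eqqcolon t > 0$; since $\calA = \mathbb{R}^D$, problem~\eqref{eq:ce} becomes the problem of minimizing $\| \ba \|_2$ subject to the single linear constraint $\bm{\beta}^\top \ba \geq t$. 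I would first prove the lower bound: for any feasible $\ba$, Cauchy--Schwarz gives $t \leq \bm{\beta}^\top \ba \leq \| \bm{\beta} \|_2 \| \ba \|_2$, hence $\| \ba \|_2 \geq t / \| \bm{\beta} \|_2$. Then I would exhibit a feasible minimizer attaining it: for $\ba^\ast \coloneqq \frac{t}{\| \bm{\beta} \|_2^2} \bm{\beta} = -\frac{f_{\bm{\beta}}(\bx)}{\| \bm{\beta} \|_2^2} \bm{\beta}$ one checks $\bm{\beta}^\top \ba^\ast = t$ (so $\bx + \ba^\ast \in H$ and the constraint holds with equality) and $\| \ba^\ast \|_2 = t / \| \bm{\beta} \|_2$, matching the bound, so $\ba^\ast$ is optimal, which is the first line of the claim. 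Equivalently, $\ba^\ast$ is the displacement of $\bx$ onto its Euclidean projection on $H$, which is the nearest point of the feasible halfspace since $\bx$ lies strictly on the wrong side; uniqueness, if wanted, follows from strict convexity of $\| \cdot \|_2$ over the closed convex feasible set.

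There is no genuine obstacle here: once problem~\eqref{eq:ce} is rewritten as a Euclidean-norm minimization under a single linear inequality, the argument is elementary. The only points needing care are the boundary convention for $\operatorname{sgn}$ at $0$ (so that $\bx + \ba^\ast \in H$ is accepted as $h_{\bm{\beta}} = +1$), the exclusion of the degenerate case $\bm{\beta} = \bm{0}$, and the fact that $\calA = \mathbb{R}^D$ with $\bm{0} \in \calA$, so that both $\bm{0}$ and $\ba^\ast$ are genuinely attainable inside the action set --- all of which are covered by the stated assumptions.
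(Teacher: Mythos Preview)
Your proof is correct and complete. Note that the paper does not actually prove this lemma: it is stated as a known result from \citet{Ustun:FAT*2019} and \citet{Pawelczyk:AISTATS2022} and is only used as a tool in the proof of \cref{prop:upper}. Your Cauchy--Schwarz argument (equivalently, Euclidean projection onto the halfspace/hyperplane) is the standard derivation, and your care about the boundary convention for $\operatorname{sgn}$, the nondegeneracy $\bm{\beta}\neq\bm{0}$, and the unrestricted action set $\calA=\mathbb{R}^D$ is exactly right.
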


Using \cref{lemm:appendix:closedform}, we give a proof of \cref{prop:upper} as follows. 
As mentioned in the main paper, we assume that a feature $d^\circ \in [D]$ is missing and imputed with its population mean $\mu_{d^\circ} = \mathbb{E}_{\bx} \left[ x_{d^\circ} \right]$. 
\begin{proof}[Proof of \cref{prop:upper}]
    Recall that $\hat{x}_{d^\circ} = \mu_{d^\circ}$ and $\hat{x}_d = x_d$ for $d \in [D] \setminus \set{d^\circ}$. 
    From \cref{lemm:appendix:closedform}, we obtain
    \begin{align*}
        \| \hat{\ba} - \ba^\ast \|_2^2 
        =
        \begin{cases}
            \frac{ \beta_{d^\circ}^2}{\| \bm{\beta} \|_2^2} \cdot (x_{d^\circ} - \mu_{d^{\circ}})^2 & \text{if } h_{\bm{\beta}}(\bx) = -1 \land h_{\bm{\beta}}(\hx) = -1, \\
            \frac{1}{\| \bm{\beta} \|_2^2} \cdot (f_{\bm{\beta}}(\bx))^2 & \text{if } h_{\bm{\beta}}(\bx) = -1 \land h_{\bm{\beta}}(\hx) = +1, \\
            \frac{1}{\| \bm{\beta} \|_2^2} \cdot (f_{\bm{\beta}}(\hx))^2 & \text{if } h_{\bm{\beta}}(\bx) = +1 \land h_{\bm{\beta}}(\hx) = -1, \\
            \bm{0} & \text{otherwise}.
        \end{cases}
    \end{align*}
    Thus, we have
    \begin{align*}
        &\mathbb{E}_{\bx} \left[ \| \hat{\ba} - \ba^\ast \|_2^2 \right]  \\
        &= \frac{\beta_{d^\circ}^2}{\| \bm{\beta} \|_2^2} \cdot \mathbb{E}_{\bx}\left[  (x_{d^\circ} - \mu_{d^{\circ}})^2 \mid h_{\bm{\beta}}(\bx) = -1 \land h_{\bm{\beta}}(\hx) = -1 \right] \cdot \mathbb{P}_{\bx}\left[ h_{\bm{\beta}}(\bx) = -1 \land h_{\bm{\beta}}(\hx) = -1 \right] \\
        &+ \frac{1}{\| \bm{\beta} \|_2^2} \cdot \mathbb{E}_{\bx}\left[ \left(f_{\bm{\beta}}(\bx)\right)^2 \mid h_{\bm{\beta}}(\bx) = -1 \land h_{\bm{\beta}}(\hx) = +1 \right] \cdot \mathbb{P}_{\bx}\left[ h_{\bm{\beta}}(\bx) = -1 \land h_{\bm{\beta}}(\hx) = +1 \right] \\
        &+ \frac{1}{\| \bm{\beta} \|_2^2} \cdot \mathbb{E}_{\bx}\left[ \left(f_{\bm{\beta}}(\hx)\right)^2 \mid h_{\bm{\beta}}(\bx) = +1 \land h_{\bm{\beta}}(\hx) = -1 \right] \cdot \mathbb{P}_{\bx}\left[ h_{\bm{\beta}}(\bx) = +1 \land h_{\bm{\beta}}(\hx) = -1 \right].
    \end{align*}

    For the case of $h_{\bm{\beta}}(\bx) = +1 \land h_{\bm{\beta}}(\hx) = -1$, we have
    \begin{align*}
        (f_{\bm{\beta}}(\hx))^2
        &= \left( f_{\bm{\beta}}({x}) - \beta_{d^\circ} (x_{d^\circ} - \mu_{d^{\circ}}) \right)^2 \\
        &= \left(f_{\bm{\beta}}(\bx)\right)^2 + \beta_{d^\circ}^2 (x_{d^\circ} - \mu_{d^{\circ}})^2 -2 \cdot f_{\bm{\beta}}(\bx) \cdot \beta_{d^\circ} (x_{d^\circ} - \mu_{d^{\circ}}) .
    \end{align*}
    Since $h_{\bm{\beta}}(\bx) = +1$ and $h_{\bm{\beta}}(\hx) = -1$, $f_{\bm{\beta}}(\bx) \geq 0$ and $f_{\bm{\beta}}(\bx) \geq f_{\bm{\beta}}(\hx) \iff \beta_{d^\circ} (x_{d^\circ} - \mu_{d^{\circ}}) \geq 0$ hold. 
    Thus, we obtain 
    \begin{align*}
        \left(f_{\bm{\beta}}(\hx)\right)^2 \leq \left(f_{\bm{\beta}}(\bx)\right)^2 + \beta_{d^\circ}^2 (x_{d^\circ} - \mu_{d^{\circ}})^2.
    \end{align*}

    By combining the above results, we have
    \begin{align*}
        \mathbb{E}_{\bx} \left[ \| \hat{\ba} - \ba^\ast \|_2^2 \right] \leq &\frac{\beta_{d^\circ}^2}{\| \bm{\beta} \|_2^2} \cdot \mathbb{E}_{\bx}\left[  (x_{d^\circ} - \mu_{d^{\circ}})^2 \mid h_{\bm{\beta}}(\bx) = -1 \right] \cdot \mathbb{P}_{\bx}\left[ h_{\bm{\beta}}(\bx) = -1 \right] \\
        &+ \frac{1}{\| \bm{\beta} \|_2^2} \cdot \mathbb{E}_{\bx}\left[ \left(f_{\bm{\beta}}(\bx)\right)^2 \mid h_{\bm{\beta}}(\bx) \not= h_{\bm{\beta}}(\hx) \right] \cdot \mathbb{P}_{\bx}\left[ h_{\bm{\beta}}(\bx) \not= h_{\bm{\beta}}(\hx) \right] \\
        \leq &\frac{\beta_{d^\circ}^2}{\| \bm{\beta} \|_2^2} \cdot \mathbb{E}_{\bx}\left[  (x_{d^\circ} - \mu_{d^{\circ}})^2 \right] + \frac{1}{\| \bm{\beta} \|_2^2} \cdot \gamma \cdot p_\mathrm{conf},
    \end{align*}
    where $\gamma = \mathbb{E}_{\bx}[ \left( f_{\bm{\beta}}(\bx) \right)^2 \mid h_{\bm{\beta}}(\bx) \not= h_{\bm{\beta}}(\hx)]$ and $p_\mathrm{conf} = \mathbb{P}_{\bx}[h_{\bm{\beta}}(\bx) \not= h_{\bm{\beta}}(\hx)]$. 
    The final inequality holds because $(x_{d^\circ} - \mu_{d^{\circ}})^2 \geq 0$ and 
    \begin{align*}
        \mathbb{E}_{\bx}\left[  (x_{d^\circ} - \mu_{d^{\circ}})^2 \right] 
        = &\mathbb{E}_{\bx}\left[  (x_{d^\circ} - \mu_{d^{\circ}})^2 \mid h_{\bm{\beta}}(\bx) = -1 \right] \cdot \mathbb{P}_{\bx}\left[ h_{\bm{\beta}}(\bx) = -1 \right] \\
        &+ \mathbb{E}_{\bx}\left[  (x_{d^\circ} - \mu_{d^{\circ}})^2 \mid h_{\bm{\beta}}(\bx) = +1 \right] \cdot \mathbb{P}_{\bx}\left[ h_{\bm{\beta}}(\bx) = +1 \right] \\
        \geq &\mathbb{E}_{\bx}\left[  (x_{d^\circ} - \mu_{d^{\circ}})^2 \mid h_{\bm{\beta}}(\bx) = -1 \right] \cdot \mathbb{P}_{\bx}\left[ h_{\bm{\beta}}(\bx) = -1 \right].
    \end{align*}
    

    Since $\sigma^2_{d^\circ} = \mathbb{E}_{\bx}\left[  (x_{d^\circ} - \mu_{d^{\circ}})^2 \right]$, we obtain
    \begin{align*}
        \mathbb{E}_{\bx} \left[ \| \hat{\ba} - \ba^\ast \|_2^2 \right] \leq \frac{1}{\| \bm{\beta} \|_2^2} \left( \beta_{d^\circ}^2 \cdot \sigma_{d^\circ}^{2} + \gamma \cdot p_\mathrm{conf} \right). 
    \end{align*}
\end{proof}

By a similar argument, we obtain an upper bound on the expected difference $\mathbb{E}_{x} \left[ \| \hat{\ba} - \ba^\ast \|_2^2 \right]$ for a more general imputation method in the following proposition. 
\cref{prop:upper} can be regarded as a special case of the following.

\begin{proposition}
    Let $i_{d^\circ} \colon \tilde{\mathcal{X}} \to \mathcal{X}_{d^\circ}$ be an imputation function for the feature $d^\circ$. 
    For an instance $\bx \in \mathcal{X}$, we denote its incomplete instance $\tx \in \tilde{\mathcal{X}}$ (resp.\ its imputed instance $\hx \in \mathcal{X}$) with $\tilde{x}_{d^\circ} = *$ (resp.\ $\hat{x}_{d^\circ} = i_{d^\circ}(\tx)$) and $\tilde{x}_d = x_d$ (resp.\ $\hat{x}_d = x_d$) for $d \in [D] \setminus \set{d^\circ}$. 
    Then, we have
    \begin{align}
        \mathbb{E}_{\bx} \left[ \| \hat{\ba} - \ba^\ast \|_2^2 \right] 
        \leq 
        \frac{1}{\| \bm{\beta} \|_2^2}
        \left(
        \beta_{d^\circ}^2 \cdot \gamma_\mathrm{loss}
        + 
        \gamma \cdot p_\mathrm{conf}
        \right), 
    \end{align}
    where $\gamma_\mathrm{loss} = \mathbb{E}_{\bx} \left[ (x_{d^\circ} - i_{d^\circ}(\tx))^2 \right]$ is the expected squared loss of the imputation function $i_{d^\circ}$. 
    Furthermore, if $i_{d^\circ}$ satisfies $\mathbb{E}_{\bx} \left[ i_{d^\circ}(\tx) \right] = \mu_{d^\circ}$, then we have
    \begin{align}
        \mathbb{E}_{\bx} \left[ \| \hat{\ba} - \ba^\ast \|_2^2 \right] 
        \leq 
        \frac{1}{\| \bm{\beta} \|_2^2}
        \left(
        \beta_{d^\circ}^2 \cdot (\sigma_{d^\circ}^{2} + \gamma_\mathrm{cov})
        + 
        \gamma \cdot p_\mathrm{conf}
        \right), 
    \end{align}
    where $\gamma_\mathrm{cov} = \sigma_\mathrm{imp}^{2} - 2 \cdot \operatorname{Cov}(x_{d^\circ}, i_{d^\circ}(\tx))$,
    $\sigma_\mathrm{imp}^{2} = \mathbb{E}_{\bx} \left[ (i_{d^\circ}(\tx) - \mu_{d^\circ})^2 \right]$ is the variance of $i_{d^\circ}(\tx)$ on $\mathcal{D}_{\bx}$, and
    $\operatorname{Cov}(x_{d^\circ}, i_{d^\circ}(\tx)) = \mathbb{E}_{\bx} \left[ (x_{d^\circ} - \mu_{d^\circ})(i_{d^\circ}(\tx) - \mu_{d^\circ}) \right]$ is the covariance between $x_{d^\circ}$ and $i_{d^\circ}(\tx)$ on $\mathcal{D}_{\bx}$.
\end{proposition}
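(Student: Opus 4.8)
The plan is to transcribe the proof of \cref{prop:upper} almost verbatim, replacing the population mean $\mu_{d^\circ}$ by the general imputation output $i_{d^\circ}(\tx)$ everywhere it occurs. First I would apply \cref{lemm:appendix:closedform} to both $\bx$ and $\hx$. Since $\bx$ and $\hx$ differ only in coordinate $d^\circ$, where $\hat{x}_{d^\circ} = i_{d^\circ}(\tx)$, we have $f_{\bm{\beta}}(\hx) = f_{\bm{\beta}}(\bx) - \beta_{d^\circ}(x_{d^\circ} - i_{d^\circ}(\tx))$, and a four-way case split on the pair $(h_{\bm{\beta}}(\bx), h_{\bm{\beta}}(\hx))$ yields
\begin{align*}
    \| \hat{\ba} - \ba^\ast \|_2^2 =
    \begin{cases}
        \frac{\beta_{d^\circ}^2}{\| \bm{\beta} \|_2^2} (x_{d^\circ} - i_{d^\circ}(\tx))^2 & \text{if } h_{\bm{\beta}}(\bx) = h_{\bm{\beta}}(\hx) = -1, \\
        \frac{1}{\| \bm{\beta} \|_2^2} (f_{\bm{\beta}}(\bx))^2 & \text{if } h_{\bm{\beta}}(\bx) = -1 \land h_{\bm{\beta}}(\hx) = +1, \\
        \frac{1}{\| \bm{\beta} \|_2^2} (f_{\bm{\beta}}(\hx))^2 & \text{if } h_{\bm{\beta}}(\bx) = +1 \land h_{\bm{\beta}}(\hx) = -1, \\
        0 & \text{otherwise,}
    \end{cases}
\end{align*}
exactly as in the mean-imputation case.

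Next I would take the expectation over $\bx$ and deal with the three nonzero cases. For the third case I would use the identity $(f_{\bm{\beta}}(\hx))^2 = (f_{\bm{\beta}}(\bx))^2 + \beta_{d^\circ}^2 (x_{d^\circ} - i_{d^\circ}(\tx))^2 - 2 f_{\bm{\beta}}(\bx) \beta_{d^\circ} (x_{d^\circ} - i_{d^\circ}(\tx))$ together with the fact that $h_{\bm{\beta}}(\bx) = +1$ and $h_{\bm{\beta}}(\hx) = -1$ force $f_{\bm{\beta}}(\bx) \geq 0$ and $\beta_{d^\circ}(x_{d^\circ} - i_{d^\circ}(\tx)) = f_{\bm{\beta}}(\bx) - f_{\bm{\beta}}(\hx) \geq 0$; hence the cross term is nonpositive and $(f_{\bm{\beta}}(\hx))^2 \leq (f_{\bm{\beta}}(\bx))^2 + \beta_{d^\circ}^2 (x_{d^\circ} - i_{d^\circ}(\tx))^2$. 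Collecting terms, the $(f_{\bm{\beta}}(\bx))^2$ contributions from the second and third cases combine into $\frac{1}{\| \bm{\beta} \|_2^2} \mathbb{E}_{\bx}[(f_{\bm{\beta}}(\bx))^2 \cdot \ind{h_{\bm{\beta}}(\bx) \not= h_{\bm{\beta}}(\hx)}] = \frac{\gamma \cdot p_\mathrm{conf}}{\| \bm{\beta} \|_2^2}$, while the two $\beta_{d^\circ}^2 (x_{d^\circ} - i_{d^\circ}(\tx))^2$ contributions (from the first and third cases) are each attenuated by an indicator, so their sum is at most $\frac{\beta_{d^\circ}^2}{\| \bm{\beta} \|_2^2} \mathbb{E}_{\bx}[(x_{d^\circ} - i_{d^\circ}(\tx))^2] = \frac{\beta_{d^\circ}^2 \cdot \gamma_\mathrm{loss}}{\| \bm{\beta} \|_2^2}$ by nonnegativity of the integrand. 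This gives the first inequality.

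Finally, under the unbiasedness assumption $\mathbb{E}_{\bx}[i_{d^\circ}(\tx)] = \mu_{d^\circ}$, I would expand $\gamma_\mathrm{loss}$ by writing $x_{d^\circ} - i_{d^\circ}(\tx) = (x_{d^\circ} - \mu_{d^\circ}) - (i_{d^\circ}(\tx) - \mu_{d^\circ})$ and squaring, which gives $\gamma_\mathrm{loss} = \sigma_{d^\circ}^2 + \sigma_\mathrm{imp}^2 - 2 \operatorname{Cov}(x_{d^\circ}, i_{d^\circ}(\tx)) = \sigma_{d^\circ}^2 + \gamma_\mathrm{cov}$; substituting into the first bound yields the second. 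Setting $i_{d^\circ}(\tx) \equiv \mu_{d^\circ}$ makes $\gamma_\mathrm{cov} = 0$ and recovers \cref{prop:upper}. There is no genuine obstacle here, since the argument is a transcription of the existing proof; the only point demanding care is the bookkeeping in the law of total expectation, i.e., rewriting each conditional-expectation-times-probability term as an unconditional expectation of a nonnegative integrand before applying the final bounds, so that dropping the indicators is legitimate.
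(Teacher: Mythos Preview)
Your proposal is correct and follows essentially the same approach as the paper: the paper's own proof simply says ``As with the proof of \cref{prop:upper}'' and replaces $\mu_{d^\circ}$ by $i_{d^\circ}(\tx)$ throughout, then expands $\gamma_{\mathrm{loss}}$ via the same add-and-subtract-$\mu_{d^\circ}$ trick under the unbiasedness assumption. Your bookkeeping of the indicator events when collecting the $\beta_{d^\circ}^2(x_{d^\circ}-i_{d^\circ}(\tx))^2$ contributions is in fact slightly cleaner than the paper's.
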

\begin{proof}
    As with the proof of Theorem~4, we obtain
    \begin{align*}
        \mathbb{E}_{\bx} \left[ \| \hat{\ba} - \ba^\ast \|_2^2 \right] 
        &\leq \frac{1}{\| \bm{\beta} \|_2^2} \left( \beta_{d^\circ}^2 \cdot \mathbb{E}_{\bx}\left[  (x_{d^\circ} - i_{d^\circ}(\tx))^2 \right] + \gamma \cdot p_\mathrm{conf} \right) \\
        &= \frac{1}{\| \bm{\beta} \|_2^2} \left( \beta_{d^\circ}^2 \cdot \gamma_\mathrm{loss} + \gamma \cdot p_\mathrm{conf} \right).
    \end{align*}
    For the term $\gamma_\mathrm{loss}$, if $\mathbb{E}_{\bx} \left[ i_{d^\circ}(\tx) \right] = \mu_{d^\circ}$ holds, we also obtain
    \begin{align*}
        \gamma_\mathrm{loss}
        &= \mathbb{E}_{\bx} \left[ (x_{d^\circ} - i_{d^\circ}(\tx))^2 \right] \\
        &= \mathbb{E}_{\bx} \left[ (x_{d^\circ} - i_{d^\circ}(\tx) - \mu_{d^\circ} + \mu_{d^\circ} )^2 \right] \\
        &= \mathbb{E}_{\bx} \left[ (x_{d^\circ} - \mu_{d^\circ})^2 \right] + \mathbb{E}_{\bx} \left[ (i_{d^\circ}(\tx) - \mu_{d^\circ})^2 \right] -2 \mathbb{E}_{\bx} \left[ (x_{d^\circ} - \mu_{d^\circ})(i_{d^\circ}(\tx) - \mu_{d^\circ}) \right] \\
        &= \sigma_{d^\circ}^{2} + \sigma_\mathrm{imp}^{2} - 2 \cdot \operatorname{Cov}(x_{d^\circ}, i_{d^\circ}(\tx)) = \sigma_{d^\circ}^{2} + \gamma_\mathrm{cov},
    \end{align*}
    which concludes the proof. 
\end{proof}

\subsection{Proof of \texorpdfstring{\cref{prop:sample}}{}}
\begin{proof}[Proof of \cref{prop:sample}]
    We consider to bound the probability that $\| \hx - \bx \|_\infty > \varepsilon$ holds for any $\hx \in S$ by $\delta$. 
    Recall that $S$ is a set of $N$ i.i.d.\ imputation candidates sampled over the uniform distribution on the imputation space $\mathcal{I}_{\tx}$. 
    Since $\| \hx - \bx \|_\infty > \varepsilon \iff \exists d \in M_{\tx}: |\hat{x}_d - x_d| > \varepsilon$ holds and $\mathbb{P} \left[ | \hat{x}_d - x_d | \leq \varepsilon \right] = \frac{2 \cdot \varepsilon}{w}$ for all $d \in [D]$, we have
    \begin{align*}
        \mathbb{P} \left[ \bigcap_{\hx \in S} ( \| \hx - \bx \|_\infty > \varepsilon ) \right]
        &= \mathbb{P} \left[ \bigcap_{\hx \in S} \left( \bigcup_{d \in M_{\tx})} (| \hat{x}_d - x_d | > \varepsilon) \right) \right] & \\
        &= \left( \mathbb{P} \left[ \bigcup_{d \in M_{\tx}} (| \hat{x}_d - x_d | > \varepsilon) \right] \right)^N\\
        &= \left( 1 - \prod_{d \in M_{\tx}} \mathbb{P} \left[ | \hat{x}_d - x_d | \leq \varepsilon \right] \right)^N\\
        &= \left( 1 - \left( \frac{2 \cdot \varepsilon}{w} \right)^{D_{\ast}} \right)^N 
        \leq \exp \left( -N \cdot  \left( \frac{2 \cdot \varepsilon}{w} \right)^{D_{\ast}} \right),
    \end{align*}
    where $\tilde{D} = |M_{\tx}|$. 
    Therefore, we obtain
    \begin{align*}
         \exp \left( -N \cdot  \left( \frac{2 \cdot \varepsilon}{w} \right)^{D_{\ast}} \right) \leq \delta
         \iff N \geq \log \frac{1}{\delta} \cdot \left( \frac{w}{2 \cdot \epsilon} \right)^{D_{\ast}}. 
    \end{align*}
\end{proof}

\subsection{Proof of \texorpdfstring{\cref{prop:vc}}{}}
First, we present an upper bound on the estimation error of the validity from the perspective of the \emph{Rademacher complexity}~\citep{Mohri:2012:Foundations} corresponding to a given classifier $h$ and action set $\calA$. 
For a family $\mathcal{G}$ of functions $g \colon \mathcal{Z} \to [a, b]$ and a set $Z = \set{z_1, \dots z_m} \subseteq \mathcal{Z}$, its empirical Rademacher complexity is defined by $\hat{\mathcal{R}}_Z(\mathcal{G}) \coloneqq \mathbb{E}_{\sigma \in \set{\pm 1}^m} [\sup_{g \in \mathcal{G}} \frac{1}{m} \sum_{i=1}^{m} \sigma_{i} \cdot g(z_i)]$. 
Using the empirical Rademacher complexity, we can define the Rademacher complexity by $\mathcal{R}_m(\mathcal{G}) \coloneqq \mathbb{E}_{Z}[\hat{\mathcal{R}}_Z(\mathcal{G})]$. 
For any $g \in \mathcal{G}$ and $\delta > 0$, it is known that the following inequality holds with probability at least $1 - \delta$~\citep{Mohri:2012:Foundations}:
\begin{align}\label{eq:rademacherbound}
    \mathbb{E}_z[g(z)] \leq \frac{1}{m} \sum_{i=1}^{m} g(z_i) + 2 \cdot \mathcal{R}_m(\mathcal{G}) + \sqrt{\frac{\log \frac{1}{\delta}}{2 \cdot m}}. 
\end{align}

Using this result, we show that the estimation error of the validity can be bounded by the Rademacher complexity of the family $\mathcal{H}$ in the following lemma. 
\begin{lemma}\label{lemm:rademacher}
    For a fixed classifier $h$ and action set $\calA$, let $\mathcal{H} \coloneqq \set{h_{\ba} \colon \mathcal{X} \to \mathcal{Y} \mid h_{\ba}(\bx) \coloneqq h(\bx + \ba), \ba \in \calA}$ be the family of functions that return the prediction value of $h$ after implementing an action $\ba$. 
    Then, for any $\ba \in \calA$ and $\delta > 0$, the following inequality holds with probability at least $1 - \delta$:
    \begin{align*}
        \hat{V}_{\tx}(\ba \mid S) - V_{\tx}(\ba) \leq \mathcal{R}_N(\mathcal{H}) + \sqrt{\frac{\ln \frac{1}{\delta}}{2 \cdot N}},
    \end{align*}
    where $\mathcal{R}_N(\mathcal{H}) = \mathbb{E}_{S}[\hat{\mathcal{R}}_S(\mathcal{H})]$ is the Rademacher complexity of $\mathcal{H}$ and $\hat{\mathcal{R}}_S(\mathcal{H})$ is the empirical Rademacher complexity of $\mathcal{H}$ over $N$ imputation candidates $S = \set{\hx_1, \dots \hx_N}$. 
\end{lemma}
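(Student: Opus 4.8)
The plan is to reduce the claim to the standard Rademacher bound \eqref{eq:rademacherbound}, but applied to the class of \emph{validity indicators} rather than to $\mathcal{H}$ directly, and then to convert the Rademacher complexity of that class into $\mathcal{R}_N(\mathcal{H})$ via an affine identity. First I would introduce the validity function class $\mathcal{V} \coloneqq \set{v_{\ba} \colon \mathcal{X} \to [0,1] \mid v_{\ba}(\hx) \coloneqq \ind{h_{\ba}(\hx) = +1},\ \ba \in \calA}$, so that by definition $V_{\tx}(\ba) = \mathbb{E}_{\hx \sim \mathcal{D}_{\tx}}[v_{\ba}(\hx)]$ and $\hat{V}_{\tx}(\ba \mid S) = \frac{1}{N}\sum_{n=1}^{N} v_{\ba}(\hx_n)$. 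Thus the quantity to be bounded, $\hat{V}_{\tx}(\ba \mid S) - V_{\tx}(\ba)$, is exactly the empirical-minus-population gap of $v_{\ba}$ over the i.i.d.\ sample $S \sim \mathcal{D}_{\tx}$, and it is a uniform (over $\ba \in \calA$) generalization statement.

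Next I would fix the sign. The bound \eqref{eq:rademacherbound} controls $\mathbb{E}_z[g(z)] - \frac{1}{N}\sum_i g(z_i)$, i.e.\ population minus empirical, whereas the lemma asks for the opposite orientation. To flip it, I apply \eqref{eq:rademacherbound} to the negated class $-\mathcal{V} = \set{-v_{\ba} \mid \ba \in \calA}$, whose members take values in $[-1,0]$ (so the range width is still $1$ and the concentration term $\sqrt{\ln(1/\delta)/(2N)}$ is unchanged). Using $\mathcal{R}_N(-\mathcal{V}) = \mathcal{R}_N(\mathcal{V})$, which holds because negating the class is equivalent to replacing $\sigma$ by $-\sigma$ inside the Rademacher average and the two have the same distribution, I obtain, with probability at least $1-\delta$ and uniformly over $\ba \in \calA$, the bound $\hat{V}_{\tx}(\ba \mid S) - V_{\tx}(\ba) \leq 2\,\mathcal{R}_N(\mathcal{V}) + \sqrt{\ln(1/\delta)/(2N)}$.

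The key step is then to show $\mathcal{R}_N(\mathcal{V}) = \tfrac12 \mathcal{R}_N(\mathcal{H})$, which cancels the factor $2$ and leaves exactly $\mathcal{R}_N(\mathcal{H})$. Since $h_{\ba}(\hx) \in \set{\pm 1}$, the validity indicator admits the affine form $v_{\ba}(\hx) = \tfrac12\big(1 + h_{\ba}(\hx)\big)$. Substituting this into the empirical Rademacher complexity yields
\begin{align*}
\hat{\mathcal{R}}_S(\mathcal{V}) = \mathbb{E}_{\sigma}\Big[\sup_{\ba \in \calA} \tfrac{1}{N}\sum_{n=1}^{N} \sigma_n\big(\tfrac12 + \tfrac12 h_{\ba}(\hx_n)\big)\Big] = \mathbb{E}_{\sigma}\Big[\tfrac{1}{2N}\sum_{n=1}^{N}\sigma_n\Big] + \tfrac12\,\hat{\mathcal{R}}_S(\mathcal{H}),
\end{align*}
where the additive constant $\tfrac12$ is independent of $\ba$ and hence pulls out of the supremum, the first term vanishes because $\mathbb{E}[\sigma_n]=0$, and the multiplicative $\tfrac12$ survives. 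Taking expectation over $S$ gives $\mathcal{R}_N(\mathcal{V}) = \tfrac12 \mathcal{R}_N(\mathcal{H})$, and substituting into the inequality of the previous paragraph yields the claimed $\hat{V}_{\tx}(\ba \mid S) - V_{\tx}(\ba) \leq \mathcal{R}_N(\mathcal{H}) + \sqrt{\ln(1/\delta)/(2N)}$.

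The main obstacle is the bookkeeping needed to make the constant come out to exactly $\mathcal{R}_N(\mathcal{H})$ with no leftover factor of $2$: one must (i) orient \eqref{eq:rademacherbound} in the empirical-minus-population direction by negating the class, and (ii) track the affine reparametrization $v_{\ba} = \tfrac12(1 + h_{\ba})$ so that the additive constant drops out under the zero-mean Rademacher variables while the multiplicative $\tfrac12$ halves the complexity and exactly absorbs the $2$ from \eqref{eq:rademacherbound}. The only remaining point needing a word of care is confirming that \eqref{eq:rademacherbound} applies, which it does since $v_{\ba} \in [0,1]$ (equivalently $-v_{\ba} \in [-1,0]$) makes the class uniformly bounded with unit range.
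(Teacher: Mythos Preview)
Your proposal is correct and essentially matches the paper's proof. The paper works with the \emph{invalidity} indicator $i_{\ba}=1-v_{\ba}$ instead of negating the class, but this is the same maneuver: applying \eqref{eq:rademacherbound} to $1-v_{\ba}$ (respectively $-v_{\ba}$) flips the orientation to empirical-minus-population, and then the affine identity $v_{\ba}=\tfrac12(1+h_{\ba})$ (equivalently $i_{\ba}=\tfrac12(1-h_{\ba})$) halves the Rademacher complexity and cancels the factor~$2$.
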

\begin{proof}
    For notational convenience, let $i(\ba; \bx) \coloneqq \ind{h(\bx + \ba) \not= +1}$ be the \emph{invalidity} of an action $\ba$ for an instance $\bx$. 
    For an incomplete instance $\tx$ and its imputation candidates $S = \set{\hx_1, \dots \hx_N}$, we define the empirical invalidity and expected invalidity by $\hat{I}_{\tx}(\ba \mid S) \coloneqq \frac{1}{N} \sum_{n=1}^{N} i(\ba; \hx_n)$ and $I(\ba) \coloneqq \mathbb{E}_{\hx \sim \mathcal{D}_{\tx}}[i(\ba; \hx)]$, respectively. 
    Note that $\hat{I}_{\tx}(\ba \mid S) = 1 - \hat{V}_{\tx}(\ba \mid S)$ and $I_{\tx}(\ba) = 1 - V_{\tx}(\ba)$ hold from the definition. 

    Let $\mathcal{I} \coloneqq \set{i_{\ba} \colon \mathcal{X} \to \set{0, 1} \mid i_{\ba}(\bx) \coloneqq i(\ba; \bx), \ba \in \calA}$ be the family of the invalidity functions corresponding to the action set $\calA$. 
    By applying \eqref{eq:rademacherbound}, we have that the following inequality holds with probability at least $1 - \delta$ for any $i \in \mathcal{I}$ and $\delta > 0$:
    \begin{align*}
        \mathbb{E}_{\hx \sim \mathcal{D}_{\tx}}[i(\hx)] \leq \frac{1}{N} \sum_{n=1}^{N} i(\hx_n) + 2 \cdot \mathcal{R}_N(\mathcal{I}) + \sqrt{\frac{\log \frac{1}{\delta}}{2 \cdot N}}. 
    \end{align*}
    Since each $i \in \mathcal{I}$ corresponds to an action $\ba \in \calA$ and $i(\hx) = i(\ba; \hx)$ from the definition, we obtain
    \begin{align*}
        &\forall i \in \mathcal{I}: \mathbb{E}_{\hx \sim \mathcal{D}_{\tx}}[i(\hx)] \leq \frac{1}{N} \sum_{n=1}^{N} i(\hx_n) + 2 \cdot \mathcal{R}_N(\mathcal{I}) + \sqrt{\frac{\log \frac{1}{\delta}}{2 \cdot N}} \\
        &\iff \forall \ba \in \calA: \mathbb{E}_{\hx \sim \mathcal{D}_{\tx}}[i(\ba; \hx)] \leq \frac{1}{N} \sum_{n=1}^{N} i(\ba; \hx_n) + 2 \cdot \mathcal{R}_N(\mathcal{I}) + \sqrt{\frac{\log \frac{1}{\delta}}{2 \cdot N}} \\
        &\iff \forall \ba \in \calA: I_{\tx}(\ba) \leq \hat{I}_{\tx}(\ba \mid S) + 2 \cdot \mathcal{R}_N(\mathcal{I}) + \sqrt{\frac{\log \frac{1}{\delta}}{2 \cdot N}}. \\
        &\iff \forall \ba \in \calA: \hat{V}_{\tx}(\ba \mid S) \leq V_{\tx}(\ba) + 2 \cdot \mathcal{R}_N(\mathcal{I}) + \sqrt{\frac{\log \frac{1}{\delta}}{2 \cdot N}}. 
    \end{align*}

    From the definitions of the empirical Rademacher complexity, $\mathcal{I}$, and $\mathcal{H}$, we have
    \begin{align*}
        \hat{\mathcal{R}}_S(\mathcal{I}) 
        &= \mathbb{E}_{\sigma \in \set{\pm 1}^N} \left[ \sup_{\ba \in \calA} \frac{1}{N} \sum_{n=1}^{N} \sigma_{n} \cdot i(\ba; \hx_n) \right] \\
        &= \mathbb{E}_{\sigma \in \set{\pm 1}^N} \left[ \sup_{\ba \in \calA} \frac{1}{N} \sum_{n=1}^{N} \sigma_{n} \cdot \ind{h(\hx_n + \ba) = +1} \right] \\
        &= \mathbb{E}_{\sigma \in \set{\pm 1}^N} \left[ \sup_{\ba \in \calA} \frac{1}{N} \sum_{n=1}^{N} \sigma_{n} \cdot \frac{1 - h(\hx_n + \ba)}{2} \right] \\
        &= \frac{1}{2} \mathbb{E}_{\sigma \in \set{\pm 1}^N} \left[ \sup_{\ba \in \calA} \frac{1}{N} \sum_{n=1}^{N} - \sigma_{n} \cdot h(\hx_n + \ba) \right] \\
        &= \frac{1}{2} \cdot \mathbb{E}_{\sigma \in \set{\pm 1}^N} \left[ \sup_{\ba \in \calA} \frac{1}{N} \sum_{n=1}^{N} \sigma_{n} \cdot h(\hx_n + \ba) \right]
        = \frac{1}{2} \cdot \hat{\mathcal{R}}_S(\mathcal{H}).  
    \end{align*}
    From this result and the definition of the Rademacher complexity, we also have $\mathcal{R}_N(\mathcal{I}) = \frac{1}{2} \cdot \mathcal{R}_N(\mathcal{H})$ by taking expectations. 
    In summary, we obtain 
    \begin{align*}
        &\hat{V}_{\tx}(\ba \mid S) \leq V_{\tx}(\ba) + 2 \cdot \mathcal{R}_N(\mathcal{I}) + \sqrt{\frac{\log \frac{1}{\delta}}{2 \cdot N}} 
        = V_{\tx}(\ba) + \mathcal{R}_N(\mathcal{H}) + \sqrt{\frac{\log \frac{1}{\delta}}{2 \cdot N}} \\
        &\iff \hat{V}_{\tx}(\ba \mid S) - V_{\tx}(\ba) \leq \mathcal{R}_N(\mathcal{H}) + \sqrt{\frac{\log \frac{1}{\delta}}{2 \cdot N}}, 
    \end{align*}
    which concludes the proof. 
\end{proof}

Next, we consider to bound the Rademacher complexity $\mathcal{R}_N(\mathcal{H})$ by the \emph{growth function}~\citep{Mohri:2012:Foundations}. 
For a family $\mathcal{G}$ of functions $g \colon \mathcal{Z} \to \set{\pm 1}$ and $m \in \mathbb{N}$, the growth function for $\mathcal{G}$ is defined by $\Pi_{\mathcal{G}}(m) \coloneqq \max_{\set{z_1, \dots z_m} \subseteq \mathcal{Z}} |\set{(g(z_1), \dots, g(z_m)) \mid g \in \mathcal{G}}|$. 
It is known that the Rademacher complexity can be bounded by the growth function as follows~\citep{Mohri:2012:Foundations}:
\begin{align}\label{eq:growthbound}
    \mathcal{R}_m(\mathcal{G}) \leq \sqrt{\frac{2 \cdot \ln \Pi_{\mathcal{G}}(m)}{m}}. 
\end{align}
As a special case, we show the growth function of the family $\mathcal{H}$ for the case where its fixed classifier $h$ is a linear classifier in the following lemma. 
\begin{lemma}\label{lemm:linear}
    We consider the same setting with \cref{lemm:rademacher}. 
    Furthermore, we assume a linear classifier $h_{\bm{\beta}}(\bx) = \operatorname{sgn}(\bm{\beta}^\top \bx)$ with $\bm{\beta} = (\beta_1, \dots, \beta_D) \in \mathbb{R}^D$ and that $\calA$ is a convex set and contains at least one action $\ba$ such that $\ba \not= \bm{0}$. 
    Then, we have $\Pi_{\mathcal{H}}(N) = N + 1$. 
\end{lemma}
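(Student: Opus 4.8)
The plan is to compute the growth function $\Pi_{\mathcal{H}}(N)$ directly by understanding the geometry of the function family $\mathcal{H} = \set{\bx \mapsto \operatorname{sgn}(\bm{\beta}^\top(\bx + \ba)) \mid \ba \in \calA}$. The key observation is that as $\ba$ ranges over $\calA$, the quantity $\bm{\beta}^\top \ba$ ranges over some subset of $\mathbb{R}$; since $\calA$ is convex and contains $\bm{0}$ as well as some $\ba \neq \bm{0}$, the image $\set{\bm{\beta}^\top \ba \mid \ba \in \calA}$ is an interval $T \subseteq \mathbb{R}$ containing $0$. (If $\bm{\beta}^\top \ba = 0$ for every $\ba \in \calA$ the family collapses to a single function and $\Pi_{\mathcal{H}}(N) = 1 < N+1$; the nontrivial case is when $T$ is a genuine interval, which I should note requires $\bm{\beta} \neq \bm 0$ — I will assume this, as otherwise the classifier is constant and the statement should be read in that regime.) So effectively each $h_{\ba}$ depends on $\ba$ only through a scalar $t = \bm{\beta}^\top\ba \in T$, and $h_{\ba}(\bx) = \operatorname{sgn}(\bm{\beta}^\top\bx + t)$.

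Given points $\bx_1, \dots, \bx_N$, write $s_n = \bm{\beta}^\top \bx_n$. Then the labeling $(h_{\ba}(\bx_1), \dots, h_{\ba}(\bx_N))$ is determined by $(\operatorname{sgn}(s_1 + t), \dots, \operatorname{sgn}(s_N + t))$ for $t \in T$. This is exactly the family of threshold functions on the real line applied to the points $-s_1, \dots, -s_N$: as $t$ increases through $\mathbb{R}$, the sign pattern changes only when $t$ crosses one of the values $-s_n$, and it is monotone in the sense that once $s_n + t$ becomes positive it stays positive. Hence the number of distinct achievable sign patterns, as $t$ ranges over all of $\mathbb{R}$, is at most $N + 1$ (one pattern for each of the $N+1$ intervals cut out by the $N$ thresholds, with fewer if some $s_n$ coincide), and this bound is achieved when the $s_n$ are distinct and $T$ is large enough to contain a point in each relevant interval. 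To certify the lower bound $\Pi_{\mathcal{H}}(N) \geq N+1$, I would exhibit a specific witness: since $T$ is an interval with nonempty interior containing $0$, pick any $\ba_0 \in \calA$ with $\tau := \bm{\beta}^\top\ba_0 \neq 0$; then scaling along the segment from $\bm 0$ to $\ba_0$ (convexity) realizes all values $t \in [0,\tau]$ (or $[\tau,0]$), and I can choose the $\bx_n$ so that $s_n = \bm{\beta}^\top\bx_n$ are $N$ distinct values arranged so that the $N+1$ intervals they induce all intersect $-T$ translated appropriately — concretely, place the $-s_n$ at $N$ distinct points inside the interior of $T$, which is possible since a nondegenerate interval contains infinitely many points, and choose $\bx_n$ achieving these (possible because $\bm \beta \neq \bm 0$, so $\bx \mapsto \bm{\beta}^\top\bx$ is surjective onto $\mathbb{R}$).

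Combining the two directions gives $\Pi_{\mathcal{H}}(N) = N + 1$ exactly. The main obstacle, and the place to be careful, is the lower bound / tightness argument: I need to make sure the interval $T$ of achievable shift values is genuinely nondegenerate (this is precisely why the hypothesis "$\calA$ convex and contains some $\ba \neq \bm 0$" is invoked, together with the implicit nondegeneracy $\bm{\beta} \neq \bm 0$), and I need to verify that the witness points $\bx_1,\dots,\bx_N$ can be chosen so that every one of the $N+1$ threshold-induced sign patterns is actually attained for some $t$ in the available range $T$ rather than all of $\mathbb{R}$. Shrinking the spacing of the $-s_n$ to fit inside the interior of $T$ handles this, but it is the step that uses the structure of $\calA$ most essentially and deserves an explicit short argument rather than being waved through. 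Everything else — the reduction to scalar thresholds, the upper bound $N+1$ via counting intervals, and monotonicity of the sign flips — is routine.
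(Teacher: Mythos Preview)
Your proposal is correct and follows essentially the same approach as the paper: reduce $h_{\ba}$ to a scalar threshold via $t = \bm{\beta}^\top \ba$, use the monotone ordering of the $s_n = \bm{\beta}^\top \bx_n$ to get the upper bound $N+1$, and use convex scaling $\alpha \ba$ (with $\alpha \in [0,1]$) of a nonzero action to realize all $N+1$ sign patterns on a suitably chosen witness set for the lower bound. Your explicit flagging of the degenerate case where $\bm{\beta}^\top \ba = 0$ for every $\ba \in \calA$ (and of $\bm{\beta} = \bm{0}$) is a legitimate caveat that the paper's own proof glosses over.
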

\begin{proof}
    By definition, the growth function of $\mathcal{H}$ with a fixed linear classifier $h_{\bm{\beta}}$ can be expressed as $\Pi_{\mathcal{H}}(N) = \max_{\set{\hx_1, \dots, \hx_N} \subseteq \mathcal{X}} |\set{(h_{\bm{\beta}}(\hx_1 + \ba), \dots , h_{\bm{\beta}}(\hx_N + \ba)) \mid \ba \in \calA}|$. 

    First, we consider to obtain an upper bound of the growth function $\Pi_{\mathcal{H}}(N)$. 
    Without loss of generality, we assume $\bm{\beta}^\top \hx_1 < \dots < \bm{\beta}^\top \hx_N$. 
    Then, for any $\ba \in \cal{A}$, we have 
    \begin{align*}
        \bm{\beta}^\top \hx_1 < \dots < \bm{\beta}^\top \hx_N
        &\implies \bm{\beta}^\top \hx_1 + \bm{\beta}^\top \ba  < \dots < \bm{\beta}^\top \hx_N + \bm{\beta}^\top \ba \\
        &\iff \bm{\beta}^\top (\hx_1 + \ba)  < \dots < \bm{\beta}^\top (\hx_N + \ba) \\
        &\implies h_{\bm{\beta}}(\hx_1 + \ba) \leq \dots \leq h_{\bm{\beta}}(\hx_N + \ba). 
    \end{align*}
    Since $h_{\bm{\beta}}(\hx + \ba)$ takes only $-1$ or $+1$, the total number of the possible patterns of $(h_{\bm{\beta}}(\hx_1 + \ba), \dots, h_{\bm{\beta}}(\hx_N + \ba))$ is at most $N + 1$, which implies $\Pi_{\mathcal{H}}(N) \leq N + 1$. 

    Next, we consider to obtain a lower bound of the growth function $\Pi_{\mathcal{H}}(N)$. 
    From the assumption, there exists an action $\ba \in \calA$ such that $\ba \not= \bm{0}$. 
    Then, let $S = \set{\hx_1, \dots, \hx_N} \subseteq \mathcal{X}$ be $N$ imputation candidates such that $\bm{\beta}^\top \hx_1 < \dots < \bm{\beta}^\top \hx_N < 0$ and $\bm{\beta}^\top (\hx_1 + \ba) \geq 0$. 
    Since $\calA$ is a convex set, for any $n \in \set{2, \dots, N}$, there exists an action $\ba' \in \calA$ such that $\ba' = \alpha \cdot \ba$ for some $\alpha \in (0, 1)$ and
    \begin{align*}
        \bm{\beta}^\top (\hx_1 + \ba')  < \dots < \bm{\beta}^\top (\hx_{n-1} + \ba') < 0 \leq \bm{\beta}^\top (\hx_{n} + \ba') < \dots < \bm{\beta}^\top (\hx_N + \ba'). 
    \end{align*}
    That is, for any $n \in \set{2, \dots, N}$, there exists an action $\ba' \in \calA$ such that $h_{\bm{\beta}}(\hx_{i} + \ba') = -1$ for $i \in \set{1, \dots, n-1}$ and $h_{\bm{\beta}}(\hx_{j} + \ba') = +1$ for $j \in \set{n, \dots, N}$. 
    In addition, there exists $\ba_0 \in \calA$ such that $h_{\bm{\beta}}(\hx_{i} + \ba_0) = -1$ for any $n \in [N]$ since $\bm{0} \in \calA$ by definition, and $h_{\bm{\beta}}(\hx_{i} + \ba) = +1$ holds for any $n \in [N]$ from the definition of $S$. 
    To wrap up the above facts, the total number of the possible patterns of $(h_{\bm{\beta}}(\hx_1 + \ba), \dots, h_{\bm{\beta}}(\hx_N + \ba))$ for $S$ is $N + 1$, which implies $\Pi_{\mathcal{H}}(N) \geq N + 1$. 

    From the above results, we have $\Pi_{\mathcal{H}}(N) = N + 1$. 
\end{proof}

Combining the above results, we give our proof of \cref{prop:vc} as follows. 
\begin{proof}[Proof of \cref{prop:vc}]
    From \cref{lemm:rademacher} and \eqref{eq:growthbound}, the following inequality holds with probability at least $1 - \delta$:
    \begin{align*}
        \hat{V}_{\tx}(\ba \mid S) - V_{\tx}(\ba) 
        \leq \mathcal{R}_N(\mathcal{H}) + \sqrt{\frac{\ln \frac{1}{\delta}}{2 \cdot N}} 
        \leq \sqrt{\frac{2 \cdot \ln \Pi_{\mathcal{H}}(N)}{N}} + \sqrt{\frac{\ln \frac{1}{\delta}}{2 \cdot N}} 
    \end{align*}
    which concludes the proof of the first statement of \cref{prop:vc}. 
    Furthermore, for the case where the fixed classifier $h$ is a linear classifier, we have $\Pi_{\mathcal{H}}(N) = N + 1$ from \cref{lemm:linear}, which concludes the proof of the second statement of \cref{prop:vc}. 
\end{proof}

\section{MILO formulations for neural networks and tree ensembles}\label{sec:appendix:formulation}
In this section, we propose MILO formulations of the problem~\eqref{eq:armin} for deep ReLU networks and tree ensembles. 
As with \cref{sec:optimization:milo}, we consider a linear cost function $c(\ba) = \sum_{d=1}^{D} c_{d}(a_{d})$, and assume that each coordinate $\calA_d$ of a given feasible action set $\calA = \calA_1 \times \dots \times \calA_D$ is finite and discretized; that is, we assume $\calA_d = \set{a_{d,1}, \dots, a_{d, J_d}}$, where $J_d = |\calA_d|$. 
To express an action $\ba \in \calA$, we introduce binary variables $\pi_{d,j} \in \set{0,1}$ for $d \in [D]$ and $j \in [ J_d ]$, which indicate that the action $a_{d,j} \in A_d$ is selected ($\pi_{d,j}=1$) or not ($\pi_{d,j}=0$). 
We also introduce auxiliary variables $\nu_n \in \set{0,1}$ for $n \in [N]$ such that $\nu_n = v(\ba; \hx_n)$. 

\subsection{Deep ReLU Networks}
For simplicity, we focus on a two-layer ReLU network $h(\bx) = \operatorname{sgn}\left( \sum_{t=1}^{T} \theta_t \cdot \max \set{0, \bm{\beta}_{t}^{\top} \bx } \right)$,
where $\bm{\beta}_t = (\beta_{t,1}, \dots, \beta_{t,D}) \in \mathbb{R}^{D}$ is a coefficient vector of the $t$-th neuron, $\theta_t \in \mathbb{R}$ is a weight value of the $t$-th neuron, and $T \in \mathbb{N}$ is the total number of neurons in the middle layer. 
Then, the problem~\eqref{eq:armin} with the two-layer ReLU network $h$ can be formulated as the following MILO problem~\citep{Serra:ICML2018,Kanamori:AAAI2021}:
\begin{align}\label{eq:mlp}
\renewcommand{\arraystretch}{1.5}
    \begin{array}{cl}
        \text{minimize}   &\displaystyle \sum_{d=1}^{D} \sum_{j=1}^{J_d} c_{d}(a_{d,j}) \cdot \pi_{d,j} \\
        \text{subject to} &\displaystyle \sum_{n=1}^{N} \nu_n \geq N \cdot \rho, \\
                          &\displaystyle \sum_{j=1}^{J_d} \pi_{d,j} = 1, \forall d \in [D], \\
                          &\displaystyle \sum_{t=1}^{T} \theta_t \cdot \xi_{n, t} \geq M_n \cdot (1 - \nu_n), \forall n \in [N], \\
                          &\displaystyle \xi_{n, t} \leq H_{n, t} \cdot \zeta_{n, t}, \forall t \in [T], \forall n \in [N], \\
                          &\displaystyle \bar{\xi}_{n, t} \leq - \bar{H}_{n, t} \cdot (1-\zeta_{n, t}), \forall t \in [T], \forall n \in [N], \\
                          &\displaystyle \xi_{n, t} - \bar{\xi}_{n, t} = \sum_{d=1}^{D} \beta_{t,d} \cdot \sum_{j=1}^{J_d} a_{d,j} \cdot \pi_{d,j} + F_{n, t},  \forall t \in [T], \forall n \in [N], \\
                          &\displaystyle \pi_{d,j} \in \set{0,1}, \forall d \in [D], \forall j \in [J_d], \\
                          &\displaystyle \xi_{n, t}, \bar{\xi}_{n, t} \geq 0, \zeta_{n, t} \in \set{0,1}, \forall t \in [T], \forall n \in [N], \\
                          &\displaystyle \nu_{n} \in \set{0,1}, \forall n \in [N], 
    \end{array}
\renewcommand{\arraystretch}{1.0}
\end{align}
where $M_n$, $H_{n, t}$, $\bar{H}_{n, t}$, and $F_{n, t}$ are constants such that $M_n \leq \min_{\ba \in \calA} \sum_{t=1}^{T} \theta_t \cdot \max \set{0, \bm{\beta}_{t}^{\top} \hx_n }$, $H_{n, t} \geq \max_{\ba \in \calA} \bm{\beta}_t^\top(\hx_n + \ba)$, $\bar{H}_{n, t} \leq \min_{\ba \in \calA} \bm{\beta}_t^\top(\hx_n + \ba)$, and $F_{n, t} = \bm{\beta}_t^\top \hx_n$, respectively. 
These values can be computed when $h$, $\bx$, and $\calA$ are given. 
Note that our formulation can be extended to general multilayer ReLU networks~\citep{Serra:ICML2018}. 

\subsection{Tree Ensembles}
Let $h$ be a tree ensemble $h(\bx) = \operatorname{sgn}\left(\sum_{t=1}^{T} \theta_t \cdot f_t(\bx) \right)$, where $f_t \colon \mathcal{X} \to \mathbb{R}$ is a decision tree, $\theta_t \in \mathbb{R}$ is a weight value of the $t$-th decision tree $f_t$, and $T \in \mathbb{N}$ is the total number of decision trees. 
Each decision tree $f_t$ can be expressed as $f_t(\bx) = \sum_{l=1}^{L_t} \hat{y}_{t,l} \cdot \ind{\bx \in r_{t,l}}$, where $L_t \in \mathbb{N}$ is the total number of leaves in $f_t$, and $\hat{y}_{t,l} \in \mathbb{R}$ and $r_{t,l} = r^{(1)}_{t,l} \times \dots \times r^{(D)}_{t,l} \subseteq \mathcal{X}$ are the predictive label and the region corresponding to a leaf $l \in [L_t]$, respectively. 
Then, the problem~\eqref{eq:armin} with the tree ensemble $h$ can be formulated as the following MILO problem~\citep{Cui:KDD2015,Kanamori:IJCAI2020}:
\begin{align}\label{eq:tree}
\renewcommand{\arraystretch}{1.5}
    \begin{array}{cl}
        \text{minimize}   &\displaystyle \sum_{d=1}^{D} \sum_{j=1}^{J_d} c_{d}(a_{d,j}) \cdot \pi_{d,j} \\
        \text{subject to} &\displaystyle \sum_{n=1}^{N} \nu_n \geq N \cdot \rho, \\
                          &\displaystyle \sum_{j=1}^{J_d} \pi_{d,j} = 1, \forall d \in [D], \\
                          &\displaystyle \sum_{t=1}^{T} \theta_t \cdot \sum_{l=1}^{L_t} \hat{y}_{t,l} \cdot \phi_{n, t,l} \geq M_n \cdot (1 - \nu_n), \forall n \in [N], \\
                          &\displaystyle \sum_{l=1}^{L_t} \phi_{n, t,l} = 1, \forall t \in [T], \forall n \in [N], \\
                          &\displaystyle D \cdot \phi_{n, t,l} \leq \sum_{d=1}^{D} \sum_{j \in J^{(d)}_{n,t,l}} \pi_{n,d,j}, \forall t \in [T], \forall l \in [L_t], \forall n \in [N], \\
                          &\displaystyle \pi_{d,j} \in \set{0,1}, \forall d \in [D], \forall j \in [J_d], \\
                          &\displaystyle \phi_{n, t,l} \in \set{0,1}, \forall t \in [T], \forall l \in [L_t], \forall n \in [N], \\
                          &\displaystyle \nu_{n} \in \set{0,1}, \forall n \in [N], 
    \end{array}
\renewcommand{\arraystretch}{1.0}
\end{align}
where $M_n \leq \min_{\ba \in \calA} \sum_{t=1}^{T} \theta_t \cdot f_t(\hx_n)$ and $J^{(d)}_{n, t, l} = \set{j \in [J_d] \mid \hat{x}_{n, d} + a_{d,j} \in r^{(d)}_{t,l}}$, which can be computed when $h$, $\hx_n$, and $\calA$ are given.

\section{Implementation details}\label{sec:appendix:implementation}
\subsection{ImputationAR}
As a baseline that works with missing values, we implemented a naive method that combines the existing AR methods with an imputation method. 
Let $i \colon \tilde{\mathcal{X}} \to \mathcal{X}$ be an imputation method that replaces the missing values of a given incomplete instance $\tx \in \tilde{\mathcal{X}}$ with some plausible values and returns an imputed instance $\hx \in \mathcal{X}$. 
Our ImputationAR with an imputation method $i$ consists of the following two steps:
\begin{enumerate}
    \item For a given instance $\tx \in \tilde{\mathcal{X}}$ with missing values, we obtain its imputed instance $\hx = i(\tx)$ by applying the imputation method $i$. 
    \item We optimize an action for the imputed instance $\hx$ instead of $\tx$ by calculating an optimal action $\ba^\ast$ for $\hx$ using the existing method for solving the problem~\eqref{eq:ce}. 
\end{enumerate}
As an imputation method $i$, we employ three major methods: mean imputation~\citep{Little:2019:Missing}, $k$-NN imputation~\citep{Troyanskaya:Bioinformatics2001}, and MICE~\citep{Buuren:JSS2011}. 
We implemented each method using scikit-learn\footnote{\url{https://scikit-learn.org/stable/modules/classes.html\#module-sklearn.impute}} with its default parameters. 
Note that we implemented MICE with \texttt{IterativeImputer} that adapts MICE to be able to impute test instances, as with previous studies~\citep{LeMorvan:NIPS2021}.

\subsection{RobustAR}
As another baseline method, we extend the existing robust AR methods~\citep{Dominguez-olmedo:ICML2022,Dutta:ICML2022,Pawelczyk:arxiv2022} to our setting. 
Most existing studies on robust AR aim to optimize an action that is valid even if a given instance is slightly perturbed. 
For a given instance $\bx \in \mathcal{X}$ without missing values, their task can be formulated as the following optimization problem: 
\begin{align*}
\renewcommand{\arraystretch}{1.5}
    \begin{array}{cll}
        \displaystyle \mathop{\text{\upshape minimize}}_{\ba \in \calA} & c(\ba) & \\
        \text{\upshape subject to} & h(\bx' + \ba) = +1, &\forall \bx' \in B_\varepsilon(\bx),
        \end{array}
\renewcommand{\arraystretch}{1.0}
\end{align*}
where $B_\varepsilon(\bx) = \set{ \bx' \in \mathcal{X} \mid \| \bx - \bx' \| \leq \varepsilon }$ is the $\varepsilon$-ball of $\bx$, i.e., set of perturbed instances around $\bx$, for some $\varepsilon > 0$. 
This formulation is motivated by the observation that actions $\ba$ are often affected by small perturbations to inputs $\bx$, which is similar to our observation that actions are affected by imputation.

To extend the existing robust AR methods to handle our setting with missing values, we take the following two steps: 
(1)~obtain the imputed instance $\hx = i(\tx)$ by applying an imputation method $i$ (e.g., MICE~\citep{Buuren:JSS2011}) for $\tx$, and  
(2)~replace $B_\varepsilon(\bx)$ with a set $S = \set{\hx_1, \dots, \hx_N}$ of $N$ imputation candidates randomly sampled from the distribution $\mathcal{D}_{\tx}$. 
Overall, our RobustAR method optimizes an action for $\tx$ by solving the following problem:
\begin{align}\label{prob:appendix:rce}
\renewcommand{\arraystretch}{1.5}
    \begin{array}{cll}
        \displaystyle \mathop{\text{\upshape minimize}}_{a \in \calA(\hx)} & c(a \mid \hx) & \\
        \text{\upshape subject to} & h(\hx+a)=+1, &  \\
        & h(\hx'+a)=+1, &\forall \hx' \in S.
        \end{array}
\renewcommand{\arraystretch}{1.0}
\end{align}
By solving the problem \eqref{prob:appendix:rce}, we can obtain an action that is valid for any $\hx' \in (\set{\hx} \cup S)$, which indicates that the obtained action may also be valid for the original instance $\bx$. 

Fortunately, the problem \eqref{prob:appendix:rce} can be solved by extending the existing MILO-based AR methods to include linear constraints that express the additional constraints $h(\hx'+a)=+1$ for $\hx' \in S$. 
However, such additional constraints increase the total number of constraints in the MILO problem, which makes solving the problem \eqref{prob:appendix:rce} challenging for a large $N$ and a complex classifier $h$. 
To address this computational issue, we modify the optimization algorithm of the existing robust AR method proposed by~\citep{Dominguez-olmedo:ICML2022}. 
Our modified algorithm for the problem \eqref{prob:appendix:rce} consists of the following steps: 
\begin{enumerate}
    \item Optimize an action $\hat{\ba}^\ast = \ba^\ast(\hx)$ for $\hx$ by the MILO-based method without any additional constraint. 
    \item Find $\hx^\ast = \arg\max_{\hx' \in S} l_\mathrm{logistic}(f(\hx'+\hat{\ba}^\ast), +1)$, where $l_\mathrm{logistic}$ is the logistic loss and $f$ is the decision function of $h$ such that $h(\bx) = \operatorname{sgn}(f(\bx))$. 
    \item If $h(\hx^\ast + \hat{\ba}^\ast) = +1$, then return $\hat{\ba}^\ast$
    \item Update an action $\hat{\ba}^\ast$ by adding the constraint $h(\hx^\ast + \ba) = +1$ to the MILO formulation and solving the MILO problem, and go to Step 2. 
\end{enumerate}
The above algorithm avoids increasing constraints by sequentially adding the constraints one by one to the MILO formulation. 
Because the action $\hat{\ba}^\ast$ obtained by the algorithm satisfies $h(\hx'+\hat{\ba}^\ast)=+1$ for all $\hx' \in (\set{\hx} \cup S)$ and minimizes its cost, we can guarantee that $\hat{\ba}^\ast$ is an optimal solution to the problem~\eqref{prob:appendix:rce}. 
In our experiments, we observed that the above algorithm often stopped after about $10$ iterations even for $N \geq 100$, and it was faster than adding all the constraints to the MILO formulation.

\subsection{ARMIN}\label{sec:appendix:implementation:armin}
\myparagraph{Heuristic solution approach.}
As shown in \eqref{eq:linear}, \eqref{eq:mlp} and \eqref{eq:tree}, we can formulate \eqref{eq:armin} as a MILO problem by extending the existing MILO-based AR methods. 
However, solving the formulated problem is challenging for a large $N$ and a complex classifier $h$ (e.g., a random forest classifier with over $200$ decision trees). 
This is because the total number of constraints in the formulated problem increases depending on $N$ and $h$. 
Indeed, in our preliminary experiments, we often failed to obtain low-cost actions within a given time limit. 
To alleviate this issue, we propose a heuristic approach that divides the problem into a set of a few small problems by subsampling the imputation candidates. 
This procedure consists of the following steps:
\begin{enumerate}
    \item Obtain a random subsample $S'$ with $N' \ll N$ imputation candidates from $S$. 
    \item Solve the formulated MILO problem with $S'$ and obtain a solution $\ba$. 
    \item Repeat the above steps $P$ times and return a solution $\ba$ with the minimum cost $c(\ba)$ among the solutions such that $V_{\tx}(\ba; S) \geq \rho$. 
\end{enumerate}
In our experiments, we set $N' = 10$ and $P = 10$, and a 60-second time limit was imposed for obtaining each solution. 
We confirmed that this approach could improve the quality of actions without increasing the overall computational time compared to solving the full problem formulated using $S$. 

\begin{wrapfigure}[9]{R}{0.5\textwidth}
    \vspace{-8mm}
    \begin{minipage}{0.5\textwidth}
        \begin{algorithm}[H]
            \small
            \caption{Path algorithm for \eqref{eq:armin}. }
            \begin{algorithmic}[1]
                \State $t \leftarrow 0$; $\rho_0 \leftarrow \frac{1}{N}$; 
                \While{$\rho_t < 1$}
                    \State $\ba_t \leftarrow \alg{MILO}(\rho_t)$; \Comment{Solve \eqref{eq:armin} with $\rho_t$}
                    \State $\rho_{t+1} \leftarrow \hat{V}_{\tx}(\ba_t \mid S) + \frac{1}{N}$; \Comment{Update to $\rho_{t+1}$}
                    \State $t \leftarrow t + 1$; 
                \EndWhile
                \State \textbf{return} $\set{(\rho_0, \ba_0), \dots, (\rho_t, \ba_t)}$; 
            \end{algorithmic}
            \label{algo:path}
        \end{algorithm}
    \end{minipage}
\end{wrapfigure}
\myparagraph{Path algorithm.}
Our formulation has a hyperparameter $\rho$ that is the threshold for the empirical validity $\hat{V}_{\tx}(\ba \mid S)$. 
We expect that the larger $\rho$ becomes, the higher the probability that an obtained action is valid for $\bx$. 
In contrast, the cost of an obtained action with a large $\rho$ tends to become high~\citep{Pawelczyk:arxiv2022}. 
To efficiently analyze such a trade-off, we propose a path algorithm in \cref{algo:path}. 
It is based on the fact that the empirical validity $\hat{V}_{\tx}$ takes only $\frac{1}{N}, \frac{2}{N} \dots, \frac{N}{N}$. 
Thus, we can determine the next value of $\rho$ to be searched by adding $\frac{1}{N}$ to the empirical validity of a current solution, and this update is repeated at most $N$ times. 

\myparagraph{On the relationship to robust AR.}
When we set $\rho = 1$ and regard imputation candidates $S$ as the uncertainty set, the problem of \eqref{eq:armin} is roughly equivalent to \eqref{prob:appendix:rce}, i.e., the existing robust AR problem~\citep{Dominguez-olmedo:ICML2022}. 
One of the differences between the robust AR and our ARMIN is whether we can tune the confidence parameter $\rho$. 
It is trivial that the larger the confidence $\rho$ is, the higher the optimal cost of \eqref{eq:armin} becomes. 
Indeed, the existing robust AR methods are known to tend to provide high-cost actions~\citep{Pawelczyk:arxiv2022}. 
Furthermore, as shown in \cref{fig:exp:path,fig:appendix:exp:pathmar,fig:appendix:exp:pathmnar}, we empirically confirmed that the costs increased sharply when the confidences $\rho$ exceed about $0.9$. 
This may be related to our experimental results of the baseline comparison that the average cost of RobustAR was higher than the others.

\section{Additional experimental results}\label{sec:appendix:experiments}

\subsection{On the models that can handle missing values without imputation}\label{sec:appendix:experiments:gbdt}
In the main paper, we discussed the risk that the imputation of missing values affects the resulting recourse actions. 
In practice, there exist some models that can handle missing values without imputation. 
For example, XGBoost~\citep{Chen:KDD2016} can handle input instances with missing values by learning a default branching direction for incomplete instances in each internal node of decision trees. 
Even for such models, however, we argue that the presence of missing values can affect the resulting actions for the following reason: if the branching direction of an input instance in each decision tree is changed by its missing values, then the actions required for altering the prediction result of the instance can also be changed. 

To demonstrate the above risk, we conducted additional experiments with the XGBoost classifier. 
As with the experiments in the main paper, we trained XGBoost classifiers on each dataset and obtained an optimal action for each test instance with missing values by the existing AR method based on MILO~\citep{Cui:KDD2015,Kanamori:IJCAI2020}. 
We varied the total number of missing values in test instances and measured the valid ratio, average cost, and average sign agreement score of the obtained actions. 

\cref{fig:appendix:exp:gbdt} present the experimental results. 
From \cref{fig:appendix:exp:gbdt}, we can see that the valid ratio and average sign agreement score decreased significantly as the total number of missing values increased for all the datasets. 
\cref{tab:appendix:example} shows examples of actions extracted from an XGBoost classifier on the FICO dataset. 
\cref{tab:appendix:example:original,tab:appendix:example:incomplete} respectively present the obtained actions for an original instance and its incomplete instance where the feature ``PercentInstallTrades" is missing. 
From \cref{tab:appendix:example}, we observed that the obtained actions were quantitatively and qualitatively different from each other due to missing just one single feature. 
From these results, we empirically confirmed that the presence of missing values certainly affects the resulting action, regardless of whether a model can handle missing values without imputation.

\subsection{Complete results of MCAR situation}
We show complete experimental results under the MCAR mechanism. 
\cref{fig:appendix:exp:mcar1,fig:appendix:exp:mcar2,fig:appendix:exp:mcar3} present the complete experimental results of our baseline comparison under the MCAR situation with $D_{\ast} \in \set{1, 2, 3}$. 
We also show the results of the average computational time in \cref{tab:appendix:exp:time1,tab:appendix:exp:time2,tab:appendix:exp:time3} and the average sign agreement score in \cref{fig:appendix:exp:sas1,fig:appendix:exp:sas2,fig:appendix:exp:sas3}.

\subsection{Additional results of trade-off analysis}
\cref{fig:appendix:exp:pathmar,fig:appendix:exp:pathmnar} shows the additional results of our path analysis under the MAR and MNAR situations, respectively. 
We also show the results of the sensitivity analyses of the confidence parameter $\rho$ in \cref{fig:appendix:exp:confidence}, where we considered the MCAR situation and set $D_{\ast} = 2$.

\subsection{Sensitivity analyses of sampling size}
To examine the sensitivity with respect to the sampling size $N$ of our ARMIN, we conducted its sensitivity analyses by varying the values of $N$. 
We used the LR classifier on each dataset under the MCAR mechanism and set $D_{\ast} = 2$. 
We varied the value of $N$ in $\set{100,200,300,400,500}$, and measured the valid ratio, average cost, average sign agreement score, and average computational time. 
\cref{fig:appendix:exp:sampling} presents the experimental results. 
We can see that the sampling size $N$ did not significantly affect the quality of solutions except for the average computational time. 
These results indicate that we can obtain sufficiently good actions by our ARMIN even with small $N$, which encourages computational efficiency.

\section{Additional comments on existing assets}\label{sec:appendix:assets}
Gurobi 10.0.3\footnote{\url{https://www.gurobi.com/}} is a commercial solver for mixed-integer optimization provided by Gurobi Optimization, LLC.  
Scikit-learn 1.0.2 \footnote{\url{https://scikit-learn.org/stable/}} is publicly available under the BSD-3-Clause license. 

All datasets used in \cref{sec:experiments} are publicly available and do not contain any identifiable information or offensive content. 
As they are accompanied by appropriate citations in the main body, see the corresponding references for more details.

\section{Discussion on potential societal impacts}\label{sec:appendix:impacts}
Our proposed method, named algorithmic recourse with multiple imputation (ARMIN), is a new framework of algorithmic recourse that works even in the presence of missing values. 
Our method can provide actions for altering the given prediction results into the desired one, which is also recognized as ``recourse~\citep{Ustun:FAT*2019,Karimi:FAccT2021}," even if some features of the input instances are missing. 
Since recent studies have pointed out the risk that an adversary can leverage recourse actions to infer private information about the data held by decision-makers~\citep{Pawelczyk:AISTATS2023}, it is important to allow users to avoid disclosing some feature values corresponding to their private information (e.g., income). 
Our ARMIN enables users to obtain valid and low-cost actions without acquiring the true values of the missing features that the users do not wish to disclose.

\clearpage
\begin{figure}[p]
    \centering
    \includegraphics[width=\textwidth]{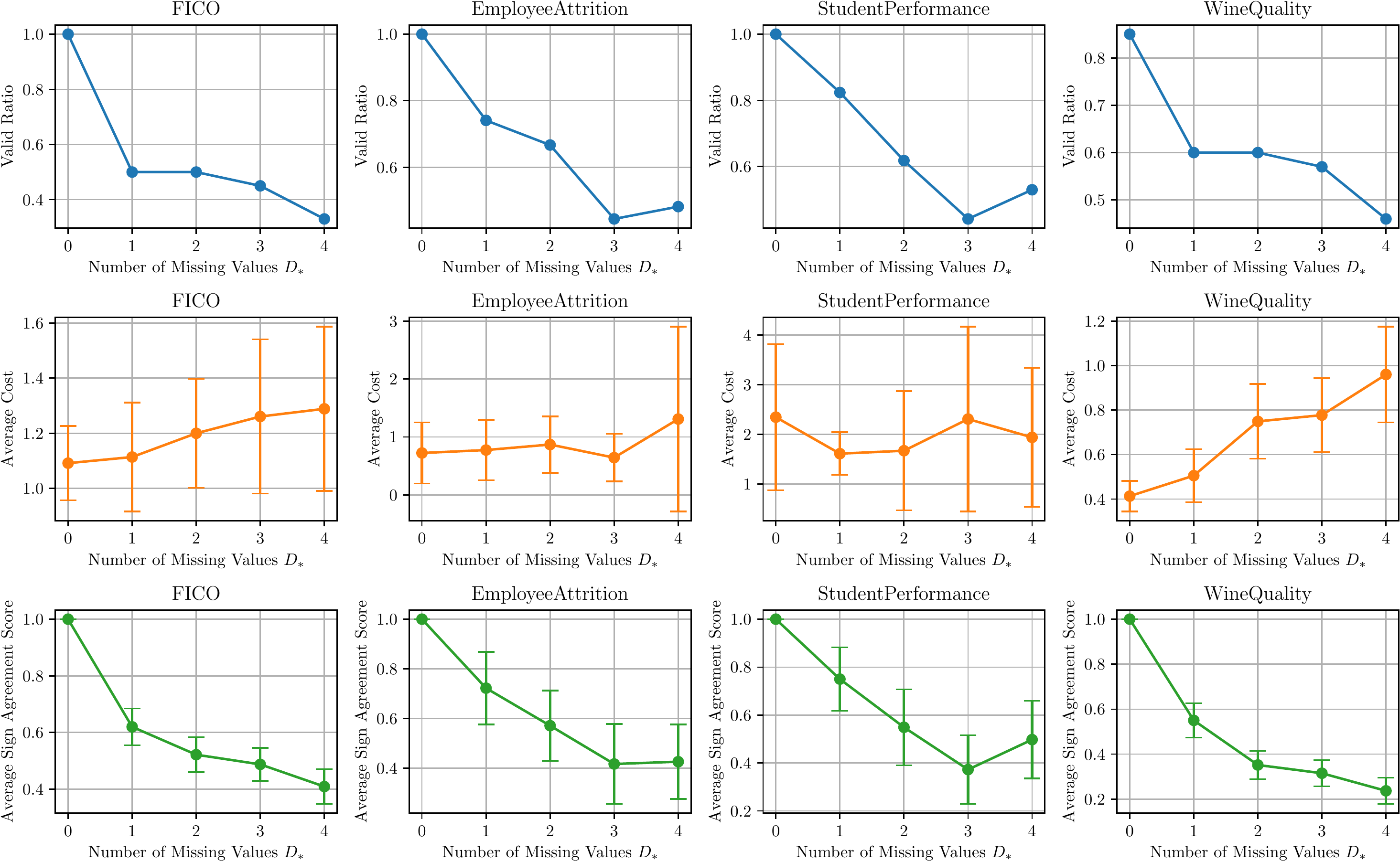}
    \caption{
        Experimental results of the XGBoost classifier with missing values $D_{\ast} \in \set{0, 1, 2, 3, 4}$ under the MCAR situation. 
    }
    \label{fig:appendix:exp:gbdt}
\end{figure}

\begin{table}[p]
    \centering
    \caption{
        Examples of actions extracted from an XGBoost classifier on the FICO dataset. 
        Here, we generate the incomplete instance $\tx$ by dropping the value of the feature ``PercentInstallTrades" in its original instance $\bx$. 
    }
    \subfigure[Optimal action for original instance $\bx$]{
        \adjustbox{valign=b}{
            \begin{tabular}{lccc}
            \toprule
                \textbf{Feature} & \textbf{Value}  & \textbf{Valid}  & \textbf{Cost} \\
            \midrule
                ExternalRiskEstimate & $+20$ & True & $1.35$ \\
            \bottomrule
            \end{tabular}
            \label{tab:appendix:example:original}
        }
    }
    \hfill
    \subfigure[Optimal action for incomplete instance $\tx$]{
        \adjustbox{valign=b}{
            \begin{tabular}{lccc}
            \toprule
                \textbf{Feature} & \textbf{Value}  & \textbf{Valid}  & \textbf{Cost} \\
            \midrule
                ExternalRiskEstimate & $-6$ & \multirow{4}{*}{False} & \multirow{4}{*}{$1.14$} \\
                MaxDelq2PublicRecLast12M & $-2$ & & \\
                NetFractionRevolvingBurden & $-19$ & & \\
                PercentTradesWBalance & $-7$ & & \\
            \bottomrule
            \end{tabular}
            \label{tab:appendix:example:incomplete}
        }
    }
    \label{tab:appendix:example}
\end{table}

\clearpage
\begin{figure}[p]
    \centering
    \includegraphics[width=\textwidth]{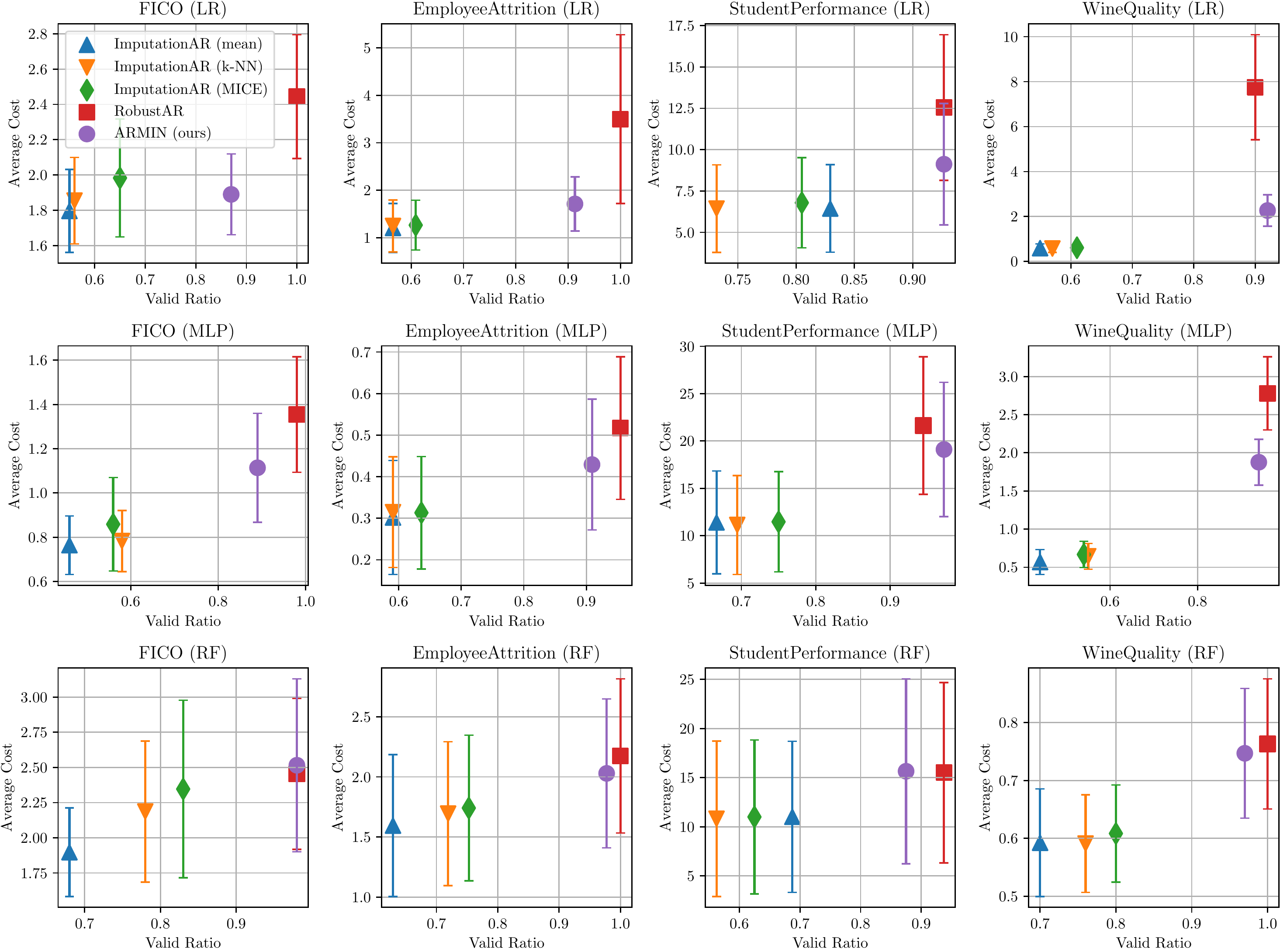}
    \caption{
        Experimental results of our comparison under the MCAR situation, where $D_{\ast} = 1$. 
    }
    \label{fig:appendix:exp:mcar1}
\end{figure}
\begin{figure}[p]
    \centering
    \includegraphics[width=\textwidth]{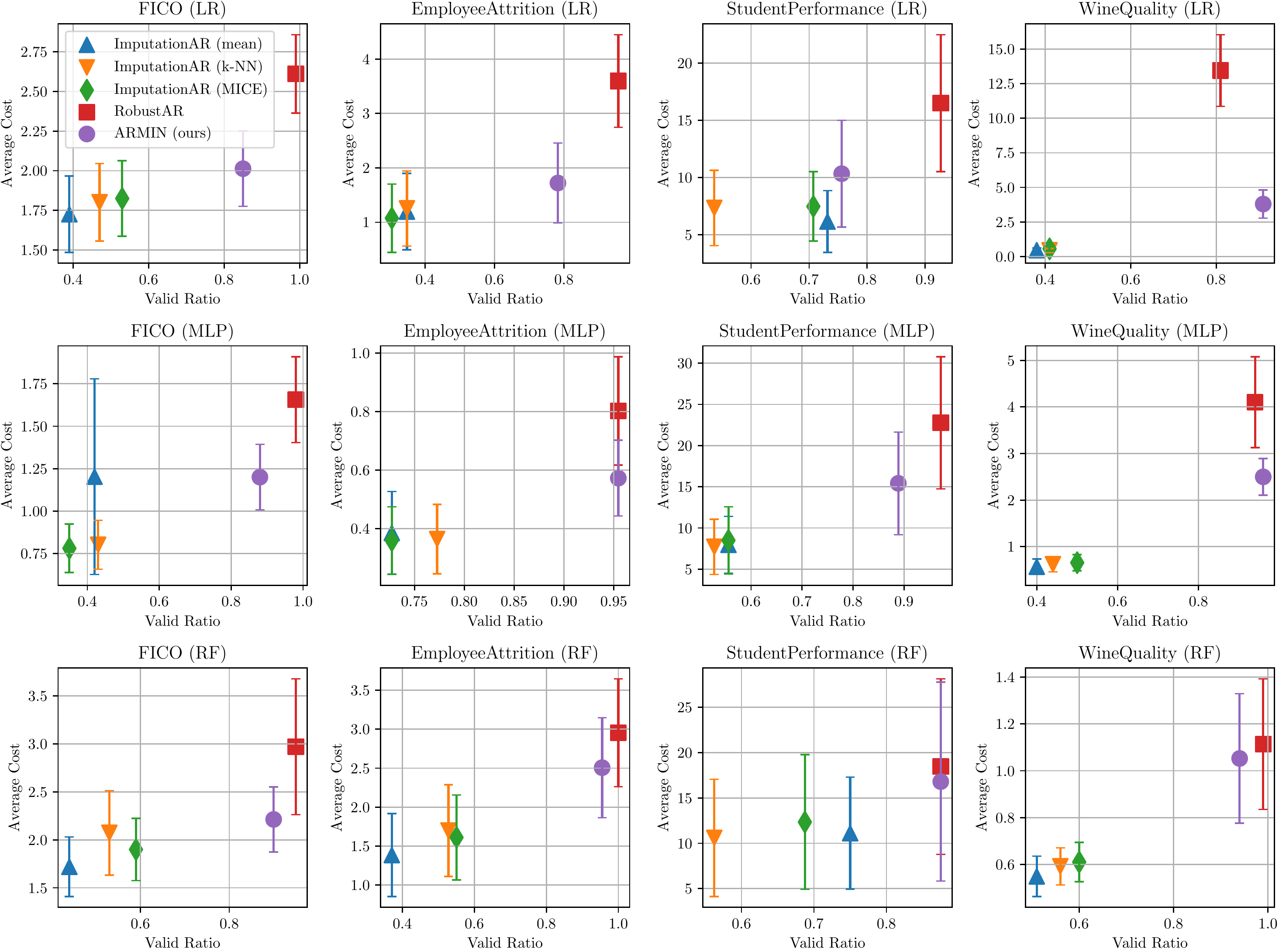}
    \caption{
        Experimental results of our comparison under the MCAR situation, where $D_{\ast} = 2$. 
    }
    \label{fig:appendix:exp:mcar2}
\end{figure}
\begin{figure}[p]
    \centering
    \includegraphics[width=\textwidth]{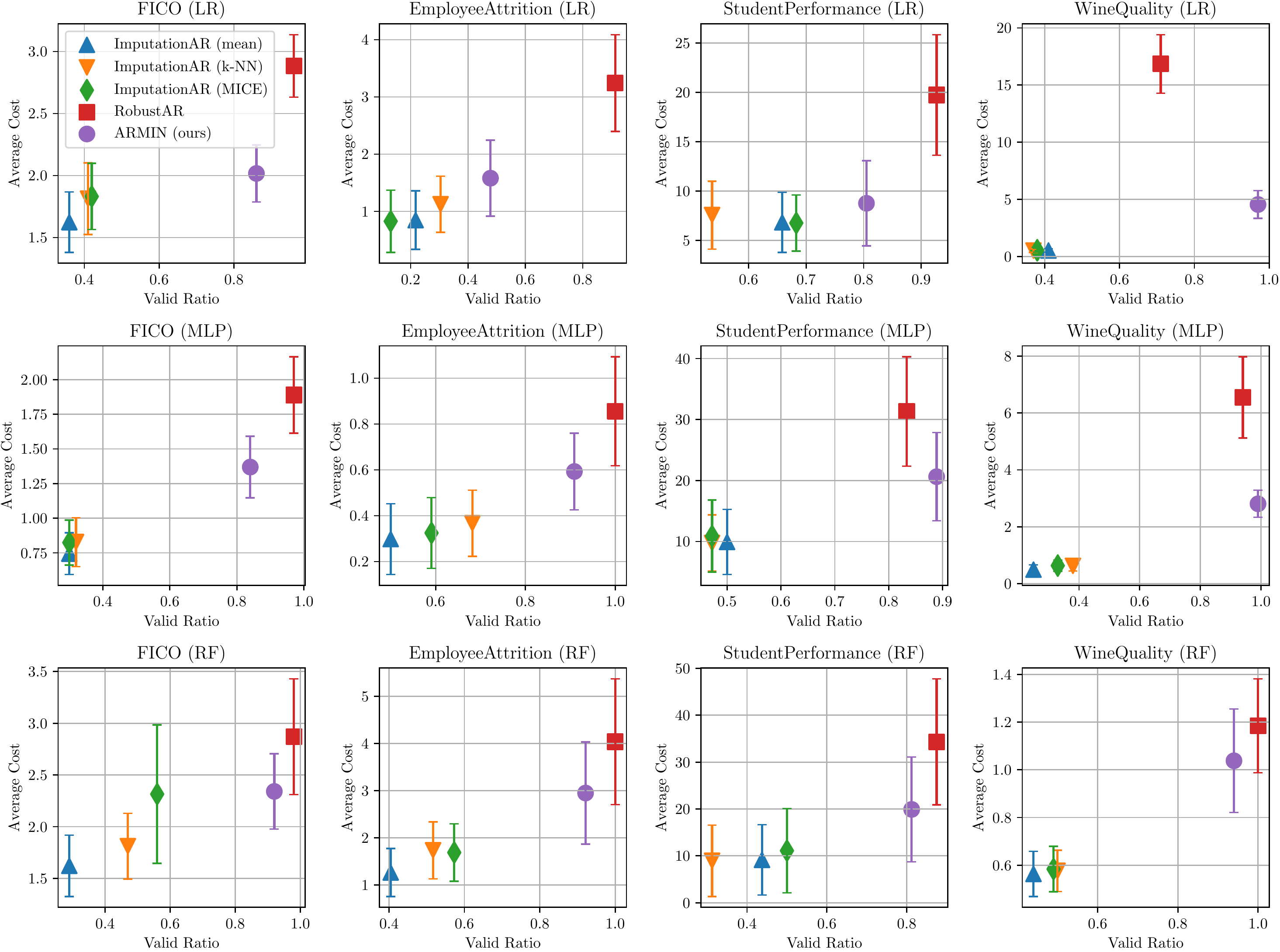}
    \caption{
        Experimental results of our comparison under the MCAR situation, where $D_{\ast} = 3$. 
    }
    \label{fig:appendix:exp:mcar3}
\end{figure}

\begin{table}[p]
    \centering
    \scriptsize
    \caption{Experimental results of the average computational time [s], where $D_{\ast} = 1$. }
    \begin{tabular}{ccccc}
    \toprule
    \textbf{Dataset} & \textbf{Method} & \textbf{LR} & \textbf{MLP} & \textbf{RF} \\
    \midrule
    \multirow{5}{*}{\textbf{ FICO }}
    & ImputationAR (mean) & $0.023113 \pm 0.008066$ & $0.29502 \pm 0.2939$ & $0.43635 \pm 0.471$ \\
    & ImputationAR (k-NN) & $0.023366 \pm 0.008898$ & $0.31684 \pm 0.3095$ & $0.44712 \pm 0.4693$ \\
    & ImputationAR (MICE) & $0.0234 \pm 0.007134$ & $0.32673 \pm 0.3261$ & $0.47649 \pm 0.5398$ \\
    & RobustAR & $0.13493 \pm 0.02382$ & $9.1928 \pm 37.16$ & $2.4749 \pm 6.716$ \\
    & ARMIN (ours) & $0.10886 \pm 0.03743$ & $96.47 \pm 120.5$ & $230.14 \pm 196.0$ \\
    \midrule
    \multirow{5}{*}{\textbf{ EmployeeAttrition }}
    & ImputationAR (mean) & $0.0046998 \pm 0.003787$ & $0.0091407 \pm 0.005912$ & $0.05703 \pm 0.0435$ \\
    & ImputationAR (k-NN) & $0.024596 \pm 0.01656$ & $0.028607 \pm 0.01142$ & $0.062058 \pm 0.04942$ \\
    & ImputationAR (MICE) & $0.0044343 \pm 0.003372$ & $0.010222 \pm 0.004549$ & $0.063377 \pm 0.05144$ \\
    & RobustAR & $0.045233 \pm 0.007768$ & $0.077863 \pm 0.04334$ & $0.71814 \pm 0.6171$ \\
    & ARMIN (ours) & $0.039063 \pm 0.009967$ & $1.3462 \pm 0.1676$ & $31.797 \pm 17.72$ \\
    \midrule
    \multirow{5}{*}{\textbf{ StudentPerformance }}
    & ImputationAR (mean) & $0.0016542 \pm 0.0008235$ & $0.0091867 \pm 0.005662$ & $0.0099284 \pm 0.003019$ \\
    & ImputationAR (k-NN) & $0.018182 \pm 0.009758$ & $0.034568 \pm 0.01035$ & $0.0098953 \pm 0.0039$ \\
    & ImputationAR (MICE) & $0.009501 \pm 0.005815$ & $0.0076689 \pm 0.005152$ & $0.010078 \pm 0.003142$ \\
    & RobustAR & $0.038643 \pm 0.04715$ & $0.35691 \pm 0.7044$ & $2.0854 \pm 3.715$ \\
    & ARMIN (ours) & $0.028376 \pm 0.006636$ & $5.9636 \pm 4.292$ & $13.485 \pm 1.389$ \\
    \midrule
    \multirow{5}{*}{\textbf{ WineQuality }}
    & ImputationAR (mean) & $0.0091039 \pm 0.005667$ & $0.033375 \pm 0.03401$ & $0.12362 \pm 0.1384$ \\
    & ImputationAR (k-NN) & $0.014566 \pm 0.006619$ & $0.037748 \pm 0.02964$ & $0.12467 \pm 0.1227$ \\
    & ImputationAR (MICE) & $0.010178 \pm 0.004696$ & $0.037779 \pm 0.04344$ & $0.13309 \pm 0.1464$ \\
    & RobustAR & $0.11023 \pm 0.07683$ & $0.46431 \pm 0.4061$ & $1.0654 \pm 2.24$ \\
    & ARMIN (ours) & $0.057723 \pm 0.01508$ & $13.603 \pm 5.981$ & $55.677 \pm 70.25$ \\
    \bottomrule
    \end{tabular}
    \label{tab:appendix:exp:time1}
\end{table}
\begin{table}[p]
    \centering
    \scriptsize
    \caption{Experimental results of the average computational time [s], where $D_{\ast} = 2$. }
    \begin{tabular}{ccccc}
    \toprule
    \textbf{Dataset} & \textbf{Method} & \textbf{LR} & \textbf{MLP} & \textbf{RF} \\
    \midrule
    \multirow{5}{*}{\textbf{ FICO }}
    & ImputationAR (mean) & $0.023113 \pm 0.008066$ & $0.29502 \pm 0.2939$ & $0.43635 \pm 0.471$ \\
    & ImputationAR (k-NN) & $0.023366 \pm 0.008898$ & $0.31684 \pm 0.3095$ & $0.44712 \pm 0.4693$ \\
    & ImputationAR (MICE) & $0.0234 \pm 0.007134$ & $0.32673 \pm 0.3261$ & $0.47649 \pm 0.5398$ \\
    & RobustAR & $0.13493 \pm 0.02382$ & $9.1928 \pm 37.16$ & $2.4749 \pm 6.716$ \\
    & ARMIN (ours) & $0.10886 \pm 0.03743$ & $96.47 \pm 120.5$ & $230.14 \pm 196.0$ \\
    \midrule
    \multirow{5}{*}{\textbf{ EmployeeAttrition }}
    & ImputationAR (mean) & $0.0046998 \pm 0.003787$ & $0.0091407 \pm 0.005912$ & $0.05703 \pm 0.0435$ \\
    & ImputationAR (k-NN) & $0.024596 \pm 0.01656$ & $0.028607 \pm 0.01142$ & $0.062058 \pm 0.04942$ \\
    & ImputationAR (MICE) & $0.0044343 \pm 0.003372$ & $0.010222 \pm 0.004549$ & $0.063377 \pm 0.05144$ \\
    & RobustAR & $0.045233 \pm 0.007768$ & $0.077863 \pm 0.04334$ & $0.71814 \pm 0.6171$ \\
    & ARMIN (ours) & $0.039063 \pm 0.009967$ & $1.3462 \pm 0.1676$ & $31.797 \pm 17.72$ \\
    \midrule
    \multirow{5}{*}{\textbf{ StudentPerformance }}
    & ImputationAR (mean) & $0.0016542 \pm 0.0008235$ & $0.0091867 \pm 0.005662$ & $0.0099284 \pm 0.003019$ \\
    & ImputationAR (k-NN) & $0.018182 \pm 0.009758$ & $0.034568 \pm 0.01035$ & $0.0098953 \pm 0.0039$ \\
    & ImputationAR (MICE) & $0.009501 \pm 0.005815$ & $0.0076689 \pm 0.005152$ & $0.010078 \pm 0.003142$ \\
    & RobustAR & $0.038643 \pm 0.04715$ & $0.35691 \pm 0.7044$ & $2.0854 \pm 3.715$ \\
    & ARMIN (ours) & $0.028376 \pm 0.006636$ & $5.9636 \pm 4.292$ & $13.485 \pm 1.389$ \\
    \midrule
    \multirow{5}{*}{\textbf{ WineQuality }}
    & ImputationAR (mean) & $0.0091039 \pm 0.005667$ & $0.033375 \pm 0.03401$ & $0.12362 \pm 0.1384$ \\
    & ImputationAR (k-NN) & $0.014566 \pm 0.006619$ & $0.037748 \pm 0.02964$ & $0.12467 \pm 0.1227$ \\
    & ImputationAR (MICE) & $0.010178 \pm 0.004696$ & $0.037779 \pm 0.04344$ & $0.13309 \pm 0.1464$ \\
    & RobustAR & $0.11023 \pm 0.07683$ & $0.46431 \pm 0.4061$ & $1.0654 \pm 2.24$ \\
    & ARMIN (ours) & $0.057723 \pm 0.01508$ & $13.603 \pm 5.981$ & $55.677 \pm 70.25$ \\
    \bottomrule
    \end{tabular}
    \label{tab:appendix:exp:time2}
\end{table}
\begin{table}[p]
    \centering
    \scriptsize
    \caption{Experimental results of the average computational time [s], where $D_{\ast} = 3$. }
    \begin{tabular}{ccccc}
    \toprule
    \textbf{Dataset} & \textbf{Method} & \textbf{LR} & \textbf{MLP} & \textbf{RF} \\
    \midrule
    \multirow{5}{*}{\textbf{ FICO }}
    & ImputationAR (mean) & $0.023113 \pm 0.008066$ & $0.29502 \pm 0.2939$ & $0.43635 \pm 0.471$ \\
    & ImputationAR (k-NN) & $0.023366 \pm 0.008898$ & $0.31684 \pm 0.3095$ & $0.44712 \pm 0.4693$ \\
    & ImputationAR (MICE) & $0.0234 \pm 0.007134$ & $0.32673 \pm 0.3261$ & $0.47649 \pm 0.5398$ \\
    & RobustAR & $0.13493 \pm 0.02382$ & $9.1928 \pm 37.16$ & $2.4749 \pm 6.716$ \\
    & ARMIN (ours) & $0.10886 \pm 0.03743$ & $96.47 \pm 120.5$ & $230.14 \pm 196.0$ \\
    \midrule
    \multirow{5}{*}{\textbf{ EmployeeAttrition }}
    & ImputationAR (mean) & $0.0046998 \pm 0.003787$ & $0.0091407 \pm 0.005912$ & $0.05703 \pm 0.0435$ \\
    & ImputationAR (k-NN) & $0.024596 \pm 0.01656$ & $0.028607 \pm 0.01142$ & $0.062058 \pm 0.04942$ \\
    & ImputationAR (MICE) & $0.0044343 \pm 0.003372$ & $0.010222 \pm 0.004549$ & $0.063377 \pm 0.05144$ \\
    & RobustAR & $0.045233 \pm 0.007768$ & $0.077863 \pm 0.04334$ & $0.71814 \pm 0.6171$ \\
    & ARMIN (ours) & $0.039063 \pm 0.009967$ & $1.3462 \pm 0.1676$ & $31.797 \pm 17.72$ \\
    \midrule
    \multirow{5}{*}{\textbf{ StudentPerformance }}
    & ImputationAR (mean) & $0.0016542 \pm 0.0008235$ & $0.0091867 \pm 0.005662$ & $0.0099284 \pm 0.003019$ \\
    & ImputationAR (k-NN) & $0.018182 \pm 0.009758$ & $0.034568 \pm 0.01035$ & $0.0098953 \pm 0.0039$ \\
    & ImputationAR (MICE) & $0.009501 \pm 0.005815$ & $0.0076689 \pm 0.005152$ & $0.010078 \pm 0.003142$ \\
    & RobustAR & $0.038643 \pm 0.04715$ & $0.35691 \pm 0.7044$ & $2.0854 \pm 3.715$ \\
    & ARMIN (ours) & $0.028376 \pm 0.006636$ & $5.9636 \pm 4.292$ & $13.485 \pm 1.389$ \\
    \midrule
    \multirow{5}{*}{\textbf{ WineQuality }}
    & ImputationAR (mean) & $0.0091039 \pm 0.005667$ & $0.033375 \pm 0.03401$ & $0.12362 \pm 0.1384$ \\
    & ImputationAR (k-NN) & $0.014566 \pm 0.006619$ & $0.037748 \pm 0.02964$ & $0.12467 \pm 0.1227$ \\
    & ImputationAR (MICE) & $0.010178 \pm 0.004696$ & $0.037779 \pm 0.04344$ & $0.13309 \pm 0.1464$ \\
    & RobustAR & $0.11023 \pm 0.07683$ & $0.46431 \pm 0.4061$ & $1.0654 \pm 2.24$ \\
    & ARMIN (ours) & $0.057723 \pm 0.01508$ & $13.603 \pm 5.981$ & $55.677 \pm 70.25$ \\
    \bottomrule
    \end{tabular}
    \label{tab:appendix:exp:time3}
\end{table}

\begin{figure}[p]
    \centering
    \includegraphics[width=\textwidth]{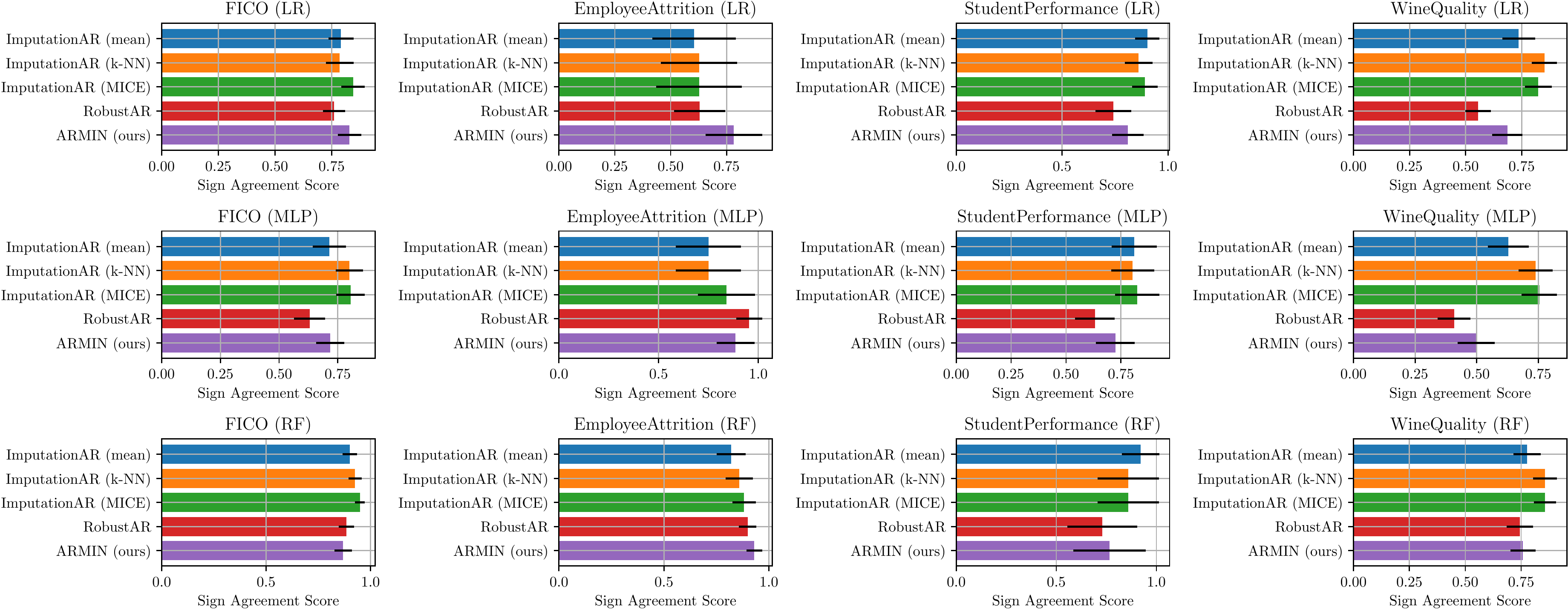}
    \caption{
        Experimental results of the average sign agreement score, where $D_{\ast} = 1$. 
    }
    \label{fig:appendix:exp:sas1}
\end{figure}
\begin{figure}[p]
    \centering
    \includegraphics[width=\textwidth]{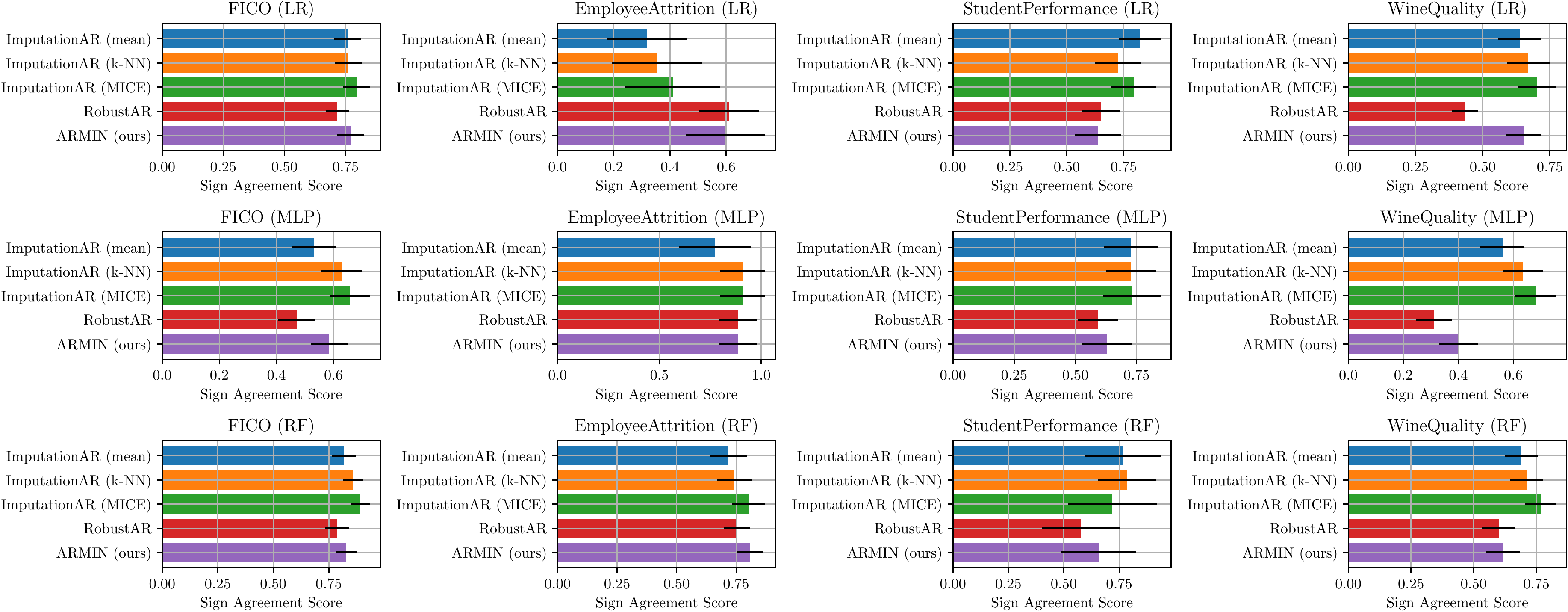}
    \caption{
        Experimental results of the average sign agreement score, where $D_{\ast} = 2$. 
    }
    \label{fig:appendix:exp:sas2}
\end{figure}
\begin{figure}[p]
    \centering
    \includegraphics[width=\textwidth]{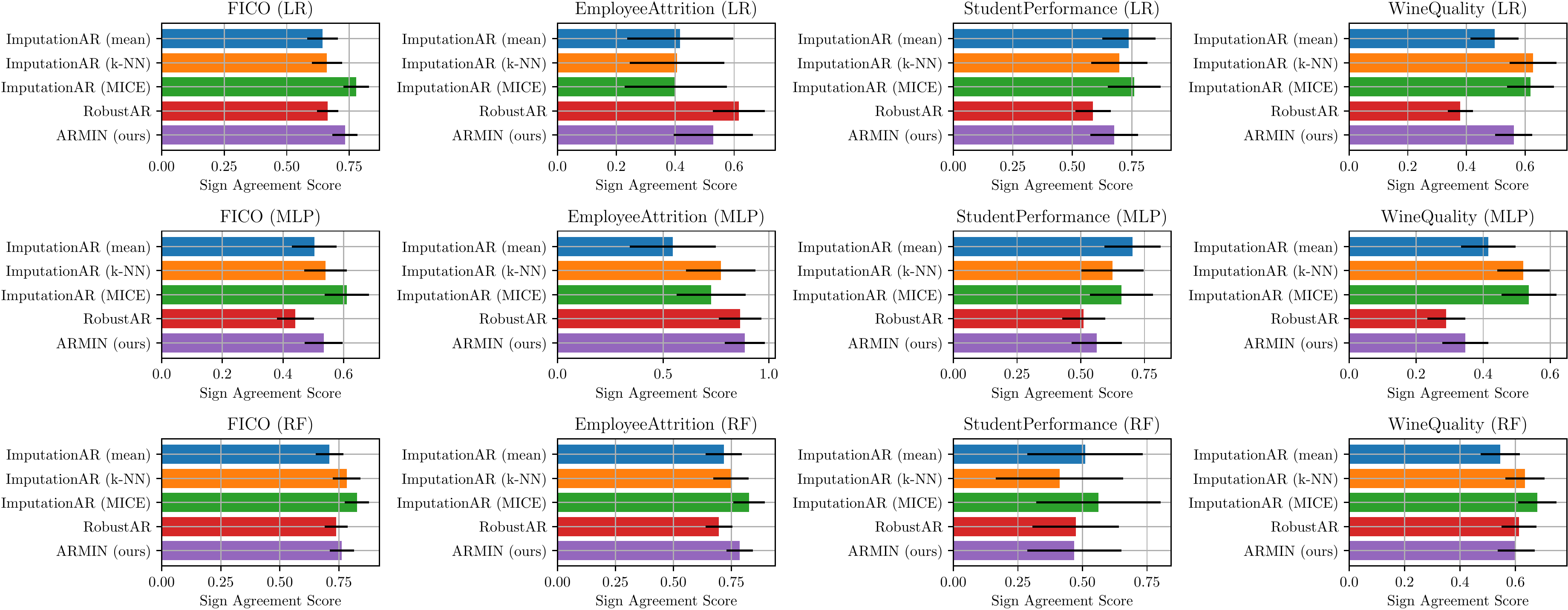}
    \caption{
        Experimental results of the average sign agreement score, where $D_{\ast} = 3$. 
    }
    \label{fig:appendix:exp:sas3}
\end{figure}

\begin{figure}[p]
    \centering
    \includegraphics[width=\textwidth]{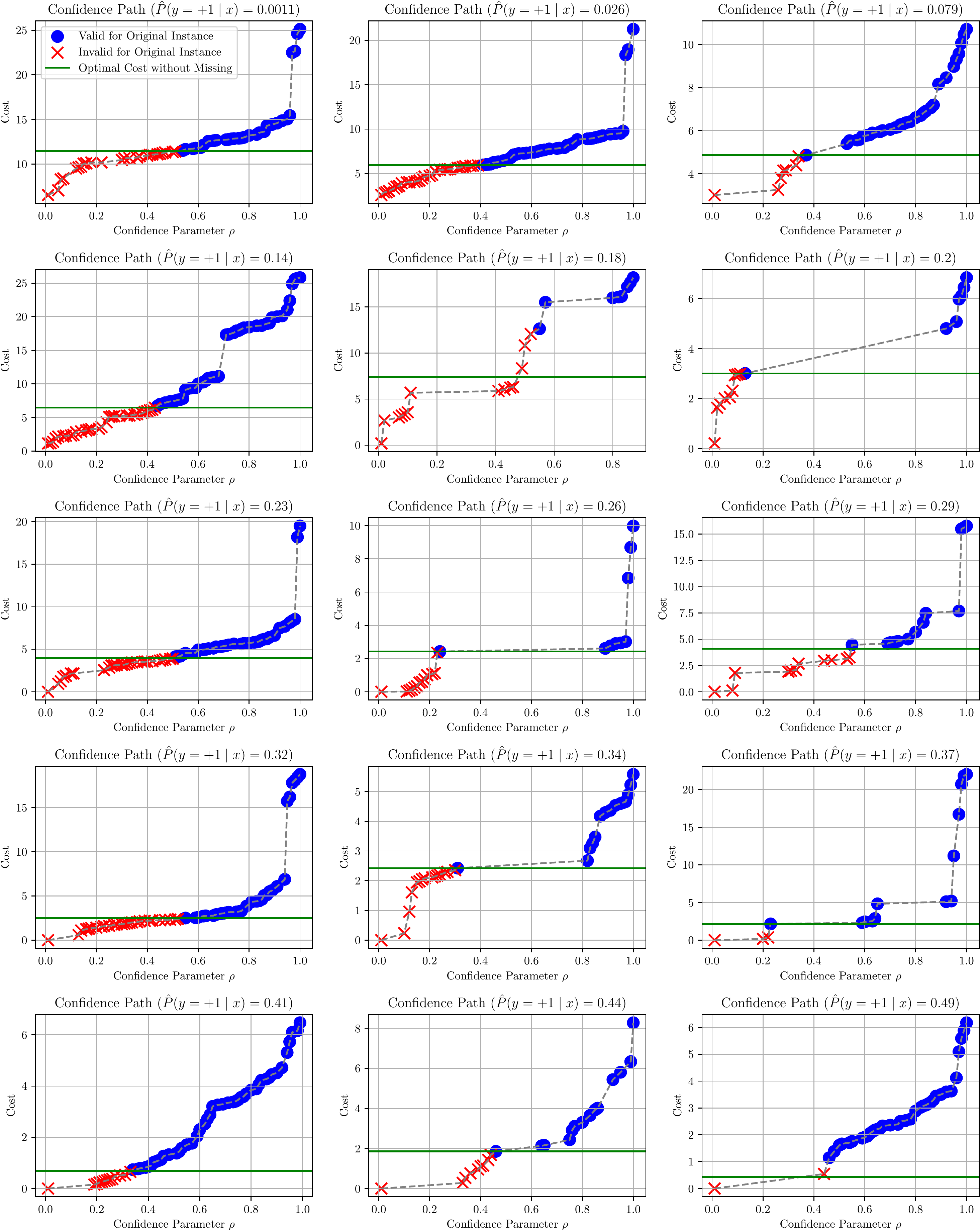}
    \caption{
        Experimental results of our path analyses on the GiveMeCredit dataset under the MAR situation. 
    }
    \label{fig:appendix:exp:pathmar}
\end{figure}
\begin{figure}[p]
    \centering
    \includegraphics[width=\textwidth]{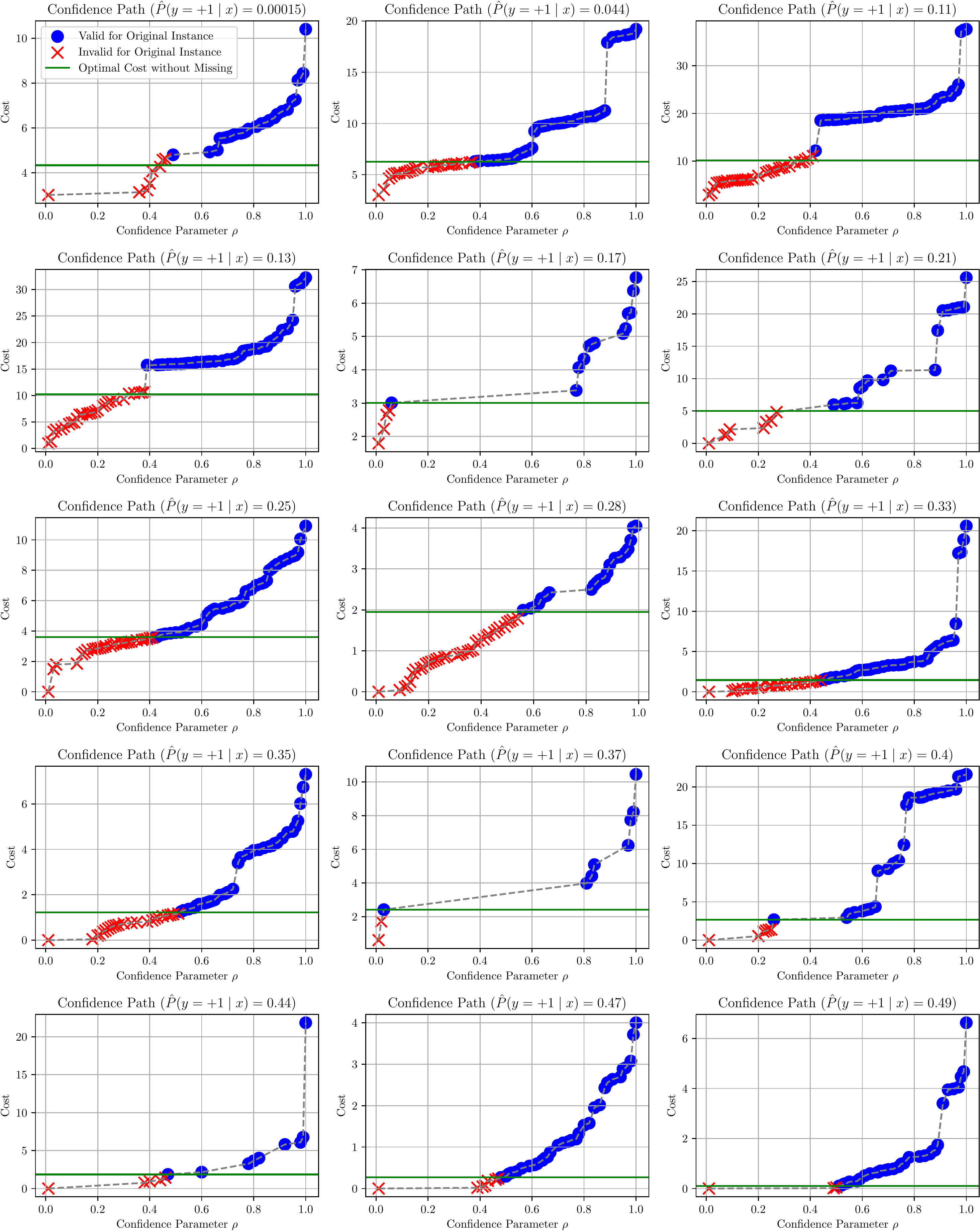}
    \caption{
        Experimental results of our path analyses on the GiveMeCredit dataset under the MNAR situation. 
    }
    \label{fig:appendix:exp:pathmnar}
\end{figure}

\begin{figure}[p]
    \centering
    \includegraphics[width=\textwidth]{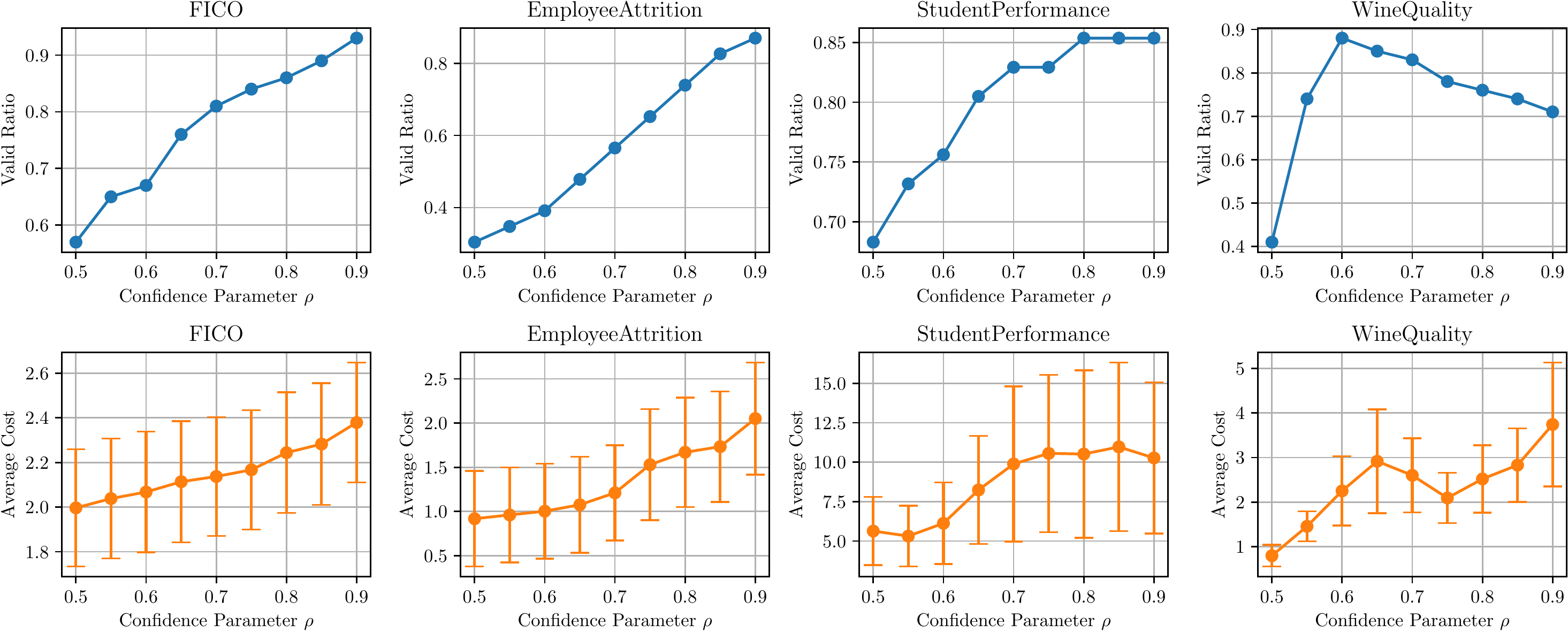}
    \caption{
        Experimental results of our sensitivity analyses of the confidence parameter $\rho$. 
    }
    \label{fig:appendix:exp:confidence}
\end{figure}

\begin{figure}[p]
    \centering
    \includegraphics[width=\textwidth]{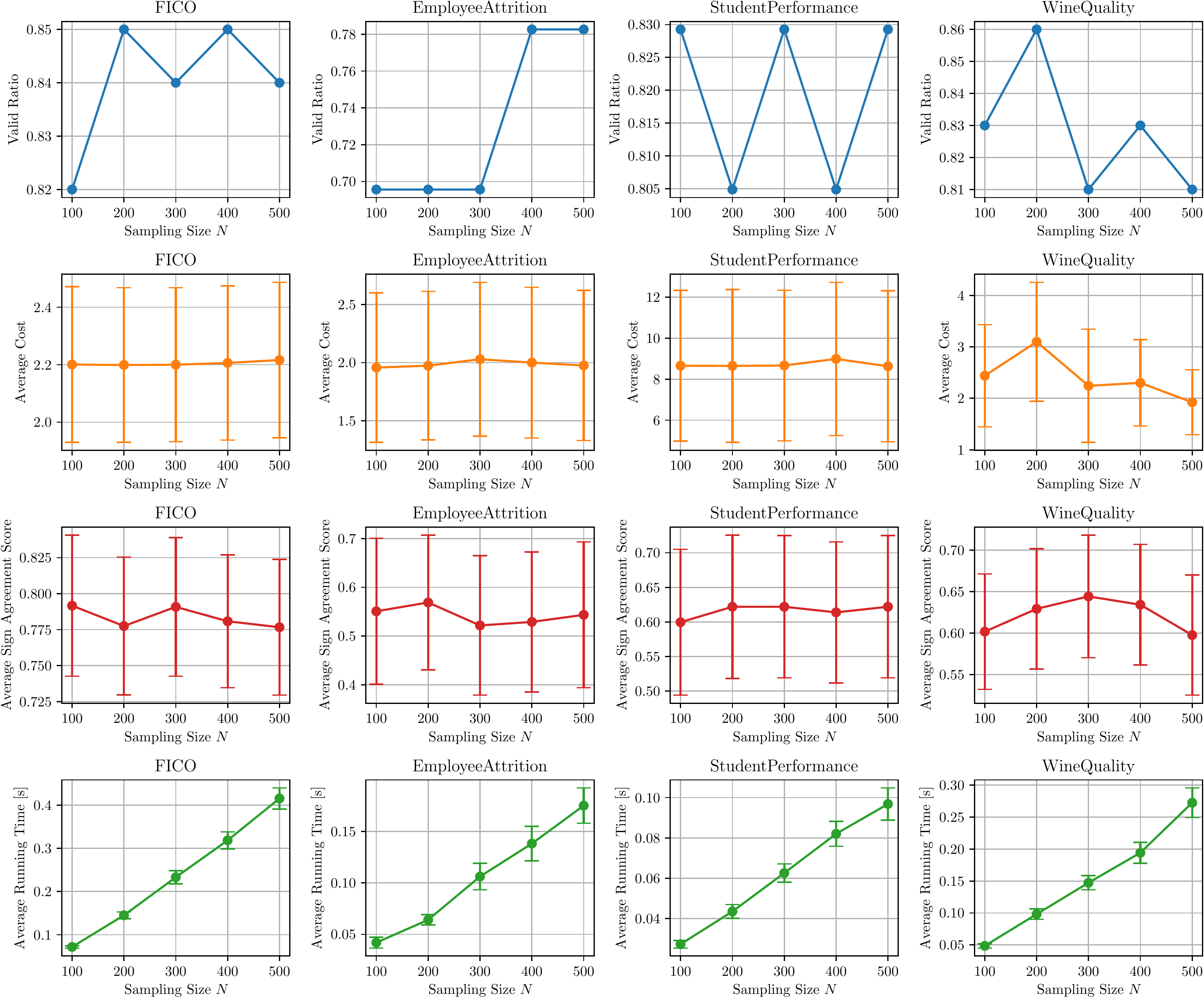}
    \caption{
        Experimental results of our sensitivity analyses of the sampling size $N$. 
    }
    \label{fig:appendix:exp:sampling}
\end{figure}

\end{document}